\documentclass[final]{amsart}
\usepackage{Aaron_Pim_Style}
\usepackage{booktabs, fullpage, algorithm, algpseudocode, pgfplots, pgfplotstable, subcaption}
\usepackage{tikz}
\usetikzlibrary{positioning, arrows.meta, calc}
\usepackage{caption}
\captionsetup[table]{justification=raggedright, singlelinecheck=false}

\usepackage[giveninits=true,eprint=false,url=false,style=alphabetic]{biblatex}
\addbibresource{references.bib}
\addbibresource{Bibliography.bib}

\usepackage[dvipsnames]{xcolor}
\usepackage[colorlinks=true, linkcolor=Purple, citecolor=Orange, urlcolor=Purple]{hyperref}

\title{Multi-level Monte Carlo Dropout for Efficient Uncertainty Quantification}

\author[2]{Aaron Pim}

\author[1,2]{Tristan Pryer}

\address{$^1$ Institute for Mathematical Innovation\\ University of
  Bath, Bath, UK. $^2$ Department of Mathematical Sciences
  \\ University of Bath, Bath, UK.}

\pgfplotsset{compat=1.18}
\begin{document}

\begin{abstract}
  We develop a multilevel Monte Carlo (MLMC) framework for uncertainty
  quantification with Monte Carlo dropout. Treating dropout masks as a
  source of epistemic randomness, we define a fidelity hierarchy by
  the number of stochastic forward passes used to estimate predictive
  moments. We construct coupled coarse--fine estimators by reusing
  dropout masks across fidelities, yielding telescoping MLMC
  estimators for both predictive means and predictive variances that
  remain unbiased for the corresponding dropout-induced quantities
  while reducing sampling variance at fixed evaluation budget. We
  derive explicit bias, variance and effective cost expressions,
  together with sample-allocation rules across levels. Numerical experiments on forward and
  inverse PINNs--Uzawa benchmarks confirm the predicted variance rates
  and demonstrate efficiency gains over single-level MC-dropout at
  matched cost.
\end{abstract}

\maketitle

\section{Introduction}


In many scientific computing and machine learning applications, neural
networks are increasingly used as fast surrogate models for complex
simulations \cite{raissi2019physics}. Unlike traditional mechanistic
models, these neural surrogates do not by default quantify prediction
uncertainty. Their outputs are deterministic. In practice this is
problematic: deep networks often interpolate training data well but
can make overconfident, erroneous predictions when extrapolating
\cite{gawlikowski2023survey,hullermeier2021aleatoric}. In applied
surrogate pipelines this lack of uncertainty can be a practical barrier
to reliable deployment, including settings where the surrogate is used
for repeated evaluation and decision making \cite{pim2025surrogate}.

Consequently, there is a pressing need to accompany neural surrogates
with uncertainty quantification (UQ). Bayesian deep learning methods
address this by modelling a distribution over network outputs. For
example, Monte Carlo dropout (MC-dropout) interprets dropout at test
time as approximate Bayesian inference, yielding epistemic uncertainty
estimates without changing network architecture \cite{gal2016dropout}.
Similarly, methods like deep ensembles train multiple networks and use
the ensemble variance for UQ \cite{lakshminarayanan2017simple}. In
scientific machine learning, Physics-informed Neural Networks (PINNs)
incorporate prior knowledge by training on physics constraints, but
they too generally lack calibrated uncertainty outputs unless extended
to Bayesian formulations. For inverse problems and constrained training
setups this can be especially acute, since inference is driven by a
combination of data mismatch and modelling assumptions
\cite{cox2024bayesian,makridakis2024deep}. In such settings one may
face both aleatoric uncertainty (inherent noise in data) and epistemic
uncertainty (model uncertainty) \cite{kendall2017uncertainties}. In
this work we focus on reducing the cost of estimating epistemic
uncertainty induced by MC-dropout.

A standard approach (as in MC-dropout or ensemble sampling) is to run
many stochastic forward passes to estimate predictive statistics.
However, the computational cost grows linearly with the number of
samples. Reducing estimator variance by increasing the number of
samples can be prohibitive in large-scale settings. For example,
MC-dropout often uses dozens of passes per input to estimate
uncertainty, each pass incurring a full forward evaluation of the
network. This renders uncertainty estimation expensive for large or
real-time models and it is already a practical bottleneck in surrogate
pipelines that require repeated uncertainty evaluation
\cite{pim2025surrogate}. More generally, naive Monte Carlo estimators
are often too costly when each sample is expensive
\cite{giles2015multilevel}.

Multilevel Monte Carlo (MLMC) addresses precisely this issue by
exploiting a hierarchy of fidelities and a telescoping decomposition
\cite{heinrich2001multilevel,giles2008multilevel}. Most samples are
taken at low-fidelity (cheap) levels and few at high-fidelity
(expensive) levels, reducing total cost for a given error
\cite{giles2008multilevel,giles2015multilevel}. In stochastic
simulation (e.g.\ SDEs and PDEs) MLMC has proven very effective
\cite{cliffe2011multilevel}. Recent work has begun to bring MLMC ideas
into machine learning; for instance, \cite{fujisawa2021multilevel}
applies MLMC to variational inference in Bayesian models and
\cite{gerstner2021multilevel} proposes multilevel training strategies
for deep networks. More closely related to our setting,
\cite{blanchet2023dropout} develops an unbiased MLMC scheme to speed
up dropout training of classifiers. However, to our knowledge MLMC has
not been explored for inference-time uncertainty quantification in
neural surrogates.


The goal of this work is to bridge practical dropout-based uncertainty
estimation with variance-reduction tools from multilevel Monte Carlo
(MLMC). We formulate an MLMC framework in which Monte Carlo dropout
provides the randomness, a single forward evaluation with dropout is
treated as a randomised model evaluation and predictive statistics are
estimated by sampling over dropout masks. Our primary notion of
fidelity is the number $T$ of stochastic forward passes used inside the
predictive estimators. Higher fidelity corresponds to more dropout
masks and hence higher computational cost. Building on this hierarchy,
we construct multilevel estimators by coupling coarse and fine
evaluations through shared dropout masks, so that level corrections
have reduced variance. This yields multilevel estimators for both
dropout-induced predictive expectations and dropout-induced predictive
variances, while retaining unbiasedness for these dropout-induced
targets. Other notions of fidelity, such as varying dropout probability
or using a hierarchy of network resolutions, can also be viewed through
a multilevel lens, but they require additional modelling choices and
are not essential for the core development presented here.

We provide a theoretical analysis of the resulting estimators. We
derive explicit bias, variance and cost expressions, identify the
quantities that control the variance of level corrections and use these
to formulate sample allocation strategies across levels under a coupled
cost model with mask reuse. The resulting allocations recover the
classical MLMC structure $M_\ell\propto \sqrt{V_\ell/C_\ell}$, with
problem-specific expressions for the level variances $V_\ell$ induced by
the dropout coupling. We also discuss practical ladder design and
stopping criteria based on empirical level-variance estimates.

We demonstrate the approach on benchmark problems drawn from scientific
machine learning. First, we consider a forward PINNs--Uzawa problem for
a boundary-layer ODE with a known closed-form solution and study the
behaviour of single-fidelity and multilevel dropout estimators,
including empirical verification of the predicted sampling-variance
rates and a fixed-cost allocation test. Second, we consider an inverse
PDE-constrained optimisation benchmark with a stochastic target
generated by a scalar noise variable, for which the ground-truth target
moments are available in closed form. In both settings we report how
the estimator variances depend on the fidelity ladder and on the sample
allocation, and we compare empirical best allocations with the
continuous allocations predicted by the theory.


Research on uncertainty for neural surrogates spans several strands
that intersect in this work. Dropout, originally introduced for
regularisation \cite{srivastava2014dropout}, can be kept active at
test time and interpreted as approximate Bayesian inference, providing
a practical route to epistemic uncertainty without changing
architecture or training \cite{gal2016dropout}. A related theme is the
distinction between aleatoric uncertainty (stemming from data noise)
and epistemic uncertainty (stemming from model uncertainty), which is
now standard in uncertainty-aware deep learning
\cite{kendall2017uncertainties}. Complementary strategies include
Bayesian neural networks and deep ensembles, where independent
predictors provide variance-based estimates that are often strong in
practice \cite{lakshminarayanan2017simple}. In scientific surrogate
pipelines, MC-dropout has also been used directly as a practical UQ
tool when repeated uncertainty evaluation is required
\cite{pim2025surrogate}.

Multilevel Monte Carlo provides a cost--accuracy tradeoff that is well
suited to expensive stochastic evaluation. Foundational results
establish telescoping estimators over coupled fidelity levels and show
that, in canonical discretisation settings, complexity improvements
such as $O(\varepsilon^{-3})$ to $O(\varepsilon^{-2})$ can be achieved
under standard assumptions on bias, variance decay and per-sample cost
\cite{heinrich2001multilevel,giles2008multilevel,giles2015multilevel}.
The methodology has been widely validated in stochastic simulation for
SDEs and PDEs, including elliptic problems with random coefficients
\cite{cliffe2011multilevel}. In this paper we exploit the same
principles at inference time for neural predictors by introducing a
hierarchy of estimators for MC-dropout and coupling them across
fidelities so that level differences have low variance.

Recent work begins to explore multilevel ideas within machine learning
itself. Examples include multilevel variance reduction for stochastic
variational inference \cite{fujisawa2021multilevel}, hierarchical
training strategies for networks \cite{gerstner2021multilevel} and
unbiased multilevel constructions that accelerate dropout training
\cite{blanchet2023dropout}. There is also growing interest in multilevel
and multi-fidelity approaches for simulation-based inference when
multiple simulators are available, demonstrating improved performance
at fixed computational budgets with appropriate couplings
\cite{hikida2025multilevel}. The present contribution is distinct in
its focus on inference-time uncertainty quantification for neural
surrogates, where multilevel sampling targets the cost of estimating
MC-dropout moments.

Within scientific machine learning, physics-informed neural networks
provide a flexible framework for solving forward and inverse problems
but generally require dedicated mechanisms for calibrated uncertainty
\cite{raissi2019physics}. Bayesian variants and multi-fidelity
uncertainty-aware surrogates have been developed to address this
\cite{yang2021b,meng2021multi}. In our setting, PINN surrogates benefit
from treating MC-dropout estimators as levelled evaluations, allowing
dropout-induced epistemic uncertainty to be computed with
substantially fewer expensive forward passes.

Overall, this paper synthesises MC-dropout for epistemic uncertainty
\cite{gal2016dropout}, uncertainty-aware surrogate modelling in
scientific applications \cite{raissi2019physics,meng2021multi} and
multilevel Monte Carlo for computational efficiency
\cite{giles2015multilevel,fujisawa2021multilevel}. The result is a
framework that targets the computational cost of uncertainty
estimation and demonstrates that multilevel sampling can deliver
accurate dropout-based UQ for neural surrogates at reduced budgets.

The rest of the paper is set out as follows. In
\S\ref{sec:mlmc_dropout_estimators} we define the dropout-induced
targets of interest and introduce multilevel Monte Carlo estimators
for the predictive mean and variance based on coupled inner
fidelities. In \S\ref{sec:mlmc_dropout_computation} we give a
practical coupled sampling algorithm, define the coupled cost model
under mask reuse and derive optimal sample allocations across levels
for a fixed computational budget. In \S\ref{sec:numerical_experiments}
we validate the theoretical predictions on forward and inverse
PINNs--Uzawa benchmarks, including empirical variance-rate studies and
fixed-cost allocation tests. In \S\ref{sec:code_discussion} we
summarise implementation details relevant to reproducibility and
computational complexity. We conclude in \S\ref{sec:conclusion}. An
appendix collects supporting proofs of the technical lemmata used in the analysis,
including covariance identities for overlapping sample-variance
estimators and proofs of the allocation formulae.

\section{Multilevel estimators for MC-dropout moments}\label{sec:mlmc_dropout_estimators}
In this section we formulate MC-dropout inference as the numerical
estimation of moments of a randomised network output. This viewpoint
makes the computational bottleneck explicit, uncertainty estimates
require many stochastic forward passes. We then develop multilevel
Monte Carlo estimators that reduce this cost while targeting the same
dropout-induced quantities.

\subsection{Dropout-induced random predictor}\label{subsec:mlmc:random_predictor}
We begin by fixing a deterministic surrogate model represented by a
trained feedforward neural network. Concretely, for an input
$x \in \mathbb{R}^{N_{\mathrm{in}}}$ the network produces an output in
$\mathbb{R}^{N_{\mathrm{out}}}$ via a composition of affine maps and a
nonlinearity. For example, writing $z_0 := x$ and, for $k=0,\dots,K-1$,
\begin{equation}\label{eq:mlmc:nn_forward}
    z_{k+1} := \sigma\!\big(W_k z_k + b_k\big),
    \qquad
    \mathscr{D}(x;\hat\theta) := W_K z_K + b_K,
\end{equation}
where $\sigma$ is an appropriate activator, $\hat\theta :=
\{(W_k,b_k)\}_{k=0}^{K} \in \Theta$ denotes the trained weights and biases, held
fixed throughout this section. A schematic of this deterministic
evaluation is shown in Figure~\ref{fig:mlmc:deterministic_net}.

In MC-dropout, dropout is kept active at evaluation. Each forward pass
therefore draws a random collection of dropout masks, which are
applied to selected layer activations. For a given $\hat\theta \in
\Theta$, we collect all such sources of test-time randomness into a
random element $\theta \sim \pi_{\hat\theta}$, where
$\pi_{\hat\theta}$ is the associated probability law induced by the
chosen dropout rates and parameters $\hat\theta \in \Theta$.

In the specific case of MC-dropout, and for fixed learned parameters $\hat\theta$, the random element $\theta$ may be identified with the
collection of Bernoulli masks applied across all dropout layers during
a single forward pass. Concretely, for a dropout layer of width
$N_{\mathrm{width}}$ we draw a mask
\begin{equation}\label{eq:mlmc:dropout_mask}
    \vec B = (B_1,\dots,B_{N_{\mathrm{width}}})^{\top},
    \qquad
    B_j \overset{\mathrm{iid}}{\sim} \mathrm{Bernoulli}(1-p_{\mathrm{drop}}),
\end{equation}
where $p_{\mathrm{drop}}\in(0,1)$ denotes the dropout probability. A
typical dropout layer then acts as

\begin{equation}\label{eq:mlmc:dropout_layer}
    z_{k+1} :=\vec B^{(k)} \odot \sigma \big( W_k z_k + b_k\big),
\end{equation}
where $\odot$ denotes componentwise multiplication. (Any standard
dropout scaling convention, such as inverted dropout, is absorbed into
the definition of $\mathscr{D}(x;\theta)$.)

Collecting the instances of the dropout masks $\{\vec B^{(k)}\}_{k=0}^{L-1}$ and pairing them with the trained
weights $\hat\theta \in \Theta$, yields a single realisation of $\theta \sim \pi_{\hat\theta}$. Repeated stochastic forward passes correspond to independent draws
$\theta_1,\theta_2,\dots \overset{\mathrm{iid}}{\sim}\pi_{\hat\theta}$ with $\hat\theta$ held fixed. For notational simplicity we write the
resulting randomised predictor as $\mathscr{D}(x;\theta)$, with the
understanding that $\hat\theta$ is fixed and $\theta$ encodes only
test-time randomness. This stochastic evaluation is illustrated in
Figure~\ref{fig:mlmc:dropout_net}.

\begin{figure}[h!]
  \centering
  \begin{tikzpicture}[
    neuron/.style={circle, draw, minimum size=8mm, fill=blue!10},
    dropout/.style={circle, draw, dashed, minimum size=8mm, fill=red!10},
    layer/.style={rectangle, draw=none, minimum height=1cm},
    arrow/.style={-{Latex}, thick},
    every label/.append style={font=\footnotesize}
]

\node[neuron, label=above:Input] (x) at (0,0) {$x$};

\node[neuron] (h0) at (2,2) {};
\node[neuron] (h1) at (2,1) {};
\node[neuron] (h2) at (2,0) {};
\node[neuron] (h3) at (2,-1) {};
\node[neuron] (h4) at (2,-2) {};

\node[neuron] (h5) at (4,2) {};
 \node[neuron] (h6) at (4,1) {};
\node[neuron] (h7) at (4,0) {};
\node[neuron] (h8) at (4,-1) {};
\node[neuron] (h9) at (4,-2) {};

\node[neuron] (d0) at (6,2) {};
\node[neuron] (d1) at (6,1) {};
\node[neuron] (d2) at (6,0) {};
\node[neuron] (d3) at (6,-1) {};
\node[neuron] (d4) at (6,-2) {};
\node[neuron] (d5) at (8,2) {};
\node[neuron] (d6) at (8,1) {};
\node[neuron] (d7) at (8,0) {};
\node[neuron] (d8) at (8,-1) {};
\node[neuron] (d9) at (8,-2) {};
\node[neuron, label=above:Output] (y) at (10,0) {$\mathscr{D}(x; \hat{\theta})$};

\draw[arrow] (x) -- (h0);
\draw[arrow] (x) -- (h1);
\draw[arrow] (x) -- (h2);
\draw[arrow] (x) -- (h3);
\draw[arrow] (x) -- (h4);
\draw[arrow] (h0) -- (h5);
\draw[arrow] (h0) -- (h6);
\draw[arrow] (h0) -- (h7);
\draw[arrow] (h0) -- (h8);
\draw[arrow] (h0) -- (h9);
\draw[arrow] (h1) -- (h5);
\draw[arrow] (h1) -- (h6);
\draw[arrow] (h1) -- (h7);
\draw[arrow] (h1) -- (h8);
\draw[arrow] (h1) -- (h9);
\draw[arrow] (h2) -- (h5);
\draw[arrow] (h2) -- (h6);
\draw[arrow] (h2) -- (h7);
\draw[arrow] (h2) -- (h8);
\draw[arrow] (h2) -- (h9);
\draw[arrow] (h3) -- (h5);
\draw[arrow] (h3) -- (h6);
\draw[arrow] (h3) -- (h7);
\draw[arrow] (h3) -- (h8);
\draw[arrow] (h3) -- (h9);
\draw[arrow] (h4) -- (h5);
\draw[arrow] (h4) -- (h6);
\draw[arrow] (h4) -- (h7);
\draw[arrow] (h4) -- (h8);
\draw[arrow] (h4) -- (h9);
\draw[arrow] (h5) -- (d0);
\draw[arrow] (h5) -- (d1);
\draw[arrow] (h5) -- (d2);
\draw[arrow] (h5) -- (d3);
\draw[arrow] (h5) -- (d4);
\draw[arrow] (h6) -- (d0);
\draw[arrow] (h6) -- (d1);
\draw[arrow] (h6) -- (d2);
\draw[arrow] (h6) -- (d3);
\draw[arrow] (h6) -- (d4);
\draw[arrow] (h7) -- (d0);
\draw[arrow] (h7) -- (d1);
\draw[arrow] (h7) -- (d2);
\draw[arrow] (h7) -- (d3);
\draw[arrow] (h7) -- (d4);
\draw[arrow] (h8) -- (d0);
\draw[arrow] (h8) -- (d1);
\draw[arrow] (h8) -- (d2);
\draw[arrow] (h8) -- (d3);
\draw[arrow] (h8) -- (d4);
\draw[arrow] (h9) -- (d0);
\draw[arrow] (h9) -- (d1);
\draw[arrow] (h9) -- (d2);
\draw[arrow] (h9) -- (d3);
\draw[arrow] (h9) -- (d4);
\draw[arrow] (d0) -- (d5);
\draw[arrow] (d0) -- (d6);
\draw[arrow] (d0) -- (d7);
\draw[arrow] (d0) -- (d8);
\draw[arrow] (d0) -- (d9);
\draw[arrow] (d1) -- (d5);
\draw[arrow] (d1) -- (d6);
\draw[arrow] (d1) -- (d7);
\draw[arrow] (d1) -- (d8);
\draw[arrow] (d1) -- (d9);
\draw[arrow] (d2) -- (d5);
\draw[arrow] (d2) -- (d6);
\draw[arrow] (d2) -- (d7);
\draw[arrow] (d2) -- (d8);
\draw[arrow] (d2) -- (d9);
\draw[arrow] (d3) -- (d5);
\draw[arrow] (d3) -- (d6);
\draw[arrow] (d3) -- (d7);
\draw[arrow] (d3) -- (d8);
\draw[arrow] (d3) -- (d9);
\draw[arrow] (d4) -- (d5);
\draw[arrow] (d4) -- (d6);
\draw[arrow] (d4) -- (d7);
\draw[arrow] (d4) -- (d8);
\draw[arrow] (d4) -- (d9);
\draw[arrow] (d5) -- (y);
\draw[arrow] (d6) -- (y);
\draw[arrow] (d7) -- (y);
\draw[arrow] (d8) -- (y);
\draw[arrow] (d9) -- (y);

\end{tikzpicture}
  \caption{Schematic of the surrogate network architecture in its
  deterministic form. The trained parameters $\hat\theta$ are fixed and
  repeated evaluations produce identical outputs $\mathscr{D}(x;\hat\theta)$.}
  \label{fig:mlmc:deterministic_net}
\end{figure}
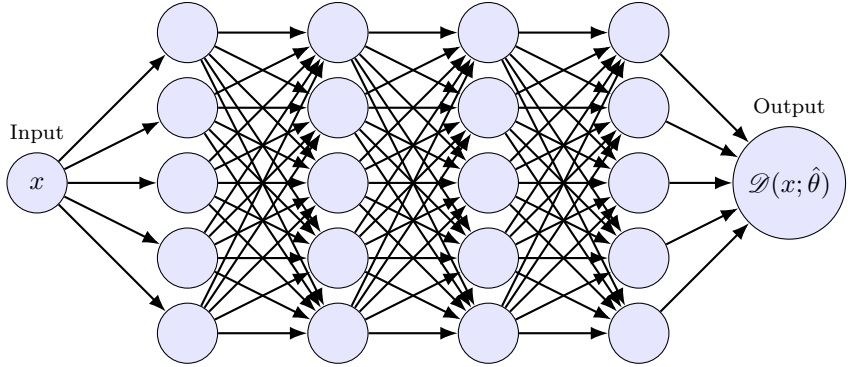

\begin{figure}[h!]
  \centering
  \begin{tikzpicture}[
    neuron/.style={circle, draw, minimum size=8mm, fill=blue!10},
    dropout/.style={circle, draw, dashed, minimum size=8mm, fill=red!10},
    layer/.style={rectangle, draw=none, minimum height=1cm},
    arrow/.style={-{Latex}, thick},
    every label/.append style={font=\footnotesize}
]

\node[neuron, label=above:Input] (x) at (0,0) {$x$};

\node[dropout] (h0) at (2,2) {};
\node[dropout] (h1) at (2,1) {};
\node[neuron] (h2) at (2,0) {};
\node[neuron] (h3) at (2,-1) {};
\node[dropout] (h4) at (2,-2) {};

\node[neuron] (h5) at (4,2) {};
\node[dropout] (h6) at (4,1) {};
\node[dropout] (h7) at (4,0) {};
\node[neuron] (h8) at (4,-1) {};
\node[dropout] (h9) at (4,-2) {};

\node[dropout] (d0) at (6,2) {};
\node[neuron] (d1) at (6,1) {};
\node[dropout] (d2) at (6,0) {};
\node[dropout] (d3) at (6,-1) {};
\node[neuron] (d4) at (6,-2) {};
\node[neuron, label=above:Hidden layer] (d5) at (8,2) {};
\node[neuron] (d6) at (8,1) {};
\node[neuron] (d7) at (8,0) {};
\node[neuron] (d8) at (8,-1) {};
\node[neuron] (d9) at (8,-2) {};
\node[neuron, label=above:Output] (y) at (10,0) {$\mathscr{D}(x; \theta)$};

\draw[arrow] (x) -- (h2);
\draw[arrow] (x) -- (h3);
\draw[arrow] (h2) -- (h5);
\draw[arrow] (h2) -- (h8);
\draw[arrow] (h3) -- (h5);
\draw[arrow] (h3) -- (h8);
\draw[arrow] (h5) -- (d1);
\draw[arrow] (h5) -- (d4);
\draw[arrow] (h8) -- (d1);
\draw[arrow] (h8) -- (d4);
\draw[arrow] (d1) -- (d5);
\draw[arrow] (d1) -- (d6);
\draw[arrow] (d1) -- (d7);
\draw[arrow] (d1) -- (d8);
\draw[arrow] (d1) -- (d9);
\draw[arrow] (d4) -- (d5);
\draw[arrow] (d4) -- (d6);
\draw[arrow] (d4) -- (d7);
\draw[arrow] (d4) -- (d8);
\draw[arrow] (d4) -- (d9);
\draw[arrow] (d5) -- (y);
\draw[arrow] (d6) -- (y);
\draw[arrow] (d7) -- (y);
\draw[arrow] (d8) -- (y);
\draw[arrow] (d9) -- (y);

\node at ($(h5)!0.5!(d4) + (-1,2.75)$) {$\overbrace{\hspace{4.5cm}}^{\text{Dropout layers}}$};
\end{tikzpicture}
  \caption{The same network evaluated with dropout active. Each forward
  pass draws dropout masks (encoded in $\theta \sim \pi_{\hat\theta}$), yielding a
  randomised output $\mathscr{D}(x;\theta)$. Repeated evaluations
  resample $\theta$ and can therefore produce different outputs.}
  \label{fig:mlmc:dropout_net}
\end{figure}
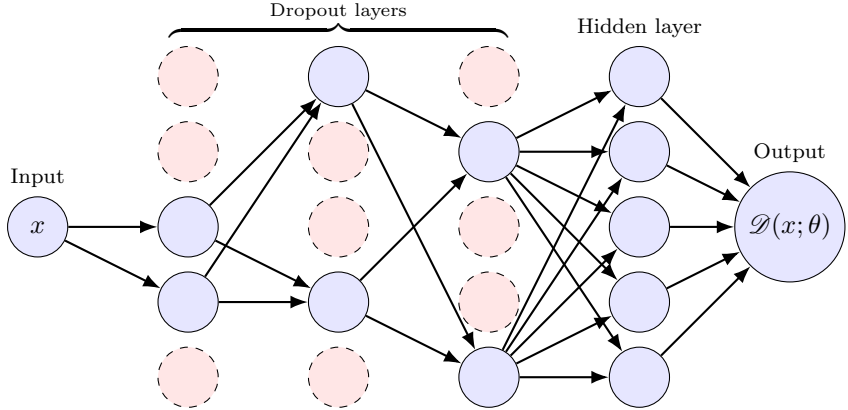

Throughout this section we present the moment identities in the scalar-output
case $N_{\mathrm{out}}=1$, so that $\mathscr{D}(x;\theta)\in\mathbb{R}$ and
$\mu_2(x)$ denotes the predictive variance. For vector-valued outputs
$\mathscr{D}(x;\theta)\in\mathbb{R}^{N_{\mathrm{out}}}$, the same constructions
and results apply componentwise by applying the definitions below to each
coordinate of $\mathscr{D}(x;\theta)$.

We are interested in the first two central moments of this randomised
predictor with respect to the epistemic randomness induced by dropout.
We denote these by
\begin{equation}\label{eq:mlmc:target_moments}
    \mu(x)
    :=
    \mathbb{E}_{\theta}\big[\mathscr{D}(x;\theta)\big],
    \qquad
    \mu_2(x)
    :=
    \mathrm{Var}_{\theta}\big[\mathscr{D}(x;\theta)\big]
    =
    \mathbb{E}_{\theta}\Big[\big(\mathscr{D}(x;\theta)-\mu(x)\big)^2\Big].
\end{equation}
We use the notation $\mu, \mu_2$ to emphasise that these are moments of
the dropout-induced predictive distribution at fixed $x$. This choice
is convenient later when we introduce sample estimators and their
sampling variances, where using $\sigma^2$ for multiple distinct
variances can become ambiguous.

In the analysis of variance estimators we will also require the fourth
central moment, which we denote by
\begin{equation}\label{eq:mlmc:fourth_central_moment}
    \mu_4(x)
    :=
    \mathbb{E}_{\theta}\Big[\big(\mathscr{D}(x;\theta)-\mu(x)\big)^4\Big].
\end{equation}
This moment controls the sampling variability of empirical variance
estimators and will therefore enter the allocation and complexity
results.

The randomness in $\mathscr{D}(x;\theta)$ captures epistemic
uncertainty arising from limited data and model uncertainty, and should
be distinguished from aleatoric uncertainty, which represents
irreducible variability or systematic effects in the data-generating
process. The dropout-induced predictive distribution is not the exact
Bayesian predictive distribution, since dropout provides only a
variational approximation to Bayesian inference \cite{gal2016dropout}.
A known limitation is that the resulting uncertainty estimates can be
miscalibrated, for example due to architecture choice, dropout rate or
unmodelled aleatoric noise \cite{guo2017calibration}. Nonetheless,
MC-dropout is computationally lightweight and is widely used as a
practical route to epistemic uncertainty estimation
\cite{gal2016uncertaintyPHD,kendall2017uncertainties}.

\subsection{Single-fidelity Monte Carlo estimators}\label{subsec:mlmc:single_fidelity}
To approximate the dropout-induced moments
\eqref{eq:mlmc:target_moments} at a fixed input $x$, a natural
approach is Monte Carlo sampling over independent realisations of the
test-time randomness. Let $\{\theta_t\}_{t=1}^T$ be i.i.d.\ draws from $\pi_{\hat\theta}$, the law induced by dropout for fixed weights $\hat\theta$, written $\theta_t
\overset{\mathrm{iid}}{\sim} \pi_{\hat\theta}$. For notational simplicity, we omit the explicit dependence of the associated probability law $\pi_{\hat\theta}$ on the trained weights $\hat\theta \in \Theta$, and denote the law by $\pi$. We then define the
single-fidelity (single-$T$) estimators of the predictive mean and
predictive variance by
\begin{equation}\label{eq:mlmc:single_fidelity_estimators}
    Y(x;T)
    :=
    \frac{1}{T}\sum_{t=1}^T \mathscr{D}(x;\theta_t),
    \qquad
    V(x;T)
    :=
    \frac{1}{T-1}\sum_{t=1}^T \big(\mathscr{D}(x;\theta_t)-Y(x,T)\big)^2.
\end{equation}
Here $T$ is the fidelity parameter, larger $T$ yields lower sampling
variability but requires more stochastic forward passes.

\subsubsection{Moments of single-fidelity estimators}\label{subsubsec:mlmc:single_fidelity_moments}
The estimators $Y(x;T)$ and $V(x;T)$ are random variables, since they
depend on the random draws $\{\theta_t\}_{t=1}^T$. Their expectations
recover the target moments \eqref{eq:mlmc:target_moments}, and their
sampling variances quantify the accuracy of a single Monte Carlo
estimate at fidelity $T$.

\begin{lemma}[Moments of single-fidelity estimators]\label{lem:mlmc:single_fidelity_moments}
If $\theta_t \overset{\mathrm{iid}}{\sim}\pi$ and $T\geq 2$, then
\begin{equation}\label{eq:mlmc:single_fidelity_moments}
    \begin{aligned}
        \mathbb{E}\big[Y(x;T)\big]
        &=
        \mu(x),
        &
        \mathrm{Var}\big[Y(x;T)\big]
        &=
        \frac{1}{T}\mu_2(x),
        \\
        \mathbb{E}\big[V(x;T)\big]
        &=
        \mu_2(x),
        &
        \mathrm{Var}\big[V(x;T)\big]
        &=
        \frac{1}{T}\left(\mu_4(x)-\frac{T-3}{T-1}\mu_2^2(x)\right),
    \end{aligned}
\end{equation}
where $\mu_4(x)$ is defined in \eqref{eq:mlmc:fourth_central_moment}.
\end{lemma}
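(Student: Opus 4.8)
The plan is to reduce everything to moment computations for the i.i.d.\ scalars $D_t := \mathscr{D}(x;\theta_t)$, which share the common mean $\mu$, variance $\mu_2$ and fourth central moment $\mu_4$ (I suppress the dependence on $x$ throughout). The first two identities are immediate: by linearity $\mathbb{E}[Y(x;T)] = \tfrac{1}{T}\sum_{t=1}^T \mathbb{E}[D_t] = \mu$, and by independence $\mathrm{Var}[Y(x;T)] = \tfrac{1}{T^2}\sum_{t=1}^T \mathrm{Var}[D_t] = \mu_2/T$.

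For the unbiasedness of $V(x;T)$ I would use the algebraic identity $\sum_{t}(D_t - Y)^2 = \sum_{t}(D_t - \mu)^2 - T(Y-\mu)^2$, which follows from expanding about $\mu$ and using $\sum_t(D_t-Y)=0$. Taking expectations and substituting $\mathbb{E}[(D_t-\mu)^2] = \mu_2$ together with $\mathbb{E}[(Y-\mu)^2] = \mu_2/T$ from the previous step gives $\mathbb{E}[\sum_t(D_t-Y)^2] = (T-1)\mu_2$; dividing by $T-1$ yields $\mathbb{E}[V(x;T)] = \mu_2$.

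The real work is the variance of $V(x;T)$. Since both $V$ and its variance are invariant under the shift $D_t \mapsto D_t - \mu$, I would assume without loss of generality that $\mu = 0$, so that $\mathbb{E}[D_t] = 0$, $\mathbb{E}[D_t^2] = \mu_2$ and $\mathbb{E}[D_t^4] = \mu_4$. Writing $A := \sum_t D_t^2$ and $B := \sum_{i\neq j} D_i D_j$, the identity above rearranges to the power-sum form $V = \tfrac{1}{T}A - \tfrac{1}{T(T-1)}B$, so that $V^2 = \tfrac{1}{T^2}A^2 - \tfrac{2}{T^2(T-1)}AB + \tfrac{1}{T^2(T-1)^2}B^2$. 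I would then expand $\mathbb{E}[V^2]$ and classify the resulting fourth-order terms $\mathbb{E}[D_iD_jD_kD_\ell]$ by their index-coincidence pattern: because the $D_t$ are independent and centred, every term in which some index occurs exactly once vanishes, leaving only the fully-merged pattern $\{i=j=k=\ell\}$ (each contributing $\mu_4$) and the paired patterns $\{i=j\neq k=\ell\}$ and its permutations (each contributing $\mu_2^2$). Counting the multiplicity of each class within $A^2$, $AB$ and $B^2$, collecting terms, and subtracting $(\mathbb{E}[V])^2 = \mu_2^2$ should deliver the stated formula.

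The main obstacle is purely combinatorial: maintaining an accurate census of how many index tuples fall into each coincidence class across the three products, with the correct prefactors $1/T^2$, $-2/(T^2(T-1))$ and $1/(T^2(T-1)^2)$. A convenient safeguard against a miscount is to evaluate the final expression in the Gaussian case $\mu_4 = 3\mu_2^2$, where it must collapse to the classical value $\mathrm{Var}[V(x;T)] = 2\mu_2^2/(T-1)$; any error in the bookkeeping will typically surface in this check.
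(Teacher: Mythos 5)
Your proposal is correct, but it takes a genuinely different route from the paper: the paper's own proof is essentially a citation, noting that the identities for $\mathbb{E}[Y]$ and $\mathrm{Var}[Y]$ are standard, that unbiasedness of the sample variance gives $\mathbb{E}[V]=\mu_2$, and that the formula for $\mathrm{Var}[V]$ is the classical sampling variance of the unbiased variance estimator, quoted from the literature (Benhamou, equation 10). You instead derive everything from first principles: centring the data ($V$ is shift-invariant, so the reduction to $\mu=0$ is legitimate), writing $(T-1)V = A - \tfrac{1}{T}(A+B)$ with $A=\sum_t D_t^2$, $B=\sum_{i\neq j}D_iD_j$, and classifying fourth-order index patterns. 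Your bookkeeping does close: $\mathbb{E}[A^2]=T\mu_4+T(T-1)\mu_2^2$, $\mathbb{E}[AB]=0$ (every term contains an index of odd multiplicity), and $\mathbb{E}[B^2]=2T(T-1)\mu_2^2$, which combine to give $\mathbb{E}[V^2]=\tfrac{\mu_4}{T}+\tfrac{T-1}{T}\mu_2^2+\tfrac{2\mu_2^2}{T(T-1)}$ and hence exactly $\mathrm{Var}[V]=\tfrac{1}{T}\bigl(\mu_4-\tfrac{T-3}{T-1}\mu_2^2\bigr)$; your Gaussian check $\mu_4=3\mu_2^2 \Rightarrow \mathrm{Var}[V]=2\mu_2^2/(T-1)$ also verifies. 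What your approach buys is self-containedness and a transparent mechanism (the vanishing of $\mathbb{E}[AB]$ is what makes the count short); what the paper's approach buys is brevity, at the cost of leaving the key formula unproved within the document. The only thing to tighten is that your write-up describes the final census as something that "should deliver" the formula rather than carrying it out; since $AB$ vanishes identically and $B^2$ admits only the two trivial pairings, you could complete the count in three lines and should do so.
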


\subsubsection{Outer sampling and empirical estimators}\label{subsubsec:mlmc:outer_sampling}
In applications we often wish to quantify not only the estimators
$Y(x;T)$ and $V(x;T)$ themselves, but also their sampling variability
$\mathrm{Var}[Y(x;T)]$ and $\mathrm{Var}[V(x;T)]$. A direct empirical
approach is to draw $M$ independent \emph{replicates} of the estimators
\eqref{eq:mlmc:single_fidelity_estimators}. That is, for
$m=1,\dots,M$, let $\{\theta^{(m)}_t\}_{t=1}^T$ be i.i.d.\ draws from
$\pi$, and define
\[
Y^{(m)}(x;T) := \frac{1}{T}\sum_{t=1}^T \mathscr{D}(x;\theta^{(m)}_t),
\qquad
V^{(m)}(x;T) := \frac{1}{T-1}\sum_{t=1}^T \big(\mathscr{D}(x;\theta^{(m)}_t)-Y^{(m)}(x;T)\big)^2.
\]

We then form the empirical estimators
\begin{equation}\label{eq:mlmc:outer_empirical_estimators}
    \begin{aligned}
      \overline{Y}(x;M,T)
        &:=
        \frac{1}{M}\sum_{m=1}^M Y^{(m)}(x,T),
        &
        \mathcal{S}^2_{Y}(x;M,T)
        &:=
        \frac{1}{M-1}\sum_{m=1}^M \big(Y^{(m)}(x,T)-\overline{Y}(x;M,T)\big)^2,
        \\
        \overline{V}(x;M,T)
        &:=
        \frac{1}{M}\sum_{m=1}^M V^{(m)}(x,T),
        &
        \mathcal{S}^2_{V}(x;M,T)
        &:=
        \frac{1}{M-1}\sum_{m=1}^M \big(V^{(m)}(x,T)-\overline{V}(x;M,T)\big)^2.
    \end{aligned}
\end{equation}
The single-fidelity construction requires $MT$ evaluations of
$\mathscr{D}(x;\theta)$ and therefore has cost scaling linearly in both
$M$ and $T$. Achieving small empirical variances
$\mathcal{S}^2_{Y}(x;M,T)$ and $\mathcal{S}^2_{V}(x;M,T)$ can therefore
become computationally expensive. This motivates the multilevel
constructions introduced in the next subsection, which aim to reduce
cost for a prescribed accuracy by combining several fidelities.

\subsection{Multilevel estimator for the predictive mean}\label{subsec:mlmc:mean}
We now construct a multilevel Monte Carlo (MLMC) estimator for the
dropout-induced predictive mean $\mu(x)$. The key idea is to combine
estimators of increasing fidelities $T_0<\cdots<T_L$, using many cheap
low-fidelity samples and few expensive high-fidelity samples, while
coupling neighbouring fidelities to reduce the variance of level
differences.

\begin{remark}[Estimator hierarchy rather than discretisation hierarchy]\label{rem:mlmc:estimator_hierarchy}
In contrast to classical MLMC for biased discretisations, the estimators
$Y(x,T_\ell)$ defined in \eqref{eq:mlmc:single_fidelity_estimators} are
unbiased for $\mu(x)$ for every $\ell$. The fidelity parameter $T_\ell$
controls sampling variance and cost, not modelling bias. The multilevel
construction therefore acts as a structured control-variate scheme built
from coupled estimators of the same target moment.
\end{remark}

\subsubsection{Coupling across fidelities}\label{subsubsec:mlmc:mean_coupling}
Fix a strictly increasing sequence of fidelities
\begin{equation}\label{eq:mlmc:mean_fidelity_ladder}
    1 \leq T_0 < T_1 < \cdots < T_L,
\end{equation}
and let $\vec T := (T_0,\dots,T_L)$ and $\vec M := (M_0,\dots,M_L)$
denote the fidelity ladder and the corresponding numbers of samples
(per level). The MLMC construction couples \emph{neighbouring}
fidelities within each level difference: for a fixed level $\ell\ge 1$
and outer index $m \in [1, M_{\ell}]$, we form a coupled pair
$\big(Y^{(m)}(x,T_{\ell-1}),Y^{(m)}(x,T_\ell)\big)$ by reusing the
first $T_{\ell-1}$ dropout realisations inside the $T_\ell$-sample
mean, and generating only $T_\ell-T_{\ell-1}$ additional
draws. Dropout realisations are sampled independently across outer replications $m$, so that for any fixed $\ell$ the estimators $Y^{(m_1)}(x;T_\ell)$ and $Y^{(m_2)}(x;T_\ell)$ are independent whenever $m_1\neq m_2$, while for a given $m$ the samples used at fidelity $T_{\ell-1}$ are reused as a prefix of those used at $T_\ell$. Equivalently, if $\theta^{(m)}_t
\overset{\mathrm{iid}}{\sim}\pi$, then
\begin{equation}\label{eq:mlmc:mean_recurrence}
    T_{\ell} Y^{(m)}(x;T_{\ell})
    =
    T_{\ell-1}Y^{(m)}(x;T_{\ell-1})
    +
    \sum_{t = T_{\ell-1}+1}^{T_{\ell}} \mathscr{D}\big(x;\theta^{(m)}_{t}\big),
    \qquad m=1,\dots,M_{\ell}, \quad \ell=1,\dots,L.
\end{equation}
We define the coupled level increments by
\begin{equation}\label{eq:mlmc:mean_increment}
    \Delta Y^{(m)}(x;T_{\ell})
    :=
    Y^{(m)}(x;T_{\ell})-Y^{(m)}(x;T_{\ell-1}),
    \qquad m=1,\dots,M_{\ell}, \quad \ell=1,\dots,L.
\end{equation}
The coupling ensures that $Y^{(m)}(x;T_{\ell})$ and $Y^{(m)}(x;T_{\ell-1})$ are
strongly correlated, so that $\Delta Y^{(m)}(x;T_{\ell})$ typically has much
smaller variance than either term alone. In particular, the telescoping
identity holds \cite{giles2008multilevel},
\begin{equation}\label{eq:mlmc:mean_telescoping}
    \mathbb{E}\big[Y^{(m)}(x;T_L)\big]
    =
    \mathbb{E}\big[Y^{(m)}(x;T_0)\big]
    +
    \sum_{\ell=1}^{L}\mathbb{E}\big[\Delta Y^{(m)}(x;T_\ell)\big] \qquad m=1,\dots,M_{L}.
\end{equation}

\subsubsection{MLMC mean estimator and its variance estimator}\label{subsubsec:mlmc:mean_estimators}
Motivated by \eqref{eq:mlmc:mean_telescoping}, we define the MLMC
estimator of $\mu(x)$ by
\begin{equation}\label{eq:mlmc:mean_mlmc_estimator}
    \mathcal{Y}(x;\vec M,\vec T)
    :=
    \overline{Y}(x;M_0,T_0)
    +
    \sum_{\ell = 1}^{L}\overline{\Delta Y}(x;M_\ell,T_\ell), \qquad
    \overline{\Delta Y}(x;M_\ell,T_\ell)
    :=
    \frac{1}{M_\ell}\sum_{m=1}^{M_\ell}\Delta Y^{(m)}(x;T_\ell).
\end{equation}
where for each $\ell$ the samples $\{\Delta Y^{(m)}(x;T_\ell)\}_{m=1}^{M_\ell}$
are i.i.d.\ across $m$. We quantify the
sampling variance of \eqref{eq:mlmc:mean_mlmc_estimator} using the
empirical variance estimator
\begin{equation}\label{eq:mlmc:mean_mlmc_variance_estimator}
    S^2_Y(x;\vec M,\vec T)
    :=
    \frac{1}{M_0} \mathcal{S}^2_{Y}(x;M_0,T_0)
    +
    \sum_{\ell = 1}^{L}\frac{1}{M_\ell} \mathcal{S}^2_{\Delta Y}(x;M_\ell,T_\ell),
\end{equation}
where $\mathcal{S}^2_{Y}(x;M_0,T_0)$ is defined in
\eqref{eq:mlmc:outer_empirical_estimators} and
\begin{equation}\label{eq:mlmc:mean_increment_sample_variance}
    \mathcal{S}^2_{\Delta Y}(x;M_\ell,T_\ell)
    :=
    \frac{1}{M_\ell-1}\sum_{m=1}^{M_\ell}
    \big(\Delta Y^{(m)}(x;T_\ell)-\overline{\Delta Y}(x;T_\ell)\big)^2,
\end{equation}

\subsubsection{Moment identities}\label{subsubsec:mlmc:mean_moments}
The MLMC estimator \eqref{eq:mlmc:mean_mlmc_estimator} is unbiased for
$\mu(x)$, and the expected value of the empirical variance estimator
\eqref{eq:mlmc:mean_mlmc_variance_estimator} is given by the variance $\mu_2$ multiplied by function depending on $\vec T$ and $\vec M$, respectively.

\begin{lemma}[Moments of the MLMC mean estimator]\label{lem:mlmc:mean_moments}
Let $\vec T=(T_0,\dots,T_L)$ satisfy \eqref{eq:mlmc:mean_fidelity_ladder}
and suppose that, for each level $\ell$, the outer samples
$\{\Delta Y^{(m)}(x;T_\ell)\}_{m=1}^{M_\ell}$ are i.i.d. Then
\begin{equation}\label{eq:mlmc:mean_moments_mean_mean}
    \mathbb{E}\big[\mathcal{Y}(x;\vec M,\vec T)\big] = \mu(x),
\end{equation}
and 
\begin{equation}\label{eq:mlmc:mean_moments_var_mean}
    \mathbb{E}\big[S^2_Y(x;\vec M,\vec T)\big]
    =
    \mu_2(x)\left(
        \frac{1}{M_0T_0}
        +
        \sum_{\ell=1}^{L}\frac{1}{M_\ell}\left(\frac{1}{T_{\ell-1}}-\frac{1}{T_\ell}\right)
    \right).
\end{equation}
\end{lemma}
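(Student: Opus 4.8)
The plan is to establish the two identities separately, in each case reducing everything to the single-fidelity moments of Lemma~\ref{lem:mlmc:single_fidelity_moments} together with the prefix-coupling structure encoded in the recurrence \eqref{eq:mlmc:mean_recurrence}.

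For the unbiasedness \eqref{eq:mlmc:mean_moments_mean_mean} I would simply apply linearity of expectation to the decomposition \eqref{eq:mlmc:mean_mlmc_estimator}. By Lemma~\ref{lem:mlmc:single_fidelity_moments} each $Y^{(m)}(x;T_\ell)$ is unbiased for $\mu(x)$, so every coupled increment $\Delta Y^{(m)}(x;T_\ell)=Y^{(m)}(x;T_\ell)-Y^{(m)}(x;T_{\ell-1})$ has mean $\mu(x)-\mu(x)=0$. The level-$0$ term contributes $\mu(x)$ and each correction averages to zero, which is exactly the telescoping identity \eqref{eq:mlmc:mean_telescoping} taken in expectation.

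For the variance identity \eqref{eq:mlmc:mean_moments_var_mean}, the starting point is that each sample-variance estimator carries the Bessel-corrected $1/(M_\ell-1)$ normalisation and is built from quantities that are i.i.d.\ across the outer index $m$ by hypothesis; hence each is unbiased for the per-replicate variance of its summands, so that
\[
\mathbb{E}\big[\mathcal{S}^2_{Y}(x;M_0,T_0)\big] = \mathrm{Var}\big[Y^{(m)}(x;T_0)\big],
\qquad
\mathbb{E}\big[\mathcal{S}^2_{\Delta Y}(x;M_\ell,T_\ell)\big] = \mathrm{Var}\big[\Delta Y^{(m)}(x;T_\ell)\big].
\]
The level-$0$ variance is immediate from Lemma~\ref{lem:mlmc:single_fidelity_moments}, namely $\mathrm{Var}[Y^{(m)}(x;T_0)]=\mu_2(x)/T_0$, and after the $1/M_0$ weight this produces the first term; by linearity it then remains only to evaluate the increment variance at each level $\ell\ge 1$ and attach the weights $1/M_\ell$.

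The one substantive calculation, and the step where the coupling genuinely matters, is the variance of a single coupled increment. Writing $D_t:=\mathscr{D}(x;\theta^{(m)}_t)$ for the i.i.d.\ per-pass outputs and expanding $\mathrm{Var}[\Delta Y^{(m)}(x;T_\ell)]$ as $\mathrm{Var}[Y^{(m)}(x;T_\ell)]+\mathrm{Var}[Y^{(m)}(x;T_{\ell-1})]-2\,\mathrm{Cov}[Y^{(m)}(x;T_\ell),Y^{(m)}(x;T_{\ell-1})]$, the first two terms are $\mu_2(x)/T_\ell$ and $\mu_2(x)/T_{\ell-1}$. For the covariance I would invoke \eqref{eq:mlmc:mean_recurrence}: since the $T_{\ell-1}$-sample mean reuses exactly the first $T_{\ell-1}$ draws of the $T_\ell$-sample mean, the double sum $\frac{1}{T_\ell T_{\ell-1}}\sum_{t=1}^{T_\ell}\sum_{s=1}^{T_{\ell-1}}\mathrm{Cov}[D_t,D_s]$ collapses by independence to its $T_{\ell-1}$ diagonal terms, giving $\mathrm{Cov}=\mu_2(x)/T_\ell$. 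Substituting yields the clean telescoping gap $\mathrm{Var}[\Delta Y^{(m)}(x;T_\ell)]=\mu_2(x)\big(1/T_{\ell-1}-1/T_\ell\big)$, reflecting the variance cancelled by the shared prefix; weighting by $1/M_\ell$ and summing over $\ell$ assembles \eqref{eq:mlmc:mean_moments_var_mean}. The only delicate point is the index bookkeeping in the overlapping covariance, which is precisely the effect the prefix coupling is engineered to control.
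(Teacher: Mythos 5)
Your proposal is correct and follows essentially the same route as the paper's proof: unbiasedness via linearity and the telescoping identity, reduction of $\mathbb{E}\big[S^2_Y(x;\vec M,\vec T)\big]$ to $\mathrm{Var}[Y(x;T_0)]/M_0+\sum_{\ell=1}^{L}\mathrm{Var}[\Delta Y(x;T_\ell)]/M_\ell$ through the unbiasedness of the Bessel-corrected outer sample variances, and an elementary computation of the increment variance exploiting the prefix coupling. The only cosmetic difference lies in that last step: the paper uses \eqref{eq:mlmc:mean_recurrence} to write $\Delta Y(x;T_\ell)$ as a linear combination of the independent new-block sum and $Y(x;T_{\ell-1})$ and adds the two variances, whereas you expand $\mathrm{Var}[\Delta Y(x;T_\ell)]$ and evaluate $\mathrm{Cov}\big[Y(x;T_\ell),Y(x;T_{\ell-1})\big]=\mu_2(x)/T_\ell$ by collapsing the double sum over shared draws --- both computations yield $\mu_2(x)\big(1/T_{\ell-1}-1/T_\ell\big)$.
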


\subsection{Multilevel estimator for the predictive variance}\label{subsec:mlmc:variance}
We next develop an MLMC estimator for the dropout-induced predictive
variance $\mu_2(x)$. As in the single-fidelity setting, we estimate
$\mu_2(x)$ using the unbiased sample variance $V(x,T)$ based on $T$
stochastic forward passes. The multilevel construction mirrors that for
the mean, but requires a careful coupling of variance estimators across
fidelities.

\subsubsection{Coupling via an online variance update}\label{subsubsec:mlmc:variance_coupling}
Fix $2\leq T_0 < T_1 < \cdots < T_L$ and let
$\{\theta_t\}_{t=1}^{T_L}$ be i.i.d.\ draws from $\pi$. Assume in
addition that $T_\ell-T_{\ell-1}\ge 2$ for $\ell=1,\dots,L$, so that
the unbiased sample variance on each new block is well-defined. For
notational convenience write
\[
D_t(x) := \mathscr{D}(x;\theta_t), \qquad t=1,\dots,T_L,
\]
and define, for integers $1\leq a < b$,
\[
\overline{D}_{a:b}(x) := \frac{1}{b-a+1}\sum_{t=a}^{b} D_t(x),
\qquad
V_{a:b}(x) := \frac{1}{b-a}\sum_{t=a}^{b}\big(D_t(x)-\overline{D}_{a:b}(x)\big)^2.
\]
With this notation the single-fidelity variance estimator in
\eqref{eq:mlmc:single_fidelity_estimators} is $V(x,T)=V_{1:T}(x)$.

To couple $V(x,T_{\ell-1})$ and $V(x,T_\ell)$, we reuse the first
$T_{\ell-1}$ dropout draws in the $T_\ell$-sample estimate and add only
$T_\ell-T_{\ell-1}$ fresh draws. The resulting coupled variances satisfy
the pooled-variance identity (equivalently, an online variance update)
\cite{ChanGolubLeveque1979}:
\begin{equation}\label{eq:mlmc:variance_update}
\begin{aligned}
    (T_\ell-1) V(x,T_\ell)
    ={}&(T_{\ell-1}-1) V(x,T_{\ell-1})
    + (T_\ell-T_{\ell-1}-1) V_{T_{\ell-1}+1:T_\ell}(x)
    \\
    &\quad + \frac{T_{\ell-1}(T_\ell-T_{\ell-1})}{T_\ell}
    \Big(\overline{D}_{1:T_{\ell-1}}(x)-\overline{D}_{T_{\ell-1}+1:T_\ell}(x)\Big)^2.
\end{aligned}
\end{equation}
We define the coupled level increments by
\begin{equation}\label{eq:mlmc:variance_increment}
    \Delta V(x;T_{\ell})
    :=
    V(x;T_{\ell})-V(x;T_{\ell-1}),
    \qquad \ell=1,\dots,L.
\end{equation}
Since both $V(x;T_{\ell-1})$ and $V(x;T_\ell)$ are unbiased for $\mu_2(x)$
(Lemma~\ref{lem:mlmc:single_fidelity_moments}), we have
$\mathbb{E}[\Delta V(x;T_\ell)]=0$, and the MLMC construction again
relies on the reduced variance of the coupled differences.

\subsubsection{MLMC variance estimator and its variance estimator}\label{subsubsec:mlmc:variance_estimators}
Let $\vec T=(T_0,\dots,T_L)$ and $\vec M=(M_0,\dots,M_L)$. For each level
$\ell$, we generate $M_\ell$ i.i.d.\ outer replicates of the coupled
increment $\Delta V(x;T_\ell)$, and define
the MLMC estimator of $\mu_2(x)$ by
\begin{equation}\label{eq:mlmc:variance_mlmc_estimator}
    \mathcal{V}(x;\vec M,\vec T)
    :=
    \overline{V}(x; M_0, T_0)
    +
    \sum_{\ell = 1}^{L}\overline{\Delta V}(x; M_\ell, T_\ell), \quad \overline{\Delta V}(x; M_\ell, T_\ell):=\frac{1}{M_\ell}\sum_{m=1}^{M_\ell}\Delta V^{(m)}(x;T_\ell).
\end{equation}
We quantify the sampling variance of \eqref{eq:mlmc:variance_mlmc_estimator}
using the empirical variance estimator
\begin{equation}\label{eq:mlmc:variance_mlmc_variance_estimator}
    S^2_V(x;\vec M,\vec T)
    :=
    \frac{1}{M_0} \mathcal{S}^2_{V}(x;M_0,T_0)
    +
    \sum_{\ell = 1}^{L}\frac{1}{M_\ell} \mathcal{S}^2_{\Delta V}(x;M_\ell,T_\ell),
\end{equation}
where $\mathcal{S}^2_{V}(x;M_0,T_0)$ is defined in
\eqref{eq:mlmc:outer_empirical_estimators} and
\begin{equation}\label{eq:mlmc:variance_increment_sample_variance}
    \mathcal{S}^2_{\Delta V}(x;M_\ell,T_\ell)
    :=
    \frac{1}{M_\ell-1}\sum_{m=1}^{M_\ell}
    \big(\Delta V^{(m)}(x;T_\ell)-\overline{\Delta V}(x; M_\ell, T_\ell)\big)^2.
\end{equation}

\subsubsection{Moment identities}\label{subsubsec:mlmc:variance_moments}
The MLMC estimator \eqref{eq:mlmc:variance_mlmc_estimator} is unbiased
for $\mu_2(x)$, and the expected value of the empirical variance
estimator \eqref{eq:mlmc:variance_mlmc_variance_estimator} is soley dependent on the $2^{\mathrm{nd}}$ and $4^{\mathrm{th}}$-order moments, $\vec M$, and $\vec T$.

\begin{lemma}[Covariance of unbiased sample-variance estimators with overlapping samples]
\label{lem:cov_overlap_var}
Fix $x$ and let $\{\theta_i\}_{i=1}^{T_\ell}$ be i.i.d.\ draws from $\pi$.
Assume $\mathscr{D}(x;\theta)$ has finite fourth central moment and write
$\mu_2(x)$ and $\mu_4(x)$ for the second and fourth central moments defined in
\eqref{eq:mlmc:target_moments} and \eqref{eq:mlmc:fourth_central_moment}.
For $\ell\ge 1$, form $V(x;T_{\ell-1})$ from the first $T_{\ell-1}$ samples and
$V(x;T_\ell)$ from the first $T_\ell$ samples. Then
\begin{equation}\label{eq:cov_overlap:cov_formula}
\begin{aligned}
    \mathrm{Cov}\bigl[V(x;T_{\ell}),V(x;T_{\ell-1})\bigr]
    &=
    \frac{T_{\ell-1}-1}{T_{\ell}-1}  \mathrm{Var}[V(x;T_{\ell-1})]
    +\frac{T_{\ell}-T_{\ell-1}}{T_{\ell-1}T_{\ell}(T_{\ell}-1)}\bigl(\mu_4(x)-3\mu_2^2(x)\bigr).
\end{aligned}
\end{equation}
\end{lemma}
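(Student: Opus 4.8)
The plan is to build directly on the pooled-variance identity \eqref{eq:mlmc:variance_update}, which already isolates the contribution of the shared prefix, and then reduce the entire statement to a single fourth-moment covariance computation. Since both sample variances $V(x;T_{\ell-1})$ and $V(x;T_\ell)$ are invariant under a common translation of all the $D_t(x)$, and since the claimed formula involves only the central moments $\mu_2(x)$ and $\mu_4(x)$, I would first assume without loss of generality that $\mu(x)=0$, i.e.\ $\mathbb{E}[D_t(x)]=0$. This normalisation is what makes the odd-moment cross terms drop out later. Throughout write $n:=T_{\ell-1}$ and $N:=T_\ell$ for brevity, and abbreviate $A:=\overline{D}_{1:n}(x)$, $B:=\overline{D}_{n+1:N}(x)$ and $W:=V_{n+1:N}(x)$, the sample variance on the fresh block.

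Rearranging \eqref{eq:mlmc:variance_update} gives
\[
V(x;T_\ell)=\frac{n-1}{N-1}\,V(x;T_{\ell-1})+\frac{N-n-1}{N-1}\,W+\frac{n(N-n)}{N(N-1)}\,(A-B)^2 .
\]
Taking the covariance with $V(x;T_{\ell-1})$ and using bilinearity, the first summand contributes $\tfrac{n-1}{N-1}\mathrm{Var}[V(x;T_{\ell-1})]$, which is precisely the first term of the claim. The remaining two summands are then controlled by independence: both $W$ and $B$ depend only on the fresh draws $\theta_{n+1},\dots,\theta_N$, hence are independent of $V(x;T_{\ell-1})$ and of $A$. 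Consequently $\mathrm{Cov}[W,V(x;T_{\ell-1})]=0$, and expanding $(A-B)^2=A^2-2AB+B^2$, the $B^2$ term is independent of $V(x;T_{\ell-1})$ while $\mathrm{Cov}[AB,V(x;T_{\ell-1})]=\mathbb{E}[B]\,\mathrm{Cov}[A,V(x;T_{\ell-1})]=0$ because $\mathbb{E}[B]=\mu(x)=0$. Thus only $\mathrm{Cov}[A^2,V(x;T_{\ell-1})]$ survives.

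The main computation is therefore to evaluate $\mathrm{Cov}[A^2,V(x;T_{\ell-1})]$. I would use the sum-of-squares identity $\sum_{t=1}^{n}D_t(x)^2=(n-1)V(x;T_{\ell-1})+nA^2$ to trade $A^2$ against the raw second-moment sum $P:=\sum_{t=1}^{n}D_t(x)^2$, reducing the task to $\mathrm{Cov}[A^2,P]$ and $\mathrm{Var}[A^2]$. Both follow from standard fourth-order index bookkeeping for i.i.d.\ centred variables, using $\mathbb{E}\big[(\sum_{t}D_t(x))^4\big]=n\mu_4(x)+3n(n-1)\mu_2^2(x)$ and $\mathbb{E}[A^2P]=\big(\mu_4(x)+(n-1)\mu_2^2(x)\big)/n$; the centering guarantees that every third-moment contribution carries a lone first-moment factor and hence vanishes. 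Assembling the pieces yields $\mathrm{Cov}[A^2,V(x;T_{\ell-1})]=\big(\mu_4(x)-3\mu_2^2(x)\big)/n^2$. Substituting back, the cross term becomes $\tfrac{n(N-n)}{N(N-1)}\cdot\tfrac{\mu_4(x)-3\mu_2^2(x)}{n^2}=\tfrac{N-n}{nN(N-1)}\big(\mu_4(x)-3\mu_2^2(x)\big)$, which is exactly the second term of the claim after restoring $n=T_{\ell-1}$, $N=T_\ell$.

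I expect the only real obstacle to be the fourth-moment bookkeeping in this last step: keeping the index multiplicities straight across the quadruple sums and confirming that the odd (third-moment) contributions cancel under centering. Everything preceding it is bilinearity of covariance together with the block independence built into the coupling, so the proof is essentially a reduction of the overlapping-variance covariance to the single scalar identity $\mathrm{Cov}[\overline{D}_{1:n}^2,V_{1:n}]=(\mu_4-3\mu_2^2)/n^2$.
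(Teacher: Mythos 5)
Your proposal is correct and follows essentially the same route as the paper's proof: both start from the pooled-variance update \eqref{eq:mlmc:variance_update}, use independence of the fresh block to kill all but the shared-prefix cross term, and reduce that term to the fourth-moment identity $\mathrm{Cov}\bigl[\overline{D}_{1:n}^2,\,V_{1:n}\bigr]=\bigl(\mu_4-3\mu_2^2\bigr)/n^2$ via the decomposition $\sum_{t=1}^n D_t^2=(n-1)V_{1:n}+n\,\overline{D}_{1:n}^2$. The only cosmetic differences are your WLOG centering $\mu(x)=0$ (the paper instead works with $\xi_i:=D_i-\mu$, which is the same device) and your packaging of the bookkeeping as $\mathrm{Cov}[A^2,P]$ and $\mathrm{Var}[A^2]$ rather than the paper's raw expectations $\mathbb{E}\bigl[(Y-\mu)^2\sum_i\xi_i^2\bigr]$ and $\mathbb{E}\bigl[(Y-\mu)^4\bigr]$.
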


\begin{theorem}[Moments of the MLMC variance estimator]\label{thm:mlmc:variance_moments}
Let $\vec T=(T_0,\dots,T_L)$ with $T_0\ge 2$ and suppose that, for each
level $\ell$, the outer samples
$\{\Delta V^{(m)}(x;T_\ell)\}_{m=1}^{M_\ell}$ are i.i.d. Then
\begin{equation}\label{eq:mlmc:variance_unbiasedness}
    \mathbb{E}\big[\mathcal{V}(x;\vec M,\vec T)\big] = \mu_2(x),
\end{equation}
and
\begin{equation}\label{eq:mlmc:variance_sampling_variance}
\begin{split}
    \mathbb{E}\big[S_V^2(x;\vec M,\vec T)\big]
    ={}&
    \frac{1}{M_0T_0}\left(\mu_4(x) - \frac{T_0-3}{T_0-1}\mu_2^2(x)\right)
    +
    \sum_{\ell=1}^L \frac{1}{M_{\ell}}
    \left(\frac{1}{T_{\ell-1}}-\frac{1}{T_{\ell}}\right)\big(\mu_4(x)-3 \mu_2^2(x)\big)
    \\
    &+
    \sum_{\ell=1}^L \frac{2}{M_{\ell}}
    \left(\frac{1}{T_{\ell-1}-1}-\frac{1}{T_{\ell}-1}\right)\mu_2^2(x),
\end{split}
\end{equation}
where $\mu_4(x)$ is defined in \eqref{eq:mlmc:fourth_central_moment}.
\end{theorem}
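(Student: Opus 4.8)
The plan is to treat the two claims separately, obtaining unbiasedness directly from the telescoping structure and the variance identity from a covariance expansion of each level increment. For \eqref{eq:mlmc:variance_unbiasedness}, I would apply linearity of expectation to \eqref{eq:mlmc:variance_mlmc_estimator}. The base term satisfies $\mathbb{E}[\overline{V}(x;M_0,T_0)] = \mathbb{E}[V(x;T_0)] = \mu_2(x)$ by Lemma~\ref{lem:mlmc:single_fidelity_moments}, while each correction term has expectation $\mathbb{E}[\overline{\Delta V}(x;M_\ell,T_\ell)] = \mathbb{E}[V(x;T_\ell)] - \mathbb{E}[V(x;T_{\ell-1})] = 0$, since the single-fidelity sample variance is unbiased at every fidelity. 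The telescoping sum then collapses to $\mu_2(x)$.

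For the variance identity, the first observation is that $\mathcal{S}^2_{V}(x;M_0,T_0)$ and each $\mathcal{S}^2_{\Delta V}(x;M_\ell,T_\ell)$ are ordinary sample variances over $M_0$ (respectively $M_\ell$) i.i.d.\ outer replicates, hence unbiased for the variance of a single replicate. This gives $\mathbb{E}[S_V^2(x;\vec M,\vec T)] = \frac{1}{M_0}\mathrm{Var}[V(x;T_0)] + \sum_{\ell=1}^L \frac{1}{M_\ell}\mathrm{Var}[\Delta V(x;T_\ell)]$. The base term is read off directly from Lemma~\ref{lem:mlmc:single_fidelity_moments} and already matches the first summand of \eqref{eq:mlmc:variance_sampling_variance}, so the remaining work is to compute $\mathrm{Var}[\Delta V(x;T_\ell)]$.

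Here I would expand $\mathrm{Var}[\Delta V(x;T_\ell)] = \mathrm{Var}[V(x;T_\ell)] + \mathrm{Var}[V(x;T_{\ell-1})] - 2\,\mathrm{Cov}[V(x;T_\ell),V(x;T_{\ell-1})]$ and substitute the covariance from Lemma~\ref{lem:cov_overlap_var}. The step that makes the bookkeeping tractable is to rewrite the single-fidelity variance in the split form $\mathrm{Var}[V(x;T)] = \frac{1}{T}(\mu_4 - 3\mu_2^2) + \frac{2}{T-1}\mu_2^2$, using the identity $\mu_4 - \frac{T-3}{T-1}\mu_2^2 = (\mu_4 - 3\mu_2^2) + \frac{2T}{T-1}\mu_2^2$. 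Writing $P := \mu_4(x) - 3\mu_2^2(x)$ and $Q := \mu_2^2(x)$, every term entering the expansion becomes a combination of $P$ and $Q$ with coefficients rational in $1/T_\ell$ and $1/(T_\ell-1)$, so I can collect the coefficients of $P$ and $Q$ independently.

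The main obstacle is purely algebraic: verifying that the collected coefficients simplify to the target form. Writing $a := T_{\ell-1}$ and $b := T_\ell$, I expect the coefficient of $Q$ to collapse to $\frac{2(b-a)}{(a-1)(b-1)} = 2\big(\frac{1}{a-1} - \frac{1}{b-1}\big)$, and the coefficient of $P$ to reduce, after clearing the common denominator $ab(b-1)$ and factoring the resulting numerator as $(b-a)(b-1)$, to $\frac{b-a}{ab} = \frac{1}{a} - \frac{1}{b}$. That factorisation of the numerator is the only nonobvious cancellation and I would display it explicitly; everything else is routine. Substituting back, dividing by $M_\ell$ and summing over $\ell$ then reproduces the two level sums in \eqref{eq:mlmc:variance_sampling_variance} and completes the proof.
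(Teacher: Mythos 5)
Your proposal is correct and follows essentially the same route as the paper's proof: unbiasedness via linearity and Lemma~\ref{lem:mlmc:single_fidelity_moments}, then reduction of $\mathbb{E}[S_V^2]$ to $\frac{1}{M_0}\mathrm{Var}[V(x;T_0)]+\sum_{\ell}\frac{1}{M_\ell}\mathrm{Var}[\Delta V(x;T_\ell)]$, expansion of each increment variance through the covariance, and substitution of Lemma~\ref{lem:cov_overlap_var}. The only difference is that you carry out explicitly (and correctly, including the factorisation of the $P$-coefficient numerator as $(b-a)(b-1)$) the algebra that the paper compresses into the statement that the lemma ``gives a closed-form expression.''
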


\begin{proof}
The unbiasedness \eqref{eq:mlmc:variance_unbiasedness} follows from
linearity of expectation and Lemma~\ref{lem:mlmc:single_fidelity_moments}:
$\mathbb{E}[V(x;T_0)]=\mu_2(x)$ and $\mathbb{E}[\Delta V(x;T_\ell)]=0$. 
For the sampling variance, we have by the linearity of expectation that
\begin{equation}
    \mathbb{E}[S^2_V(x;\vec M,\vec T)]
    =
    \frac{1}{M_0} \mathbb{E}[\mathcal{S}^2_{V}(x;M_0,T_0)]
    +
    \sum_{\ell = 1}^{L}\frac{1}{M_\ell} \mathbb{E}[\mathcal{S}^2_{\Delta V}(x;M_\ell,T_\ell)],
\end{equation}
The definitions of $\mathcal{S}^2_{V}$ and $\mathcal{S}^2_{\Delta V}$ are that they are the unbiased variance estimators of $V$ and $\Delta V$ respectively. Under the assumption that the outer samples are i.i.d., we have that expectation of the unbiased variance estimator is the variance, $\mathbb{E}[\mathcal{S}^2_{Z}] = \mathrm{Var}[Z]$, and thus
\begin{equation}
    \mathbb{E}[S^2_V(x;\vec M,\vec T)]
    =
    \frac{1}{M_0}\mathrm{Var}\big[V(x;T_0)\big]
    +
    \sum_{\ell=1}^L \frac{1}{M_\ell}\mathrm{Var}\big[\Delta V(x;T_\ell)\big].
\end{equation}
Expanding the summand, we have the following expression
\[
\mathrm{Var}\big[\Delta V(x;T_\ell)\big]
=
\mathrm{Var}\big[V(x;T_{\ell-1})\big]-2\mathrm{Cov}\big[V(x;T_{\ell-1}),V(x;T_{\ell})\big]+\mathrm{Var}\big[V(x;T_\ell)\big].
\]
The first and last terms are given by Lemma~\ref{lem:mlmc:single_fidelity_moments}.
For $\ell\ge 1$, the coupling reuses the first $T_{\ell-1}$ stochastic
forward passes inside the $T_\ell$-sample variance. Hence, applying Lemma~\ref{lem:cov_overlap_var} to the nested pair $\bigl(V(x;T_{\ell-1}),V(x;T_\ell)\bigr)$ gives a closed-form expression,
\[
\mathrm{Var}\big[\Delta V(x;T_\ell)\big]
=
\left(\frac{1}{T_{\ell-1}}-\frac{1}{T_{\ell}}\right)\big(\mu_4(x)-3 \mu_2^2(x)\big)
+
2\left(\frac{1}{T_{\ell-1}-1}-\frac{1}{T_{\ell}-1}\right)\mu_2^2(x),
\]
which, upon substitution into the variance decomposition above, gives
\eqref{eq:mlmc:variance_sampling_variance}.
\end{proof}

\begin{remark}\label{rem:mlmc:variance_nonneg}
The right-hand side of \eqref{eq:mlmc:variance_sampling_variance} is
non-negative since it equals $\mathrm{Var}\big[\mathcal{V}(x;\vec M,\vec T)\big]$
by construction. Moreover, under the assumption of zero excess kurtosis,
$\mu_4(x)=3\mu_2^2(x)$, the expression \eqref{eq:mlmc:variance_sampling_variance}
reduces to
\begin{equation}\label{eq:mlmc:variance_level_vars_kappa0}
\begin{split}
    \mathbb{E}\big[S_V^2(x;\vec M,\vec T)\big]
    =
    2\mu_2^2(x)\bra{\frac{1}{M_0}\frac{1}{T_0-1}
    +
    \sum_{\ell=1}^L \frac{1}{M_{\ell}}
    \bra{\frac{1}{T_{\ell-1}-1}-\frac{1}{T_{\ell}-1}}}.
\end{split}
\end{equation}
In particular, for a geometric ladder (for example $T_\ell=2^\ell T_0$) the
single-level correction variances decay like $O(T_{\ell-1}^{-2})$, and the
optimal allocation $\vec M$ depends only on $\vec T$ because the dependence on
the underlying distribution enters through the common factor $2\mu_2^2(x)$.
\end{remark}

\section{Computation and allocation}\label{sec:mlmc_dropout_computation}
In this section we translate the estimators of
Section~\ref{sec:mlmc_dropout_estimators} into a practical coupled
sampling routine and then discuss how to allocate samples across
fidelity levels to minimise sampling variance for a given computational
budget.

\subsection{Coupled sampling algorithm}\label{subsec:mlmc:algorithm}

The multilevel estimators for the predictive mean and variance rely on
\emph{coupled} evaluations within each level increment. For level $\ell\ge 1$
we compute $Y(x;T_{\ell-1})$ and $Y(x;T_\ell)$ (and similarly $V(x;T_{\ell-1})$
and $V(x;T_\ell)$) from the same underlying dropout realisations, reusing the
first $T_{\ell-1}$ stochastic forward passes inside the $T_\ell$-sample
estimators. Algorithm~\ref{alg:mlmc:coupled_sampling} implements this coupling
and returns the MLMC estimators $\mathcal{Y}(x;\vec M,\vec T)$ and
$\mathcal{V}(x;\vec M,\vec T)$, along with the empirical variance estimators
$S_Y^2(x;\vec M,\vec T)$ and $S_V^2(x;\vec M,\vec T)$ defined in
\eqref{eq:mlmc:mean_mlmc_estimator}--\eqref{eq:mlmc:variance_mlmc_variance_estimator}.

\begin{algorithm}[h!]
\caption{Coupled sampling for MLMC mean and variance estimators}\label{alg:mlmc:coupled_sampling}
\begin{algorithmic}[1]
\Require input $x$; fidelity ladder $\vec T=(T_0,\dots,T_L)$ with $2\le T_0<\cdots<T_L$;
sample counts $\vec M=(M_0,\dots,M_L)$ with $M_\ell\ge 1$ and $M_\ell\ge 2$ whenever a sample variance is formed
\Ensure samples $\{Y^{(m)}(x;T_0),V^{(m)}(x;T_0)\}$ and increments
$\{\Delta Y^{(m)}(x;T_\ell),\Delta V^{(m)}(x;T_\ell)\}$; estimators $\mathcal{Y},\mathcal{V}$ and empirical variances $S_Y^2,S_V^2$

\State Initialise arrays $\mathtt{Y}_0 \gets [ ]$, $\mathtt{V}_0 \gets [ ]$
\For{$m=1$ to $M_0$}
    \State Draw $T_0$ i.i.d.\ $\theta^{(m)} \sim \pi$ and store evaluations $\mathtt{D}^{(m)}=[D_1^{(m)},\dots,D_{T_0}^{(m)}]$ with $D_t^{(m)}=\mathscr{D}(x;\theta_t^{(m)})$
    \State Compute $Y^{(m)} \gets \text{mean}(\mathtt{D}^{(m)})$
    \State Compute $V^{(m)} \gets \text{sample\_var}(\mathtt{D}^{(m)})$ 
    \State Append $Y^{(m)}$ to $\mathtt{Y}_0$ and append $V^{(m)}$ to $\mathtt{V}_0$
\EndFor

\State Initialise arrays $\mathtt{dY}_\ell \gets [ ]$, $\mathtt{dV}_\ell \gets [ ]$ for $\ell=1,\dots,L$
\For{$\ell = 1$ to $L$}
    \For{$m=1$ to $M_\ell$}
        \State Draw $T_\ell-T_{\ell-1}$ i.i.d.\ $\theta^{(m)} \sim \pi$ and append evaluations $[D_{T_{\ell-1}+1}^{(m)},\dots,D_{T_\ell}^{(m)}]$ to $\mathtt{D}^{(m)}$
        \State Compute $Y_{\mathrm{c}}^{(m)} \gets \text{mean}([D_1^{(m)},\dots,D_{T_{\ell-1}}^{(m)}])$ and $Y_{\mathrm{f}}^{(m)} \gets \text{mean}([D_1^{(m)},\dots,D_{T_{\ell}}^{(m)}])$
        \State Compute $V_{\mathrm{c}}^{(m)} \gets \text{sample\_var}([D_1^{(m)},\dots,D_{T_{\ell-1}}^{(m)}])$ and $V_{\mathrm{f}}^{(m)} \gets \text{sample\_var}([D_1^{(m)},\dots,D_{T_{\ell}}^{(m)}])$
        \State Append $Y_{\mathrm{f}}^{(m)}-Y_{\mathrm{c}}^{(m)}$ to $\mathtt{dY}_\ell$
        \State Append $V_{\mathrm{f}}^{(m)}-V_{\mathrm{c}}^{(m)}$ to $\mathtt{dV}_\ell$
    \EndFor
\EndFor

\State Compute $\mathcal{Y} \gets \text{mean}(\mathtt{Y}_0) + \sum_{\ell=1}^L \text{mean}(\mathtt{dY}_\ell)$
\State Compute $\mathcal{V} \gets \text{mean}(\mathtt{V}_0) + \sum_{\ell=1}^L \text{mean}(\mathtt{dV}_\ell)$

\State Compute $S_Y^2 \gets \text{sample\_var}(\mathtt{Y}_0)/M_0 + \sum_{\ell=1}^L \text{sample\_var}(\mathtt{dY}_\ell)/M_\ell$
\State Compute $S_V^2 \gets \text{sample\_var}(\mathtt{V}_0)/M_0 + \sum_{\ell=1}^L \text{sample\_var}(\mathtt{dV}_\ell)/M_\ell$
\end{algorithmic}
\end{algorithm}

\subsection{Cost model}\label{subsec:mlmc:cost_model}
To compare single-fidelity and multilevel strategies we require a cost
model. Throughout we measure cost by the number of stochastic forward
passes of the trained network, that is, by the total number of calls to
$\mathscr{D}(x;\theta)$ with independently drawn $\theta \sim \pi$.
This proxy is appropriate when the dominant expense is the network
evaluation itself and when overhead (such as reductions for computing
means and variances) is negligible in comparison.

\subsubsection{Uncoupled cost baseline}\label{subsubsec:mlmc:cost_uncoupled}
A useful baseline is the cost incurred if coarse and fine estimators were
generated independently when forming each increment. In that uncoupled
setting, level $0$ requires $M_0T_0$ evaluations, while for each $\ell\ge 1$
one would generate $T_{\ell}$ evaluations for the fine estimator, giving $M_\ell T_\ell$
evaluations at level $\ell$. Hence the total uncoupled cost is
\begin{equation}\label{eq:mlmc:cost_uncoupled_additive}
    c_{\mathrm{unc}}
    :=
    M_0 T_0 + \sum_{\ell=1}^{L} M_\ell T_\ell,
\end{equation}

\subsubsection{Effective cost under mask reuse}\label{subsubsec:mlmc:cost_coupled}
In the common regime $M_0\ge \cdots \ge M_L$, a coupled single $\ell$-level estimator draws $T_\ell$
dropout realisations and reuses the first $T_{\ell-1}$ of them to compute the
coarse estimator. Therefore a coupled level-$\ell$ replicate costs $T_\ell-T_{\ell-1}$
evaluations rather than $T_\ell$. The effective coupled cost is
\begin{equation}\label{eq:mlmc:cost_coupled_additive}
    c_{\mathrm{cpl}}
    =
    T_0 M_0 + \sum_{\ell=1}^{L} (T_\ell-T_{\ell-1}) M_\ell
    =
    \sum_{\ell=0}^{L-1} T_\ell (M_\ell-M_{\ell+1}) + T_L M_L
    .
\end{equation}
So that $c_{\mathrm{cpl}}\le c_{\mathrm{unc}}$.
For allocation we work with the coupled budget $c:=c_{\mathrm{cpl}}$.
\subsection{Optimal allocation across levels}\label{subsec:mlmc:allocation}
The MLMC estimators in Section~\ref{sec:mlmc_dropout_estimators} depend
on the choice of sample counts $\vec M=(M_0,\dots,M_L)$. In this
subsection we pose and solve the allocation problem: for a fixed
computational budget, choose $\vec M$ to minimise the sampling variance
of the target estimator. We work with the coupled cost $c:=c_{\mathrm{cpl}}$ in the form
\eqref{eq:mlmc:cost_coupled_additive}. The optimisation is carried out over
continuous $M_\ell$ and then rounded to integers in implementation.

\subsubsection{Allocation for the mean estimator}\label{subsubsec:mlmc:allocation_mean}
The expectation of the empirical sampling variance of the MLMC mean estimator is given explicitly
by Lemma~\ref{lem:mlmc:mean_moments}. Writing
\begin{equation}\label{eq:mlmc:allocation_mean_level_vars}
    v_0(x) := \mathrm{Var}\big[Y(x;T_0)\big] = \frac{\mu_2(x)}{T_0},
    \qquad
    v_\ell(x) := \mathrm{Var}\big[\Delta Y(x;T_\ell)\big]
    = \mu_2(x)\left(\frac{1}{T_{\ell-1}}-\frac{1}{T_\ell}\right),\ \ell\ge 1,
\end{equation}
we have that the expectation of the empirical sampling
variance of the expectation estimator is given by
\begin{equation}\label{eq:mlmc:allocation_mean_objective}
    \mathbb{E}[S^2_Y(x;\vec M,\vec T)]
    =
    \frac{v_0(x)}{M_0} + \sum_{\ell=1}^{L}\frac{v_\ell(x)}{M_\ell}.
\end{equation}
Minimising \eqref{eq:mlmc:allocation_mean_objective} subject to the
budget constraint $c=T_0M_0+\sum_{\ell=1}^L (T_\ell-T_{\ell-1})M_\ell$
yields the allocation in Lemma~\ref{lem:mlmc:allocation_mean}.

\begin{lemma}[Optimal allocation for the mean estimator]\label{lem:mlmc:allocation_mean}
Fix $x$ and a strictly increasing fidelity ladder $2\le T_0<\cdots<T_L$.
Let $a_0:=T_0$ and $a_\ell:=T_\ell-T_{\ell-1}$ for $\ell\ge 1$, and let $c>0$ be the
coupled-cost budget $c=\sum_{\ell=0}^L a_\ell M_\ell$.
Among continuous choices $M_\ell>0$ satisfying this budget constraint,
the sampling variance $\mathbb{E}[S^2_Y(x;\vec M,\vec T)]$ is minimised by
\begin{equation}\label{eq:mlmc:allocation_mean_solution}
    M_\ell^{\star}
    =
    c \frac{\sqrt{v_\ell(x)/a_\ell}}{\sum_{k=0}^{L}\sqrt{v_k(x)a_k}},
    \qquad \ell=0,\dots,L.
\end{equation}
Equivalently, substituting the explicit $v_\ell$ and $a_\ell$ yields
\begin{equation}
    M_0^\star=\frac{c}{T_0}\left(
        1+\sum_{\ell=1}^{L}\frac{T_\ell-T_{\ell-1}}{\sqrt{T_\ell T_{\ell-1}}}
    \right)^{-1},
    \qquad
    M_i^\star=\frac{c}{\sqrt{T_i T_{i-1}}}\left(
        1+\sum_{\ell=1}^{L}\frac{T_\ell-T_{\ell-1}}{\sqrt{T_\ell T_{\ell-1}}}
    \right)^{-1},
    \quad i=1,\dots,L.
\end{equation}
\end{lemma}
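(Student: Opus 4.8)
The plan is to treat this as a smooth constrained minimisation of the separable objective \eqref{eq:mlmc:allocation_mean_objective} over the open positive orthant $\{M_\ell>0\}$ subject to the single linear budget constraint $\sum_{\ell=0}^L a_\ell M_\ell = c$, and to solve it by Lagrange multipliers. Writing the objective as $F(\vec M):=\sum_{\ell=0}^L v_\ell/M_\ell$ with all $v_\ell>0$ (which holds since $\mu_2(x)>0$ and $T_{\ell-1}<T_\ell$), each summand $v_\ell/M_\ell$ is strictly convex on $(0,\infty)$, so $F$ is strictly convex while the constraint set is an affine hyperplane; hence any stationary point of the Lagrangian is the unique global minimiser. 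I would establish this convexity remark first, so that existence and uniqueness are settled before any computation and the stationary point need not be re-examined afterwards.

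Next I would form $\mathcal{L}(\vec M,\lambda):=\sum_{\ell=0}^L v_\ell/M_\ell + \lambda\bigl(\sum_{\ell=0}^L a_\ell M_\ell - c\bigr)$ and set $\partial\mathcal{L}/\partial M_\ell = 0$, giving $-v_\ell/M_\ell^2 + \lambda a_\ell = 0$, i.e. $M_\ell = \sqrt{v_\ell/(\lambda a_\ell)}$ for every $\ell$. Substituting into the budget constraint yields $c = \sum_\ell a_\ell\sqrt{v_\ell/(\lambda a_\ell)} = \lambda^{-1/2}\sum_\ell \sqrt{a_\ell v_\ell}$, which pins down the multiplier as $\lambda^{-1/2} = c/\sum_k\sqrt{v_k a_k}$. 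Back-substituting gives $M_\ell^\star = \lambda^{-1/2}\sqrt{v_\ell/a_\ell} = c\,\sqrt{v_\ell/a_\ell}\big/\sum_k\sqrt{v_k a_k}$, which is exactly \eqref{eq:mlmc:allocation_mean_solution}. One can bypass the multiplier entirely via Cauchy--Schwarz: $\bigl(\sum_\ell v_\ell/M_\ell\bigr)\bigl(\sum_\ell a_\ell M_\ell\bigr)\ge\bigl(\sum_\ell\sqrt{v_\ell a_\ell}\bigr)^2$, with equality precisely when $M_\ell\propto\sqrt{v_\ell/a_\ell}$; this simultaneously identifies the optimiser and certifies the lower bound $F\ge c^{-1}\bigl(\sum_\ell\sqrt{v_\ell a_\ell}\bigr)^2$, offering an independent check.

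Finally I would substitute the explicit level quantities to recover the closed forms. From $v_0=\mu_2(x)/T_0$ and $a_0=T_0$ one gets $\sqrt{v_0 a_0}=\sqrt{\mu_2(x)}$ and $\sqrt{v_0/a_0}=\sqrt{\mu_2(x)}/T_0$; from $v_\ell=\mu_2(x)(T_\ell-T_{\ell-1})/(T_\ell T_{\ell-1})$ and $a_\ell=T_\ell-T_{\ell-1}$ one gets $\sqrt{v_\ell a_\ell}=\sqrt{\mu_2(x)}\,(T_\ell-T_{\ell-1})/\sqrt{T_\ell T_{\ell-1}}$ and $\sqrt{v_\ell/a_\ell}=\sqrt{\mu_2(x)}/\sqrt{T_\ell T_{\ell-1}}$. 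The common factor $\sqrt{\mu_2(x)}$ cancels between numerator and denominator, the normalising sum collapsing to $1+\sum_{\ell=1}^L (T_\ell-T_{\ell-1})/\sqrt{T_\ell T_{\ell-1}}$, which delivers the stated expressions for $M_0^\star$ and $M_i^\star$.

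I do not expect a genuine obstacle here: the problem is a textbook convex allocation, and the only points needing care are (i) justifying that the interior stationary point is the global minimiser rather than merely a critical point, handled cleanly by strict convexity or by the Cauchy--Schwarz equality case, and (ii) the bookkeeping in the final substitution, in particular tracking the cancellation of $\sqrt{\mu_2(x)}$ so that the optimal $\vec M$ is seen to depend only on the ladder $\vec T$.
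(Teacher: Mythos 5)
Your proposal is correct and follows essentially the same route as the paper's proof: strict convexity of the separable objective on the positive orthant plus the affine budget constraint, Lagrange first-order conditions giving $M_\ell=\sqrt{v_\ell/(\lambda a_\ell)}$, elimination of $\lambda$ via the budget, and substitution of the explicit $v_\ell$, $a_\ell$ with cancellation of $\sqrt{\mu_2(x)}$. The supplementary Cauchy--Schwarz argument is a nice multiplier-free certificate of optimality not present in the paper, but it serves only as a check on the same computation rather than a different method.
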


\subsubsection{Allocation for the variance estimator}\label{subsubsec:mlmc:allocation_variance}
The sampling variance of the MLMC variance estimator is given by
Theorem~\ref{thm:mlmc:variance_moments}. Writing
\begin{equation}\label{eq:mlmc:allocation_variance_level_vars_general}
    w_0(x) := \mathrm{Var}\big[V(x;T_0)\big],
    \qquad
    w_\ell(x) := \mathrm{Var}\big[\Delta V(x;T_\ell)\big],\ \ell\ge 1,
\end{equation}
we may express the objective as
\begin{equation}\label{eq:mlmc:allocation_variance_objective}
    \mathbb{E}[S^2_V(x;\vec M,\vec T)]
    =
    \frac{w_0(x)}{M_0} + \sum_{\ell=1}^{L}\frac{w_\ell(x)}{M_\ell},
\end{equation}
under the same budget constraint. In general $w_\ell(x)$ depends on
$\mu_2(x)$ and $\mu_4(x)$, so in practice one either estimates
$\{w_\ell(x)\}$ from pilot runs or uses a moment closure such as
$\mu_4(x)=3\mu_2^2(x)$, equation \eqref{eq:mlmc:variance_level_vars_kappa0}. The corresponding optimal
allocations is given in Lemma~\ref{lem:mlmc:allocation_variance_general}.

\begin{lemma}[Optimal allocation for the variance estimator]\label{lem:mlmc:allocation_variance_general}
Fix $x$ and a fidelity ladder $2\le T_0<\cdots<T_L$. Let $a_0:=T_0$ and
$a_\ell:=T_\ell-T_{\ell-1}$ for $\ell\ge 1$, and let $c>0$ be the coupled-cost
budget $c=\sum_{\ell=0}^L a_\ell M_\ell$. Among continuous choices $M_\ell>0$ satisfying the budget constraint, the expectation of the empirical sampling
variance of the variance estimator,
\begin{equation}
    \mathbb{E}\big[S^2_V(x;\vec M,\vec T)\big]
    =
    \frac{w_0(x)}{M_0}+\sum_{\ell=1}^L\frac{w_\ell(x)}{M_\ell}
\end{equation}
is minimised by
\begin{equation}\label{eq:mlmc:allocation_variance_general_solution}
    M_\ell^{\star}
    =
    c \frac{\sqrt{w_\ell(x)/a_\ell}}{\sum_{k=0}^{L}\sqrt{w_k(x)a_k}},
    \qquad \ell=0,\dots,L.
\end{equation}
\end{lemma}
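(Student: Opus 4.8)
The plan is to treat this as a standard equality-constrained convex optimisation, identical in structure to the mean-estimator allocation in Lemma~\ref{lem:mlmc:allocation_mean} with the level variances $v_\ell$ replaced by $w_\ell$. The objective $\Phi(\vec M):=\sum_{\ell=0}^L w_\ell(x)/M_\ell$ is to be minimised over $M_\ell>0$ subject to the single affine budget constraint $\sum_{\ell=0}^L a_\ell M_\ell=c$. Since each map $M_\ell\mapsto w_\ell(x)/M_\ell$ is strictly convex on $(0,\infty)$ whenever $w_\ell(x)>0$ (and the $w_\ell(x)$ are non-negative, being variances), $\Phi$ is convex and the feasible set is a convex affine slice, so any stationary point of the Lagrangian is automatically the unique global minimiser.

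First I would form the Lagrangian $\mathcal{L}(\vec M,\lambda)=\sum_{\ell}w_\ell/M_\ell+\lambda\big(\sum_\ell a_\ell M_\ell-c\big)$ and impose the stationarity conditions $\partial\mathcal{L}/\partial M_\ell=-w_\ell/M_\ell^2+\lambda a_\ell=0$. Solving gives $M_\ell=\lambda^{-1/2}\sqrt{w_\ell/a_\ell}$, so that at the optimum $M_\ell\propto\sqrt{w_\ell/a_\ell}$, which is precisely the classical MLMC scaling $M_\ell\propto\sqrt{V_\ell/C_\ell}$ with per-replicate cost $a_\ell$. Substituting this proportional form into the budget constraint yields $c=\lambda^{-1/2}\sum_k a_k\sqrt{w_k/a_k}=\lambda^{-1/2}\sum_k\sqrt{w_k a_k}$, which I solve for $\lambda^{-1/2}=c/\sum_k\sqrt{w_k a_k}$. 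Back-substitution then produces the claimed formula \eqref{eq:mlmc:allocation_variance_general_solution}.

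Alternatively, and to sidestep any separate convexity argument, I would derive the same result directly from the Cauchy--Schwarz inequality. Writing $\sqrt{w_\ell a_\ell}=\sqrt{w_\ell/M_\ell}\,\sqrt{a_\ell M_\ell}$ and applying the inequality gives $\big(\sum_\ell\sqrt{w_\ell a_\ell}\big)^2\le\big(\sum_\ell w_\ell/M_\ell\big)\big(\sum_\ell a_\ell M_\ell\big)=c\,\Phi(\vec M)$, hence the sharp lower bound $\Phi(\vec M)\ge c^{-1}\big(\sum_\ell\sqrt{w_\ell a_\ell}\big)^2$. Equality in Cauchy--Schwarz holds exactly when the two vectors are proportional, i.e.\ when $w_\ell/(a_\ell M_\ell^2)$ is constant in $\ell$, recovering $M_\ell\propto\sqrt{w_\ell/a_\ell}$; imposing the budget then fixes the constant of proportionality and reproduces \eqref{eq:mlmc:allocation_variance_general_solution}.

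There is no substantial analytical obstacle here: the computation is routine and mirrors Lemma~\ref{lem:mlmc:allocation_mean} once $v_\ell$ is swapped for $w_\ell$. The only points requiring care are (i) confirming that the stationary point is a genuine global minimum rather than a saddle, which the convexity of $\Phi$ (equivalently, the equality characterisation in Cauchy--Schwarz) settles cleanly, and (ii) the mild nondegeneracy assumption that each $w_\ell(x)>0$, so that the formula is well defined and $M_\ell^\star>0$; should some $w_\ell(x)=0$, that level contributes nothing to the objective and may simply be removed from the ladder.
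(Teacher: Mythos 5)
Your primary argument (Lagrangian with multiplier $\lambda$, stationarity $-w_\ell/M_\ell^2+\lambda a_\ell=0$, strict convexity to upgrade the stationary point to the unique global minimiser, then the budget constraint to fix $\lambda$) is exactly the proof given in the paper's appendix, and it is correct. Your Cauchy--Schwarz alternative, bounding $\bigl(\sum_\ell\sqrt{w_\ell a_\ell}\bigr)^2\le c\,\Phi(\vec M)$ with equality iff $M_\ell\propto\sqrt{w_\ell/a_\ell}$, is a valid and slightly more elementary route that the paper does not take: it delivers global optimality and the sharp minimal value $c^{-1}\bigl(\sum_\ell\sqrt{w_\ell a_\ell}\bigr)^2$ in one stroke, without any appeal to convexity or KKT theory, at the cost of having to guess the right factorisation; your handling of the degenerate case $w_\ell(x)=0$ is also a sensible refinement that the paper leaves implicit.
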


\subsection{Choosing the fidelity ladder}\label{subsec:mlmc:fidelity_ladder}
The multilevel constructions require a strictly increasing sequence of
fidelities $\vec T=(T_0,\dots,T_L)$. In our setting $T_\ell$ is the
number of stochastic forward passes used inside the single-replicate
estimators $Y(x,T_\ell)$ and $V(x,T_\ell)$, so the ladder controls the
variance--cost tradeoff through the coupled increment variances in
Lemmas~\ref{lem:mlmc:mean_moments} and Theorem~\ref{thm:mlmc:variance_moments}.
We discuss practical choices of $\vec T$ that are consistent with the
couplings in Sections~\ref{subsubsec:mlmc:mean_coupling} and
\ref{subsubsec:mlmc:variance_coupling}.

A convenient default is a geometric ladder
\begin{equation}\label{eq:mlmc:fidelity_geometric}
    T_\ell := \left\lceil T_0 r^\ell \right\rceil,
    \qquad \ell=0,\dots,L,
\end{equation}
with a fixed ratio $r>1$ and smallest fidelity $T_0$. This choice keeps
neighbouring fidelities well separated while limiting the number of
levels. In the present work we estimate both the predictive mean and
the predictive variance, so we require $T_0\ge 2$ in order for the
unbiased sample variance $V(x,T_0)$ to be well-defined. Moreover, the
variance coupling \eqref{eq:mlmc:variance_update} involves the sample
variance of the \emph{new} block $V_{T_{\ell-1}+1:T_\ell}(x)$, hence we
also impose
\begin{equation}\label{eq:mlmc:fidelity_blocksize_constraint}
    T_\ell-T_{\ell-1} \ge 2,
    \qquad \ell=1,\dots,L,
\end{equation}
so that each refinement step introduces at least two new dropout draws.
A simple and robust special case is the dyadic ladder
\begin{equation}\label{eq:mlmc:fidelity_dyadic}
    T_0 = 2,
    \qquad
    T_\ell = 2^{\ell+1}\quad(\ell=0,\dots,L),
\end{equation}
which satisfies \eqref{eq:mlmc:fidelity_blocksize_constraint} exactly.

Given $T_0$ and $r$, the final level $L$ is selected to reach a desired
maximum inner fidelity $T_{\max}$, for example the number of dropout
passes one would use in a conventional MC-dropout baseline. Concretely,
we take
\begin{equation}\label{eq:mlmc:fidelity_choose_L}
    L := \max\{\ell \in \mathbb{N}_0 : T_\ell \le T_{\max}\},
\end{equation}
so that $T_L \le T_{\max} < T_{L+1}$. In applications $T_{\max}$ is set
by an available evaluation budget per input or by a target error level
for the inner estimators $Y(x,T)$ and $V(x,T)$.

The choice of the ratio $r$ is primarily a practical modelling choice.
If $r$ is too close to $1$ then $L$ becomes large, which increases
bookkeeping and may lead to levels with very small $M_\ell$ after
allocation (in particular, if one wishes to estimate the sampling
variances via $S_Y^2$ and $S_V^2$, one typically requires $M_\ell\ge 2$
to form a non-degenerate sample variance). If $r$ is too large then
refinements add many new samples at once and the ladder behaves more
like a small number of coarse-to-fine jumps. In the numerical results
below we therefore adopt either the dyadic ladder
\eqref{eq:mlmc:fidelity_dyadic} or the geometric ladder
\eqref{eq:mlmc:fidelity_geometric} with a moderate ratio (for example,
$r=2$), which respects \eqref{eq:mlmc:fidelity_blocksize_constraint} and
keeps $L$ small.

Finally, the ladder may also be selected adaptively using pilot samples.
A standard stopping criterion is to increase $L$ until the empirical
contribution of the finest correction is negligible compared to the
current estimator variance, for example until both
\begin{equation}\label{eq:mlmc:fidelity_stopping_criterion}
    \frac{\mathcal{S}^2_{\Delta Y}(x;M_L,T_L)}{M_L}
    \ll
    S_Y^2(x;\vec M,\vec T),
    \qquad
    \frac{\mathcal{S}^2_{\Delta V}(x;M_L,T_L)}{M_L}
    \ll
    S_V^2(x;\vec M,\vec T),
\end{equation}
so that further refinement would not materially reduce the sampling
variability at the current budget. This criterion is consistent with
the variance decompositions in Lemma~\ref{lem:mlmc:mean_moments} and
Theorem~\ref{thm:mlmc:variance_moments}, and it naturally couples the
choice of $L$ with the allocation procedure in
Section~\ref{subsec:mlmc:allocation}.

\section{Numerical experiments}\label{sec:numerical_experiments}
This section evaluates the proposed multilevel estimators on two
benchmark PINNs--Uzawa problems. The aim is to (i) verify the sampling
theory of Sections~\ref{sec:mlmc_dropout_estimators}--\ref{sec:mlmc_dropout_computation},
(ii) quantify the variance--cost tradeoff of MLMC relative to
single-fidelity MC-dropout, and (iii) test the allocation formulas in
Section~\ref{subsec:mlmc:allocation} against brute-force enumeration
under fixed cost.

\subsection{Experimental setup and common definitions}\label{subsec:numerics:setup}
All numerical results in this section are produced by repeated
test-time evaluation of a fixed, trained surrogate network with dropout
kept active. A single \emph{network evaluation} is one stochastic
forward pass $x \mapsto \mathscr{D}(x;\theta)$ with an independently
resampled realisation $\theta\sim\pi$ (Section~\ref{subsec:mlmc:random_predictor}).
All reported estimators are computed pointwise in $x$, and any norms or
regression fits are subsequently applied to these pointwise quantities.
Unless stated otherwise, the same implementation is used for both
single-fidelity Monte Carlo and multilevel Monte Carlo, with coupling
implemented within each level increment as in
Algorithm~\ref{alg:mlmc:coupled_sampling}.

\subsubsection{Estimators and norms reported}\label{subsubsec:numerics:estimators_norms}
Fix an evaluation grid $\{x_i\}_{i=1}^{N_x}\subset(0,1)$. For a chosen
inner fidelity $T\ge 2$ and outer sample count $M\ge 1$, we compute the
single-fidelity estimators
\[
Y^{(m)}(x_i,T)
=
\frac{1}{T}\sum_{t=1}^{T}\mathscr{D}(x_i;\theta_{t}^{(m)}),
\qquad
V^{(m)}(x_i,T)
=
\frac{1}{T-1}\sum_{t=1}^{T}\big(\mathscr{D}(x_i;\theta_{t}^{(m)})-Y^{(m)}(x_i,T)\big)^2,
\]
with $\theta_{t}^{(m)}\overset{\mathrm{iid}}{\sim}\pi$, and then form the
outer averages
\[
\overline{Y}(x_i;M,T)=\frac{1}{M}\sum_{m=1}^{M}Y^{(m)}(x_i,T),
\qquad
\overline{V}(x_i;M,T)=\frac{1}{M}\sum_{m=1}^{M}V^{(m)}(x_i,T),
\]
together with their empirical outer-variance estimators
\begin{equation}
    \begin{split}
        \mathcal{S}^2_Y(x_i;M,T)
=&
\frac{1}{M-1}\sum_{m=1}^{M}\big(Y^{(m)}(x_i,T)-\overline{Y}(x_i;M,T)\big)^2,
\\
\mathcal{S}^2_V(x_i;M,T)
=&
\frac{1}{M-1}\sum_{m=1}^{M}\big(V^{(m)}(x_i,T)-\overline{V}(x_i;M,T)\big)^2.
    \end{split}
\end{equation}
For multilevel experiments we use a fidelity ladder $\vec T=(T_0,\dots,T_L)$
and sample counts $\vec M=(M_0,\dots,M_L)$, and compute the coupled MLMC
estimators $\mathcal{Y}(x_i;\vec M,\vec T)$ and $\mathcal{V}(x_i;\vec M,\vec T)$
and the associated empirical variance estimators
$S_Y^2(x_i;\vec M,\vec T)$ and $S_V^2(x_i;\vec M,\vec T)$ as defined in
Section~\ref{sec:mlmc_dropout_estimators} and implemented in
Algorithm~\ref{alg:mlmc:coupled_sampling}.

When a scalar summary over $x$ is required (for example in variance--fidelity
plots or fixed-cost surfaces), we report the discrete $L^1$-proxy
\begin{equation}\label{eq:numerics:L1_proxy}
\|g\|_{L^1}
\approx
\sum_{i=1}^{N_x} |g(x_i)| \Delta x,
\qquad
\Delta x := x_{i+1}-x_i,
\end{equation}
with $\Delta x$ constant for a uniform grid. For log--log rate plots, we
fit a linear model to $\log \|g\|_{L^1}$ against $\log T$ by least squares
and report the estimated slope together with its confidence interval as
indicated in the corresponding figure captions.

\subsubsection{Cost accounting}\label{subsubsec:numerics:cost_accounting}
We measure computational cost by the total number of stochastic forward
passes of the trained network, that is, by the total number of calls to
$\mathscr{D}(x;\theta)$ with independently drawn $\theta\sim \pi$
(Section~\ref{subsec:mlmc:cost_model}). For a single-fidelity experiment
with parameters $(M,T)$ the cost is $MT$ evaluations per input point.

For multilevel experiments we distinguish an uncoupled baseline cost,
in which the coarse and fine estimators in each increment are generated
independently, denoted $c_{\mathrm{unc}}$, from the effective coupled cost in which the first
$T_{\ell-1}$ samples are reused inside the $T_\ell$-sample estimator, denoted $c_{\mathrm{cpl}}$. 

In the fixed-cost allocation studies, we enumerate integer tuples
$\vec M$ satisfying the chosen cost constraint (and any additional
feasibility conditions such as $M_\ell\ge 2$ when a sample variance at
level $\ell$ is required) and then compare empirical minima of the
resulting estimator-variance surfaces with the theoretical allocations
from Section~\ref{subsec:mlmc:allocation}. Unless explicitly stated
otherwise, all ``fixed cost'' multilevel experiments hold for
$c_{\mathrm{cpl}}$, equation \eqref{eq:mlmc:cost_coupled_additive}, fixed.

\subsection{Forward-PINNs benchmark: boundary layer problem}\label{subsec:numerics:forward_pinn}
We begin with a forward model problem for which an explicit reference
solution is available. This allows us to (i) visualise MC-dropout
confidence bands, (ii) verify the single-fidelity sampling rates from
Section~\ref{sec:mlmc_dropout_estimators}, and (iii) test the fixed-cost
allocation predictions from Section~\ref{sec:mlmc_dropout_computation}.

\subsubsection{Problem definition and reference solution}\label{subsubsec:numerics:forward_problem}
For a parameter $\epsilon>0$, we consider the boundary-layer problem
\begin{equation}\label{eq:numerics:boundary_layer_problem}
    u(x) - \epsilon^2 u''(x) = 1 \quad \text{for } x\in(0,1),
    \qquad
    u(0)=u(1)=0.
\end{equation}
The exact solution is
\begin{equation}\label{eq:numerics:boundary_layer_exact_solution}
    u(x)
    =
    1 - \dfrac{e^{x/\epsilon} + e^{(1-x)/\epsilon}}{1 + e^{1/\epsilon}}.
\end{equation}

\subsubsection{Training and architecture}\label{subsubsec:numerics:forward_training}
We train a surrogate network $\mathscr{D}$ using a PINNs--Uzawa setup for the
boundary-layer problem \eqref{eq:numerics:boundary_layer_problem}, with dropout
active in $L_d$ layers. At test time, the trained deterministic
weights are held fixed and the only source of randomness is from $\theta\sim \pi$, so each evaluation $\mathscr{D}(x;\theta)$
corresponds to one stochastic forward pass as in
Section~\ref{subsec:mlmc:random_predictor}. The training hyperparameters
used in this experiment are summarised in Table~\ref{tab:archtecture_1}.

\begin{table}[hbtp]
    \centering
    \begin{tabular}{|c|c|c|c|c|c|c|c|c|c|c|c|c|}
    \hline\hline
        $N_{\mathrm{in}}$
        & $N_{\mathrm{out}}$
        & $N_{\mathrm{width}}$
        & $L_d$
        & $L_h$
        & $p_{\mathrm{drop}}$
        & $N_{\mathrm{SGD}}$
        & $N_{\mathrm{Uz}}$
        & $N_{\mathrm{repeats}}$
        & $\gamma$
        & $\rho$
        & $\eta$
        & $\epsilon$
        \\
        \hline
        1
        & 1
        & 64
        & 3
        & 0
        & 0.1
        & 50,000
        & 50
        & 5
        & 100
        & 0.01
        & 0.5
        & 1.0
        \\
        \hline
    \end{tabular}
    \caption{Hyperparameters used to train the forward (boundary-layer) surrogate.
    A single stochastic forward pass corresponds to evaluating
    $\mathscr{D}(x;\theta)$ with a freshly drawn random variable $\theta \sim \pi$.
    The parameters $N_{\mathrm{in}}, N_{\mathrm{out}}, N_{\mathrm{width}}, L_d, L_h$, and
    $p_{\mathrm{drop}}$ are defined in the network architecture described in
    Section~\ref{subsec:mlmc:random_predictor}.
    The parameters $N_{\mathrm{SGD}}, N_{\mathrm{Uz}}$, and $N_{\mathrm{repeats}}$
    denote the total number of training epochs, the number of epochs per Uzawa update,
    and the number of repeats used to compute the PINNs term each epoch, respectively.
    The parameter $\gamma$ weights the boundary-condition term in the loss functional,
    $\rho$ is the Uzawa update parameter, and $\eta$ is the optimizer learning rate.}
    \label{tab:archtecture_1}
\end{table}
\subsubsection{Single-fidelity behaviour as inner fidelity increases}\label{subsubsec:numerics:forward_single_fidelity_T}
We first illustrate how the inner fidelity $T$ affects the single-fidelity
Monte Carlo estimators. For each $x$ we compute $Y(x,T)$ and $V(x,T)$ from
\eqref{eq:mlmc:single_fidelity_estimators} with $M=1$ outer replicate,
and plot the mean estimator together with the pointwise bands
$Y(x,T)\pm \sqrt{V(x,T)}$ and $Y(x,T)\pm 2\sqrt{V(x,T)}$. The results in
Figure~\ref{fig:MLMC_confidence_intervals} show that increasing $T$
reduces visible sampling noise and yields smoother uncertainty bands.

\begin{figure}[htbp]
\centering
\begin{subfigure}{0.4\textwidth}
    \includegraphics[width=\textwidth]{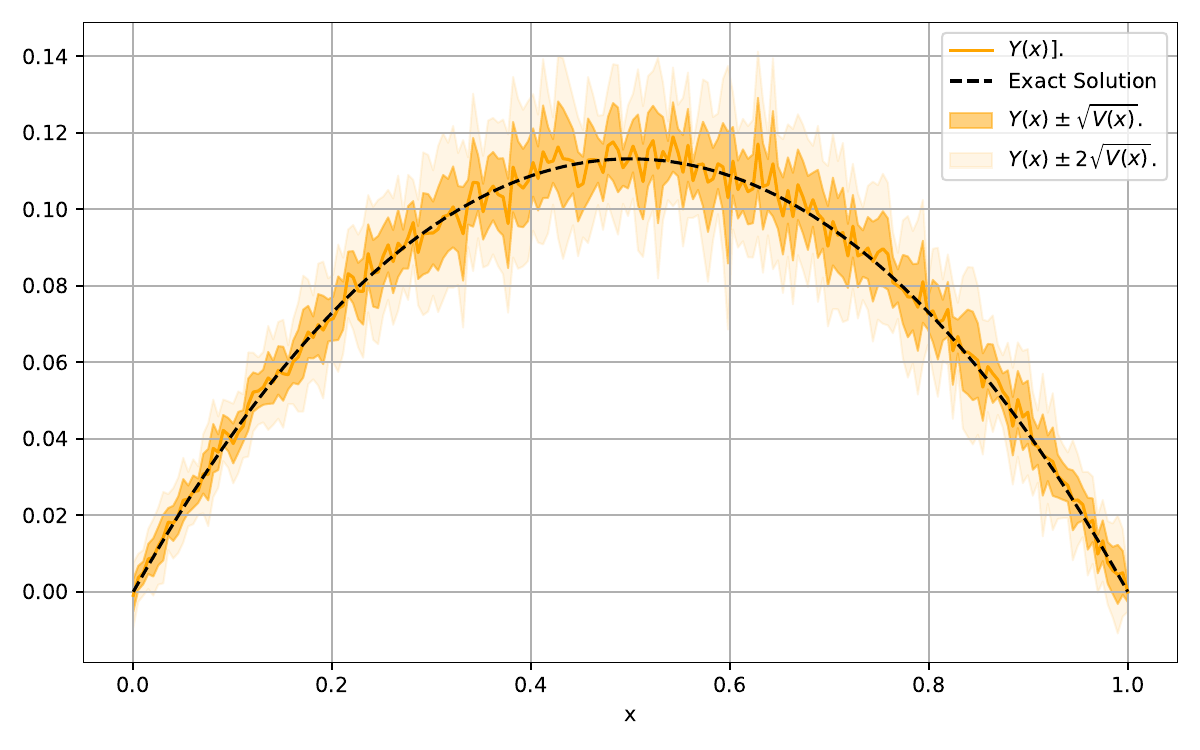}
    \caption{$T=10$.}
    \label{fig:confidence_intervals_10}
\end{subfigure}
\hfill
\begin{subfigure}{0.4\textwidth}
    \includegraphics[width=\textwidth]{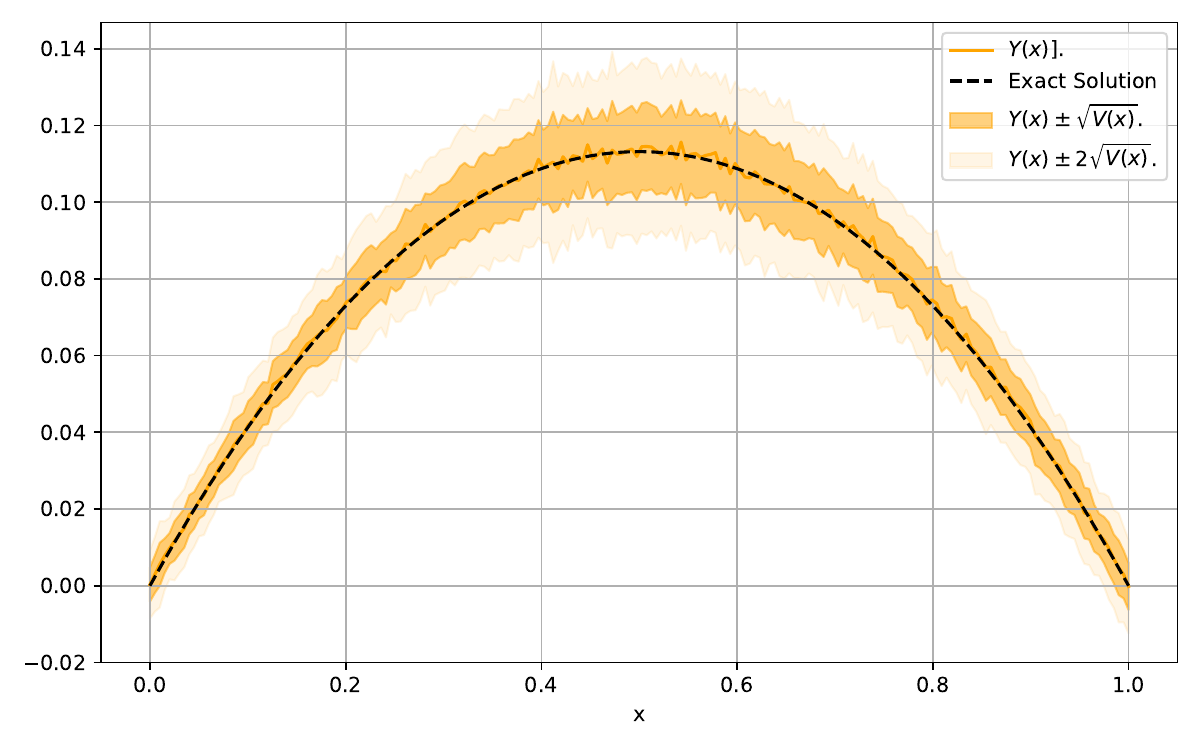}
    \caption{$T=100$.}
    \label{fig:confidence_intervals_100}
\end{subfigure}
\hfill
\begin{subfigure}{0.4\textwidth}
    \includegraphics[width=\textwidth]{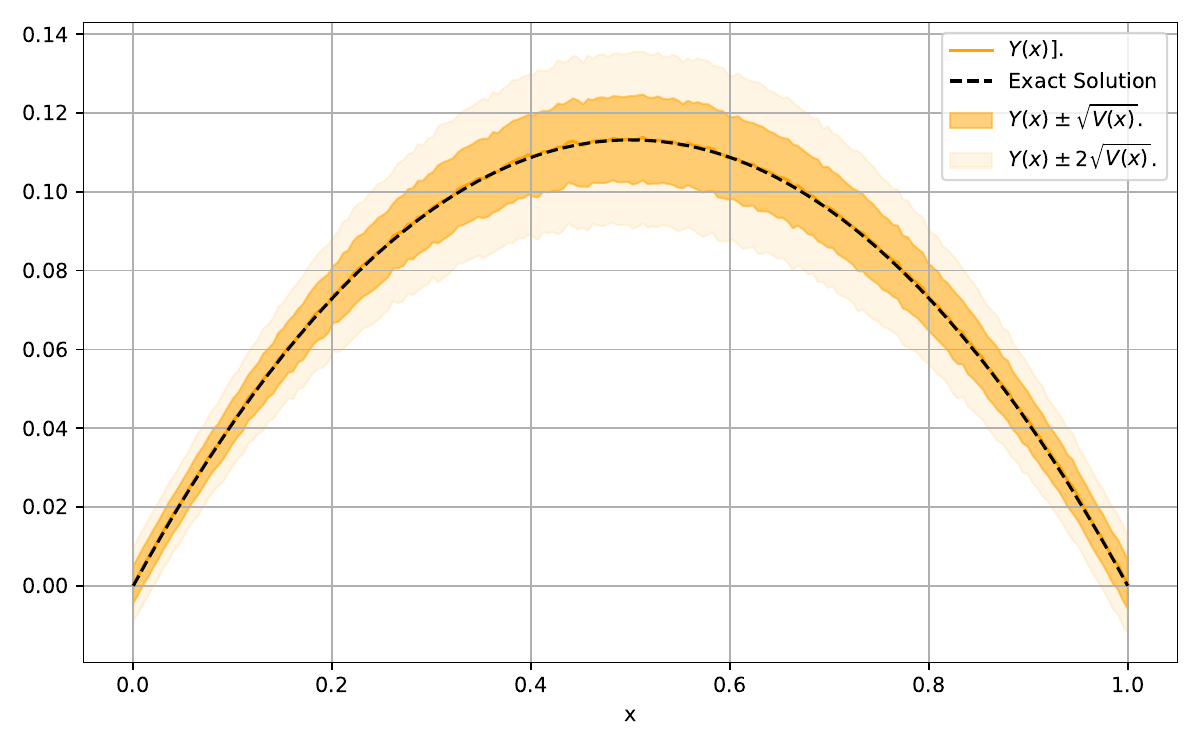}
    \caption{$T=1000$.}
    \label{fig:confidence_intervals_1000}
\end{subfigure}
\hfill
\begin{subfigure}{0.4\textwidth}
    \includegraphics[width=\textwidth]{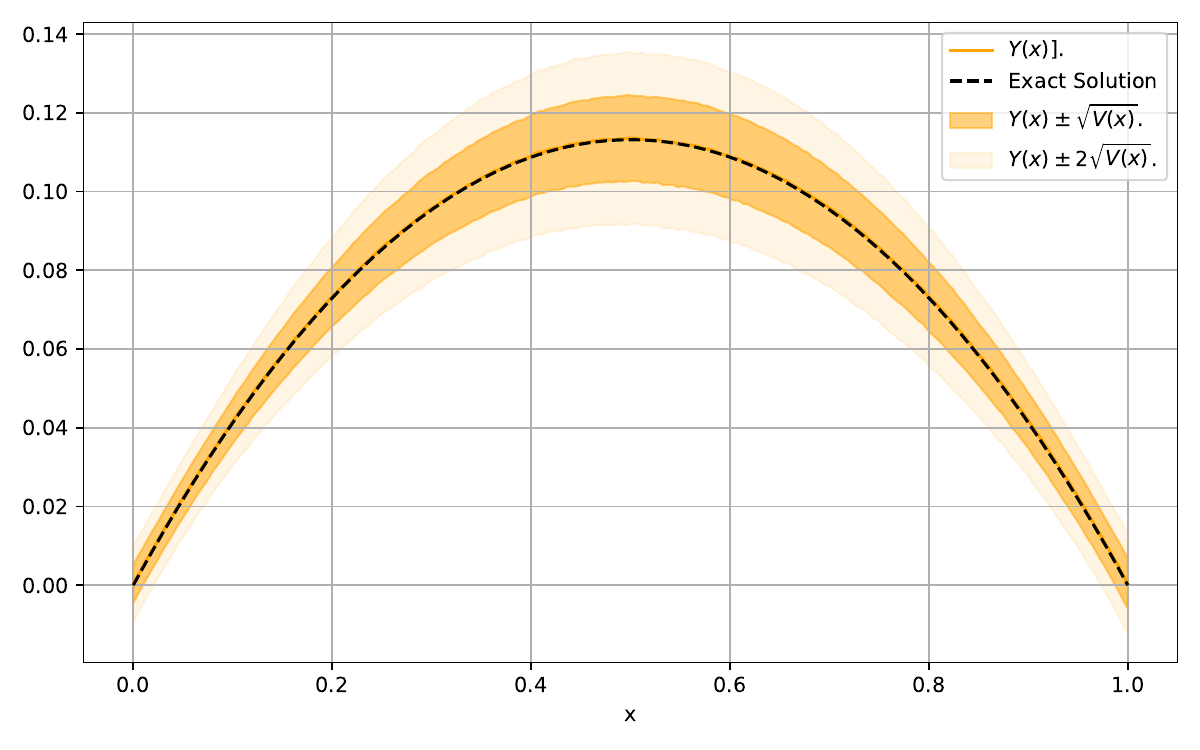}
    \caption{$T=10000$.}
    \label{fig:confidence_intervals_10000}
\end{subfigure}
\caption{Single-fidelity single-level MC-dropout estimators at increasing inner fidelity $T$.
Each subplot shows the estimated mean $Y(x,T)$ and the confidence bands
$Y(x,T)\pm \sqrt{V(x,T)}$ and $Y(x,T)\pm 2\sqrt{V(x,T)}$. The exact
solution \eqref{eq:numerics:boundary_layer_exact_solution} is shown as
a dashed curve.}
\label{fig:MLMC_confidence_intervals}
\end{figure}

\subsubsection{Empirical verification of sampling-variance rates}\label{subsubsec:numerics:forward_variance_rates}
We next verify the predicted $T^{-1}$ scaling of the sampling variances.
For each $T$ we generate $M_0=10$ independent outer replicates of the
single-fidelity estimators and compute the empirical estimator variances
$\mathcal{S}^2_Y(\cdot;M_0,T)$ and $\mathcal{S}^2_V(\cdot;M_0,T)$ from
\eqref{eq:mlmc:outer_empirical_estimators}. We report their discrete
$L^1$ norms over the evaluation grid as described in
Section~\ref{subsec:numerics:setup}.
Figure~\ref{fig:estimator_var_v_fid} shows approximate log--log slopes
close to $-1$, consistent with Lemma~\ref{lem:mlmc:single_fidelity_moments}.

\begin{figure}[htbp]
\centering
\begin{subfigure}{0.4\textwidth}
    \includegraphics[width=\textwidth]{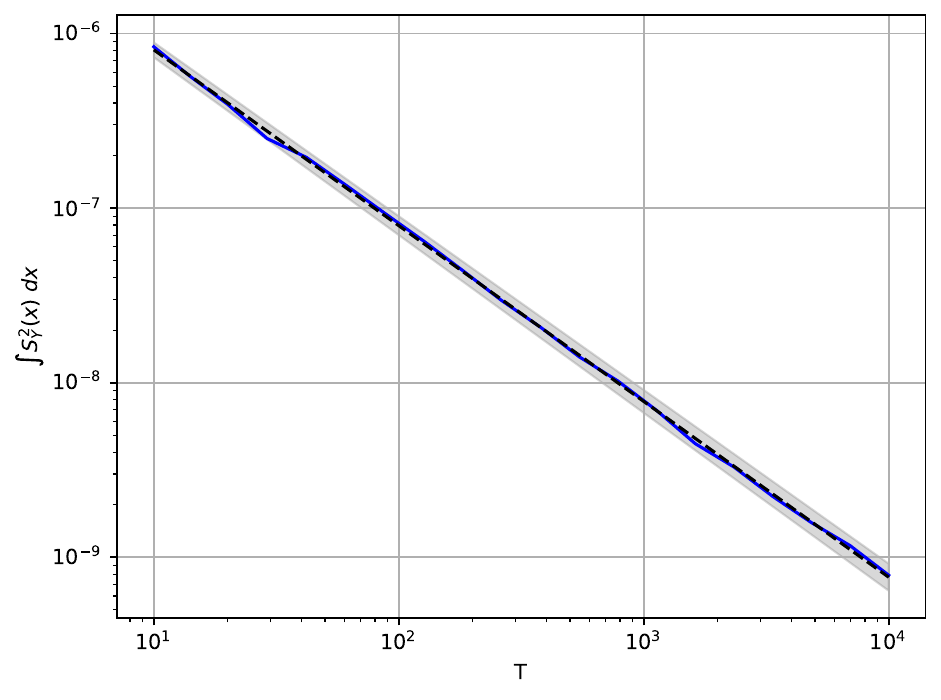}
    \caption{$\|\mathcal{S}^2_{Y}\|_{L^1}$ versus $T$.}
    \label{fig:exp_estimator_var_v_fid}
\end{subfigure}
\hfill
\begin{subfigure}{0.4\textwidth}
    \includegraphics[width=\textwidth]{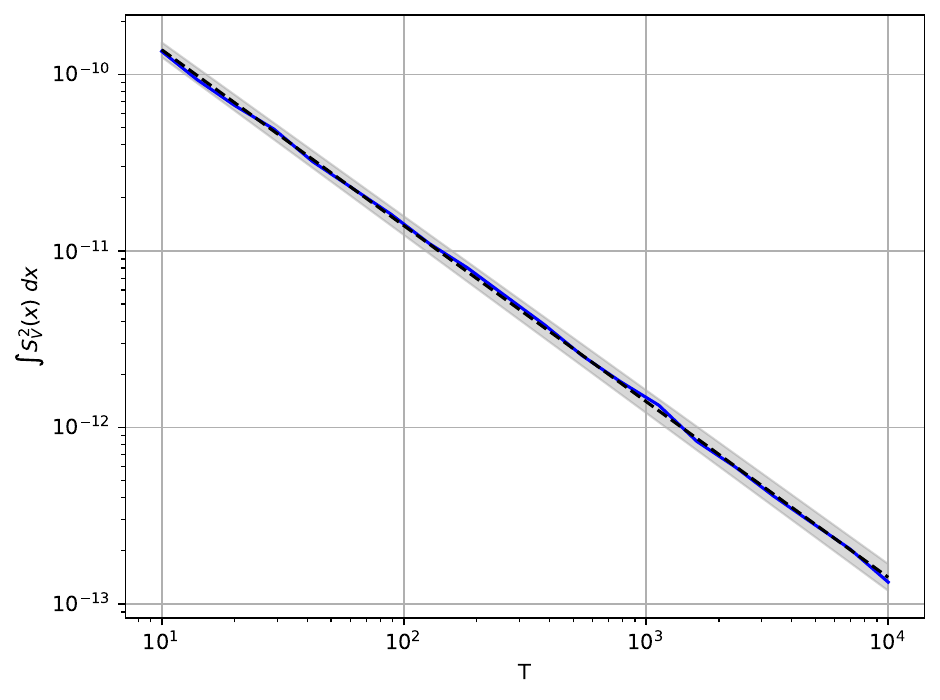}
    \caption{$\|\mathcal{S}^2_{V}\|_{L^1}$ versus $T$.}
    \label{fig:var_estimator_var_v_fid}
\end{subfigure}
\caption{Empirical estimator variances versus inner fidelity $T$ in the forward problem.
Each subplot reports the discrete $L^1$ norm of the empirical outer-sample variance
for the corresponding estimator, together with a log--log linear fit and a 99\% confidence
interval for the fitted slope.}
\label{fig:estimator_var_v_fid}
\end{figure}

\subsubsection{Sensitivity to dropout probability}\label{subsubsec:numerics:forward_pdrop_sweep}
Fixing $T_0=10^4$ and $M_0=10$, we study how the empirical estimator
variances depend on the dropout probability $p_{\mathrm{drop}}$.
Figure~\ref{fig:estimator_var_v_pdrop} shows that the variance decreases
towards zero as $p_{\mathrm{drop}}\to 0$ (deterministic evaluation) and
increases as $p_{\mathrm{drop}}\to 1$.
To parameterise the trend we fit the generalised logit model
\begin{equation}\label{eq:numerics:logit_model}
    \log \|S_{\cdot}^2\|_{L^1}
    \approx
    \alpha_1 + \alpha_2 \log p_{\mathrm{drop}}^{\alpha_4}
    - \alpha_3 \log \big(1- p_{\mathrm{drop}}^{\alpha_4}\big),
\end{equation}
and report fitted parameters and confidence intervals in
Table~\ref{tab:logit_vals}. This experiment is included as a secondary
diagnostic and is not used in the MLMC constructions, where fidelity is
controlled by $T$.

\begin{figure}[htbp]
\centering
\begin{subfigure}{0.4\textwidth}
    \includegraphics[width=\textwidth]{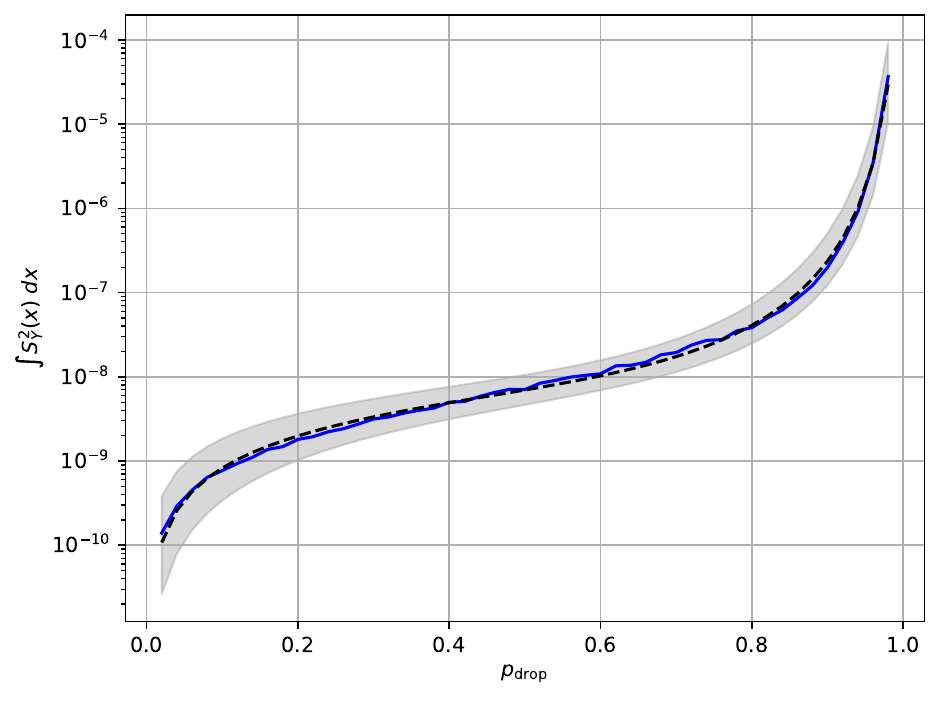}
    \caption{$\|\mathcal{S}^2_{Y}\|_{L^1}$ versus $p_{\mathrm{drop}}$.}
    \label{fig:exp_estimator_var_v_pdrop}
\end{subfigure}
\hfill
\begin{subfigure}{0.4\textwidth}
    \includegraphics[width=\textwidth]{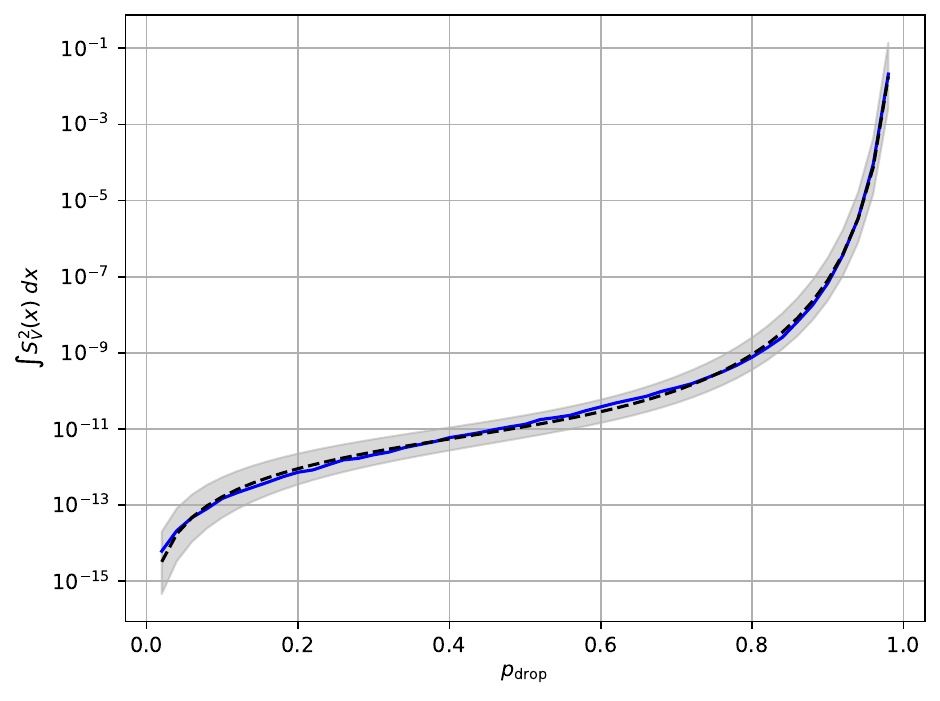}
    \caption{$\|\mathcal{S}^2_{V}\|_{L^1}$ versus $p_{\mathrm{drop}}$.}
    \label{fig:var_estimator_var_v_pdrop}
\end{subfigure}
\caption{Dependence of empirical estimator variances on dropout probability $p_{\mathrm{drop}}$
for the forward problem at fixed $T_0=10^4$ and $M_0=10$. The dashed curves show
the fitted model \eqref{eq:numerics:logit_model} with 99\% confidence bands.}
\label{fig:estimator_var_v_pdrop}
\end{figure}

\begin{table}[htbp]
    \centering
    \begin{tabular}{|c|c|c|c|c|}
    \hline\hline
    & $\alpha_1$ & $\alpha_2$ & $\alpha_3$ & $\alpha_4$
    \\
    \hline
    $Y$
    & -18.002 (-18.143, -17.862)
    & 0.250 (0.223, 0.277)
    & 3.259 (3.037, 3.481)
    & 5.059 (4.373, 5.744)
    \\
    $V$
    & -23.797 (-24.064, -23.531)
    & 0.513 (0.476, 0.550)
    & 8.307 (7.917, 8.697)
    & 4.766 (4.306, 5.226)
    \\
    \hline
    \end{tabular}
    \caption{Fitted parameters for the model \eqref{eq:numerics:logit_model} in the forward problem.
    Parentheses denote 99\% confidence intervals.}
    \label{tab:logit_vals}
\end{table}

\subsubsection{Fixed-cost multilevel allocation test}\label{subsubsec:numerics:forward_fixed_cost_allocation}
We now test the multilevel allocation theory under a fixed coupled-cost
budget. We consider the three-level ladder
\begin{equation}\label{eq:numerics:forward_fixed_cost_ladder}
    L=2,
    \qquad
    \vec T = (T_0,T_1,T_2) = (4,8,16),
    \qquad
    c_{\mathrm{cpl}} = 1000.
\end{equation}
For each integer allocation $\vec M=(M_0,M_1,M_2)$ with $M_\ell\ge 2$ that
satisfies the coupled-cost constraint (Subsubsection~\ref{subsubsec:numerics:cost_accounting})
\begin{equation}\label{eq:numerics:forward_fixed_cost_constraint}
    T_0 M_0 + (T_1-T_0) M_1 + (T_2-T_1) M_2 = c_{\mathrm{cpl}},
\end{equation}
we compute the empirical MLMC variance estimators $S_Y^2(\cdot;\vec
M,\vec T)$ and $S_V^2(\cdot;\vec M,\vec T)$ from
\eqref{eq:mlmc:mean_mlmc_variance_estimator} and
\eqref{eq:mlmc:variance_mlmc_variance_estimator} using
Algorithm~\ref{alg:mlmc:coupled_sampling}. Since
\eqref{eq:numerics:forward_fixed_cost_constraint} leaves two degrees
of freedom, we parameterise feasible allocations by $(M_1,M_2)$, with
$M_0$ determined by the budget, and visualise the reciprocal surfaces
$1/\|S_Y^2\|_{L^1}$ and $1/\|S_V^2\|_{L^1}$ to highlight
minima. Figure~\ref{fig:var_estimator_fixed_cost} compares empirically
best allocations with the continuous optimal allocations derived in
Section~\ref{subsec:mlmc:allocation}, namely
Lemma~\ref{lem:mlmc:allocation_mean} for the mean estimator and
Lemma~\ref{lem:mlmc:allocation_variance_general} for the variance
estimator (the latter under the zero-excess-kurtosis closure, equation
\eqref{eq:mlmc:variance_level_vars_kappa0}, used in its derivation).

For the ladder $\vec T=(4,8,16)$ and cost $c_{\mathrm{cpl}}=1000$, the continuous
optimal allocations predicted by the theory are independent of $x$ for the mean
estimator, and (under the zero-excess-kurtosis closure) independent of the
underlying distribution up to an overall factor for the variance estimator. In
particular, the mean-allocation formula yields
$(M_0^\star,M_1^\star,M_2^\star)=(1000/12, 1000/24, 1000/48)$, while the
variance-allocation formula under the closure yields
$(M_0^\star,M_1^\star,M_2^\star)\approx(82.64, 44.17, 19.76)$.

\begin{figure}[htbp]
\centering
\begin{subfigure}{0.4\textwidth}
    \includegraphics[width=\textwidth]{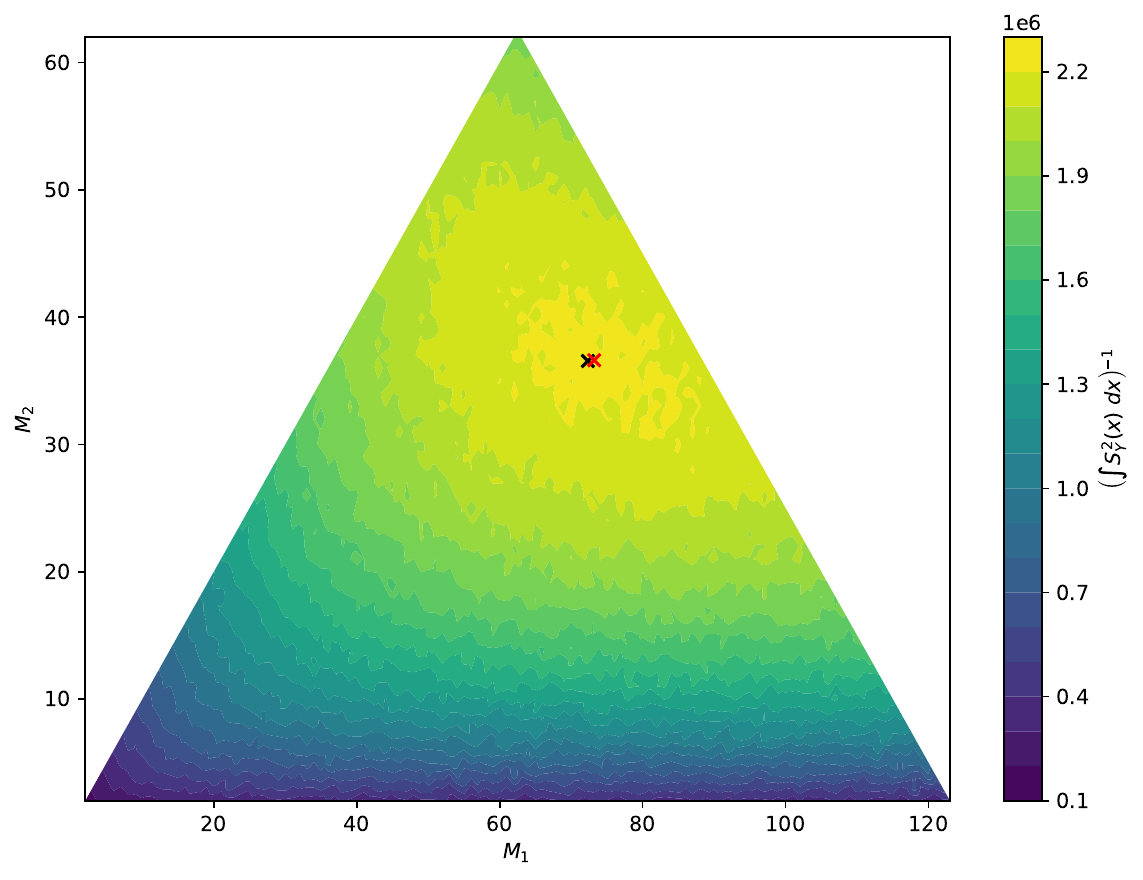}
    \caption{$1/\|S_Y^2\|_{L^1}$ as a function of $(M_1,M_2)$.}
    \label{fig:var_exp_estimator_fixed_cost}
\end{subfigure}
\hfill
\begin{subfigure}{0.4\textwidth}
    \includegraphics[width=\textwidth]{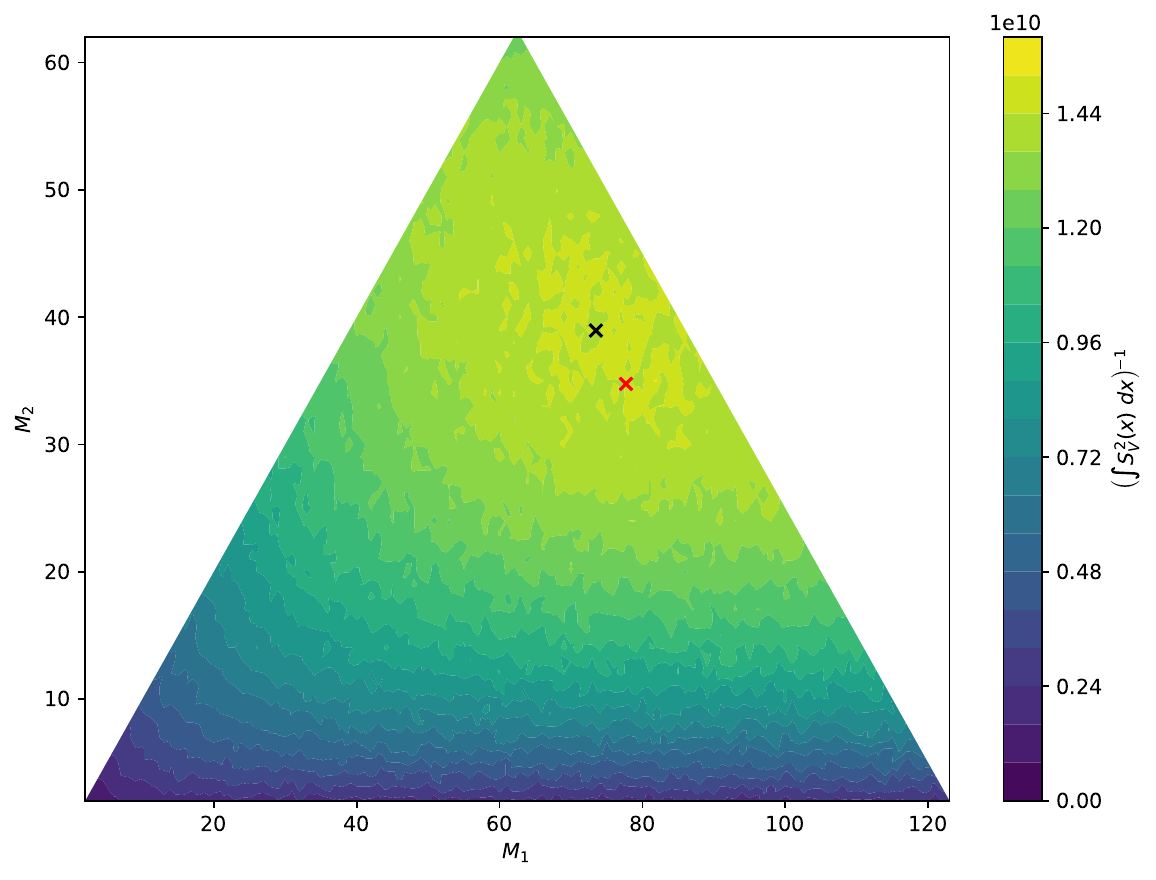}
    \caption{$1/\|S_V^2\|_{L^1}$ as a function of $(M_1,M_2)$.}
    \label{fig:var_var_estimator_fixed_cost}
\end{subfigure}
\caption{Fixed-cost allocation test for the forward problem with
$\vec T=(4,8,16)$ and $c_{\mathrm{cpl}}=1000$.
The black marker indicates an empirical minimiser over feasible integer allocations
and the red marker indicates the continuous optimum from
Lemma~\ref{lem:mlmc:allocation_mean} (left) or
Lemma~\ref{lem:mlmc:allocation_variance_general} (right), noting that the
variance-allocation formula is derived under the zero-excess-kurtosis closure.}
\label{fig:var_estimator_fixed_cost}
\end{figure}

\subsection{Inverse-PINNs benchmark: stochastic target source inference}\label{subsec:numerics:inverse_pinn}
In this subsection we test the multilevel estimators on an inverse PINNs--Uzawa
problem in which the training target is stochastic. This setting provides a
closed-form reference for the target moments, and it lets us separate (i) Monte
Carlo sampling error in estimating dropout-induced moments from (ii) surrogate
mismatch due to imperfect training.

\subsubsection{Problem definition and ground-truth moments}\label{subsubsec:numerics:inverse_problem}
We consider the following constrained optimisation problem: given $\alpha>0$ and
a (possibly random) target $u_{\rm target}\in L^2(0,1)$, minimise
\begin{equation}\label{eq:numerics:inverse_objective}
    \frac{1}{2} \|u-u_{\rm target}\|_{L^2(0,1)}^2 + \frac{\alpha}{2} \|f\|_{L^2(0,1)}^2
\end{equation}
subject to the PDE constraint
\begin{equation}\label{eq:numerics:inverse_constraint}
    -u''(x)=f(x)\ \text{ on }(0,1),\qquad u(0)=u(1)=f(0)=f(1)=0.
\end{equation}
For a synthetic benchmark with known moments, we generate stochasticity through
a scalar noise variable
\begin{equation}\label{eq:numerics:inverse_target_family}
    u_{\rm target}(x) = (1+\omega)(1 + \alpha \pi^4)\sin(\pi x), \quad \omega \sim \mathrm{Unif}(-\delta/2,\delta/2),
\end{equation}
and we use the closed-form family
\begin{equation}\label{eq:numerics:inverse_closed_form}
    u(x;\omega)=(1+\omega)\sin(\pi x),
    \qquad
    f(x;\omega)=(1+\omega)\pi^2\sin(\pi x).
\end{equation}
The corresponding ground-truth moments are
\begin{equation}\label{eq:numerics:inverse_ground_truth_moments}
    \mathbb{E}[u(x)]=\sin(\pi x),
    \qquad
    \mathbb{E}[f(x)]=\pi^2\sin(\pi x),
\end{equation}
and, using $\mathrm{Var}(\omega)=\delta^2/12$,
\begin{equation}\label{eq:numerics:inverse_ground_truth_vars}
    \mathrm{Var}[u(x)]=\frac{\delta^2}{12}\sin^2(\pi x),
    \qquad
    \mathrm{Var}[f(x)]=\frac{\delta^2}{12}\pi^4\sin^2(\pi x).
\end{equation}
These expressions provide a reference for assessing the empirical estimators of
the dropout-induced moments computed as described in
Subsection~\ref{subsec:numerics:setup}.

\subsubsection{Surrogate model, architecture and hyperparameters}\label{subsubsec:numerics:inverse_training}
We train a two-output dropout network $\mathscr{D}=(\mathscr{U},\mathscr{F})$ to
approximate the solution pair $(u,f)$ associated with
\eqref{eq:numerics:inverse_objective}--\eqref{eq:numerics:inverse_constraint}
under the stochastic target \eqref{eq:numerics:inverse_target_family}.
The architecture uses a shared trunk followed by branched heads for
$\mathscr{U}$ and $\mathscr{F}$, and we enforce the homogeneous Dirichlet boundary
conditions by an output factor $x(1-x)$.
A schematic is shown in Figure~\ref{fig:tikz_shared_net}, and the training
hyperparameters are listed in Table~\ref{tab:archtecture_2}.

At evaluation we hold the trained weights fixed and keep dropout active, so that
each forward pass draws a fresh realisation $\theta\sim \pi$ and
produces a stochastic prediction $\mathscr{D}(x;\theta)$. The estimators
$\mathcal{Y}$, $\mathcal{V}$, $S_Y^2$ and $S_V^2$ are then computed from repeated
stochastic forward passes exactly as described in
Section~\ref{sec:mlmc_dropout_estimators} and implemented in
Algorithm~\ref{alg:mlmc:coupled_sampling}.

\begin{figure}[hbtp]
  \centering
  \begin{tikzpicture}[
    neuron/.style={circle, draw, minimum size=8mm, fill=blue!10},
    dropout/.style={circle, draw, dashed, minimum size=8mm, fill=red!10},
    arrow/.style={-{Latex}, thick},
    every label/.append style={font=\footnotesize}
]

\node[neuron, label=above:Input] (x) at (0,0) {$x$};

\node[neuron] (s1) at (2,0.5) {};
\node[neuron] (s2) at (2,-0.5) {};

\node[neuron] (s5) at (4,0.5) {};
\node[neuron] (s6) at (4,-0.5) {};

\foreach \a in {s1,s2} {
    \draw[arrow] (x) -- (\a);
}

\foreach \a in {s1,s2} {
    \foreach \b in {s5,s6} {
        \draw[arrow] (\a) -- (\b);
    }
}


\node[neuron] (u0) at (6,1.5) {};
\node[neuron] (u1) at (6,0.5) {};

\node[neuron] (u2) at (8,1.5) {};
\node[neuron] (u3) at (8,0.5) {};

\node[neuron] (uout) at (10,1) {$\mathscr{U}(x)$};

\foreach \a in {s5,s6} {
    \foreach \b in {u0,u1} {
        \draw[arrow] (\a) -- (\b);
    }
}

\foreach \a in {u0,u1} {
    \foreach \b in {u2,u3} {
        \draw[arrow] (\a) -- (\b);
    }
}

\foreach \a in {u2,u3} {
    \draw[arrow] (\a) -- (uout);
}


\node[neuron] (f0) at (6,-0.5) {};
\node[neuron] (f1) at (6,-1.5) {};

\node[neuron] (f2) at (8,-0.5) {};
\node[neuron] (f3) at (8,-1.5) {};

\node[neuron] (fout) at (10,-1) {$\mathscr{F}(x)$};

\foreach \a in {s5,s6} {
    \foreach \b in {f0,f1} {
        \draw[arrow] (\a) -- (\b);
    }
}

\foreach \a in {f0,f1} {
    \foreach \b in {f2,f3} {
        \draw[arrow] (\a) -- (\b);
    }
}

\foreach \a in {f2,f3} {
    \draw[arrow] (\a) -- (fout);
}

\node at ($(s1)!0.5!(s6) + (-0,1.25)$) {$\overbrace{\hspace{3.0cm}}^{\text{Shared layers}}$};

\node at ($(u0)!0.5!(f3) + (-0,2.25)$) {$\overbrace{\hspace{3.0cm}}^{\text{Individual layers}}$};

\end{tikzpicture}
  \caption{Schematic of the surrogate network architecture for the inverse
  benchmark. The network has a shared trunk of dropout layers followed by
  branched heads for $\mathscr{U}$ and $\mathscr{F}$. A boundary-enforcing output
  factor $x(1-x)$ is applied to each head.}
  \label{fig:tikz_shared_net}
\end{figure}
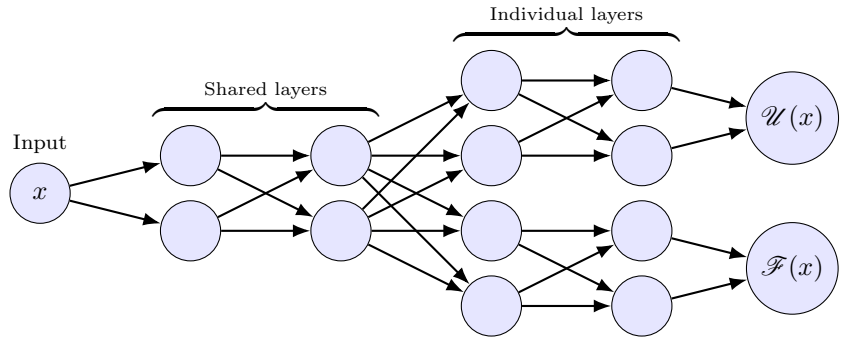

\begin{table}[htbp]
    \centering
    \begin{tabular}{|c|c|c|c|c|c|c|c|}
    \hline\hline
    $N_{\mathrm{in}}$
    & $N_{\mathrm{out}}$
    & $N_{\mathrm{width}}$
    & $L_d$
    & $L_{q,u}$
    & $L_{q,f}$
    & $p_{\mathrm{drop}}$
    & $N_{\mathrm{SGD}}$
    \\ \hline
    1
    & 2
    & 128
    & 4
    & 0
    & 1
    & 0.2
    & 200000
    \\ \hline \hline
    $N_{\mathrm{Uz}}$
    & $N_{\mathrm{repeats}}$
    & $N_{\mathrm{lag}}$
    & $\alpha$
    & $\beta$
    & $\eta$
    & $\rho$
    & $\delta$
    \\ \hline
    50
    & 5
    & 20
    & $10^{-4}$
    & $10^{-4}$
    & $2.5 \times 10^{-5}$
    & $10^{-3}$
    & $2.5 \times 10^{-2}$
    \\ \hline
    \end{tabular}
    \caption{Hyperparameters used to train the inverse benchmark surrogate
    $\mathscr{D}=(\mathscr{U},\mathscr{F})$. The architecture is the shared-trunk /
    branched-head dropout design and
    Figure~\ref{fig:tikz_shared_net}. Many of the hyperparamters share she same definition as in Table \ref{tab:archtecture_1}. The hyperparameter $\beta$ corresponds to the PINNs weight in the Loss functional, and $N_{\mathrm{lag}}$ is the number of evaluations per Uzawa update.}
    \label{tab:archtecture_2}
\end{table}

\subsubsection{Single-fidelity behaviour: predictive means}\label{subsubsec:numerics:inverse_single_fidelity_mean}
We first assess single-fidelity Monte Carlo estimation of the
dropout-induced predictive means. For each output we compute
$\overline{Y}(x;M,T)$ at a low inner fidelity $T=10$ and a high inner
fidelity $T=10^4$ and compare to the ground-truth expectations
\eqref{eq:numerics:inverse_ground_truth_moments}. Figure~\ref{fig:Inverse_exp_est}
shows the estimated expectations and the corresponding pointwise
errors.

\begin{figure}[bhtp]
    \centering
    \begin{subfigure}[t]{0.39\textwidth}
        \centering
        \includegraphics[width=\textwidth]{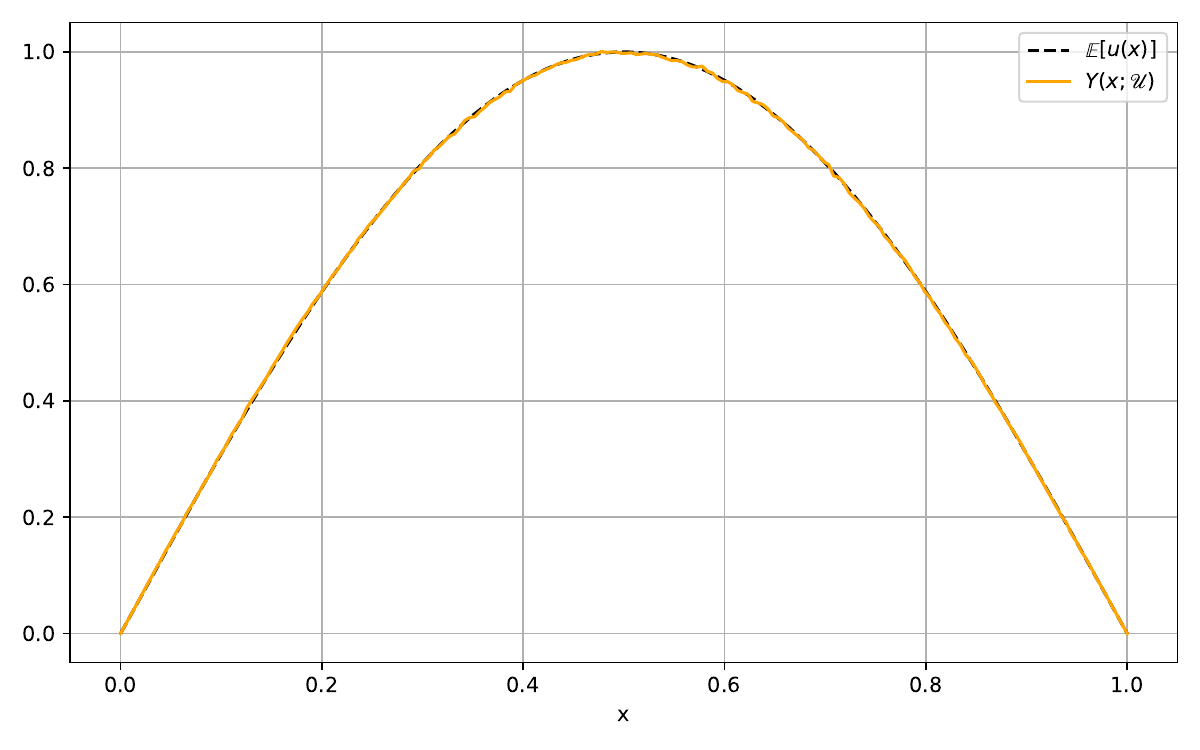}
        \caption{$Y(\mathscr{U})$ vs.\ $\mathbb{E}[u]$, $T=10$.}
        \label{fig:Inverse_exp_est_comp_u_10}
    \end{subfigure}
    \hfill
    \begin{subfigure}[t]{0.39\textwidth}
        \centering
        \includegraphics[width=\textwidth]{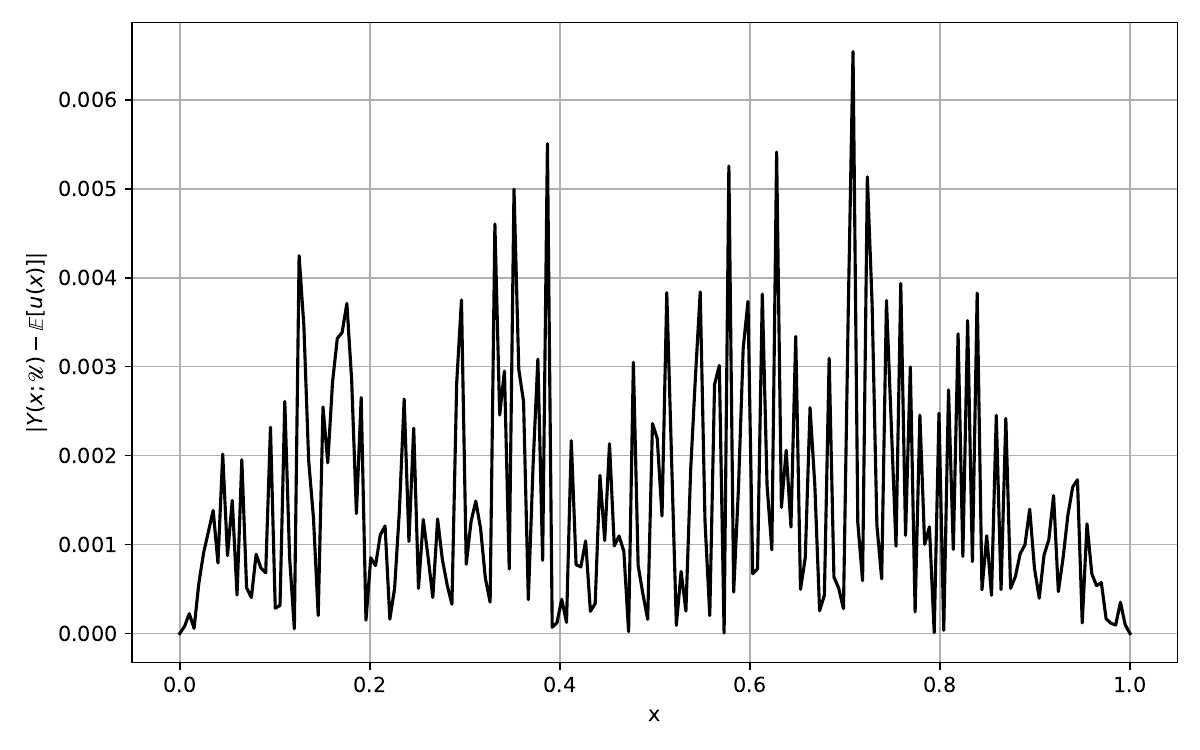}
        \caption{Pointwise error for $Y(\mathscr{U})$, $T=10$.}
        \label{fig:Inverse_exp_est_err_u_10}
    \end{subfigure}
    \hfill
    \begin{subfigure}[t]{0.39\textwidth}
        \centering
        \includegraphics[width=\textwidth]{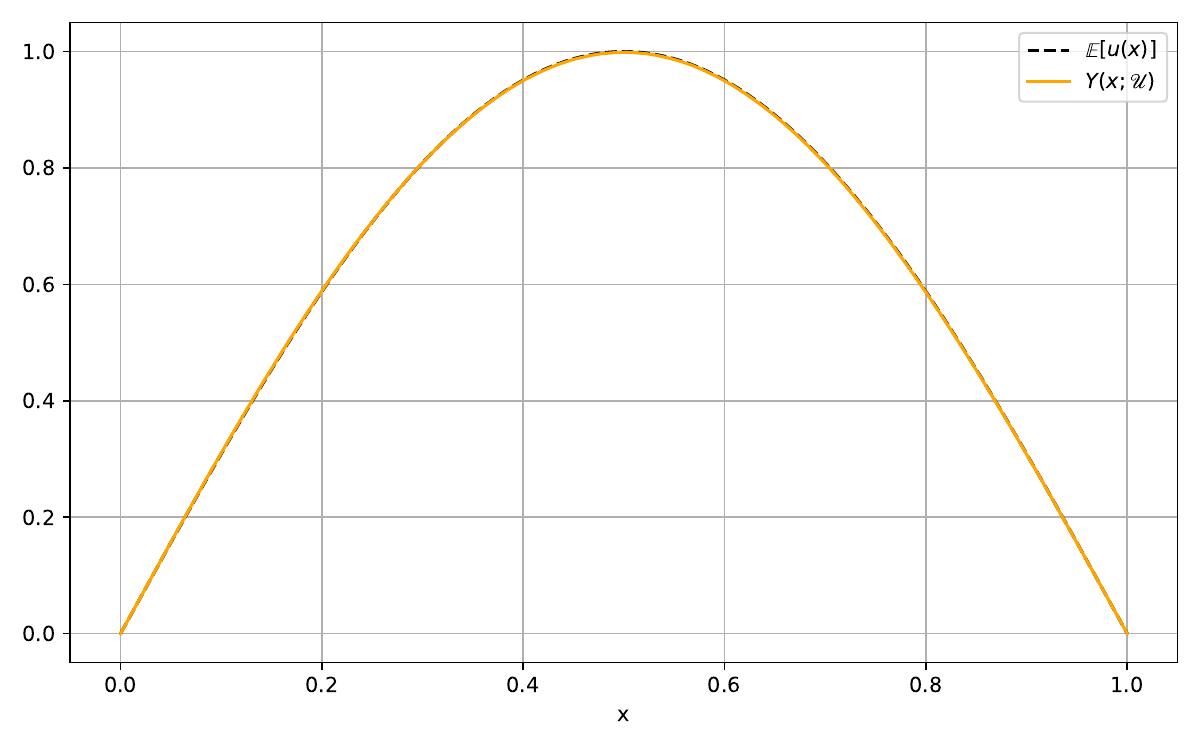}
        \caption{$Y(\mathscr{U})$ vs.\ $\mathbb{E}[u]$, $T=10^4$.}
        \label{fig:Inverse_exp_est_comp_u_10000}
    \end{subfigure}
    \hfill
    \begin{subfigure}[t]{0.39\textwidth}
        \centering
        \includegraphics[width=\textwidth]{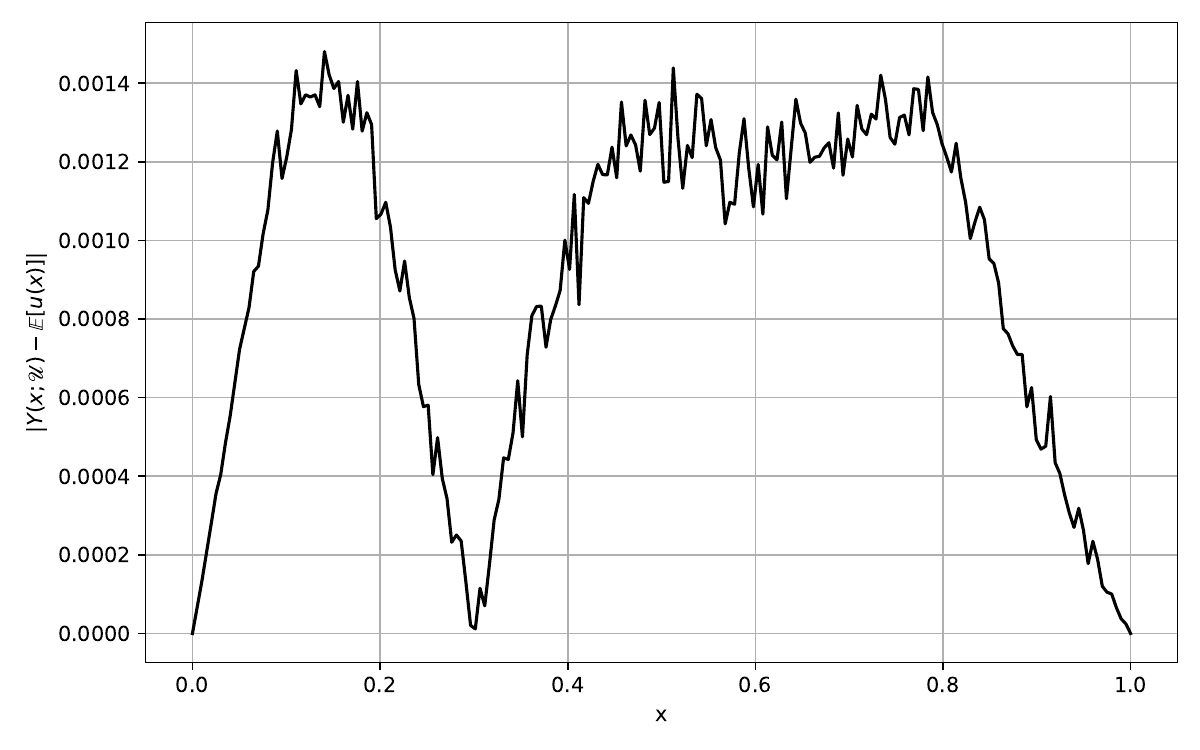}
        \caption{Pointwise error for $Y(\mathscr{U})$, $T=10^4$.}
        \label{fig:Inverse_exp_est_err_u_10000}
    \end{subfigure}
    \hfill
    \begin{subfigure}[t]{0.39\textwidth}
        \centering
        \includegraphics[width=\textwidth]{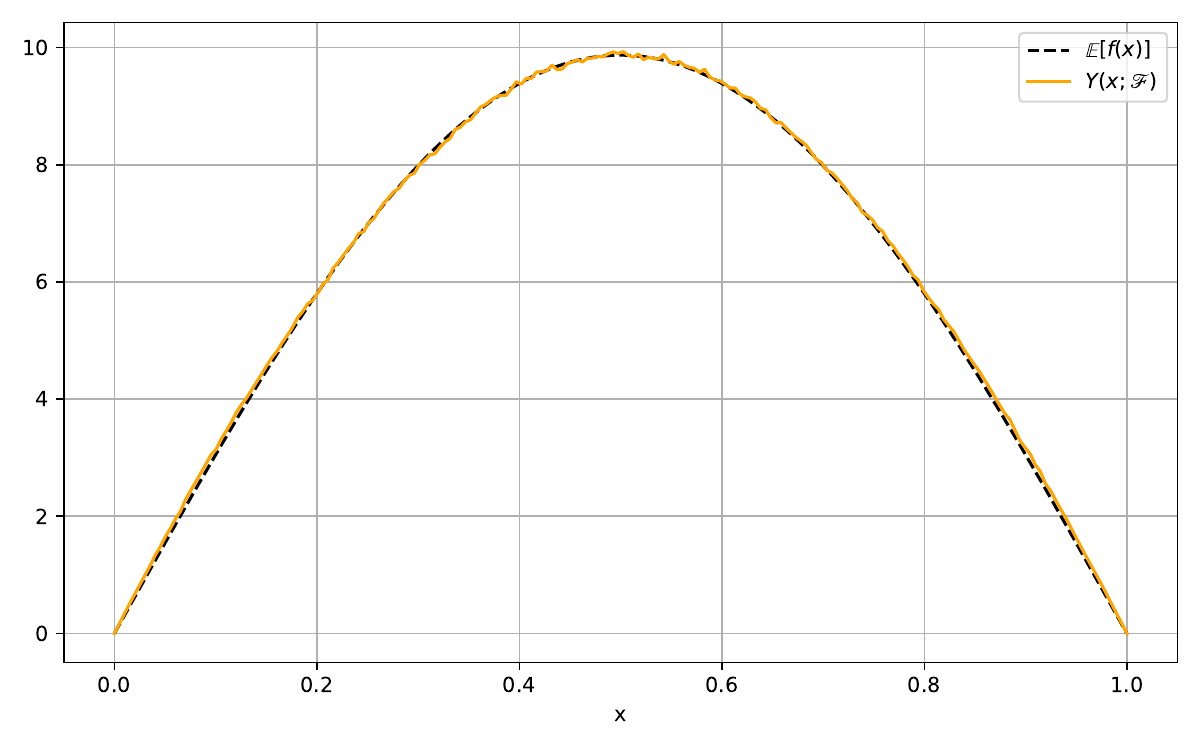}
        \caption{$Y(\mathscr{F})$ vs.\ $\mathbb{E}[f]$, $T=10$.}
        \label{fig:Inverse_exp_est_comp_f_10}
    \end{subfigure}
    \hfill
    \begin{subfigure}[t]{0.39\textwidth}
        \centering
        \includegraphics[width=\textwidth]{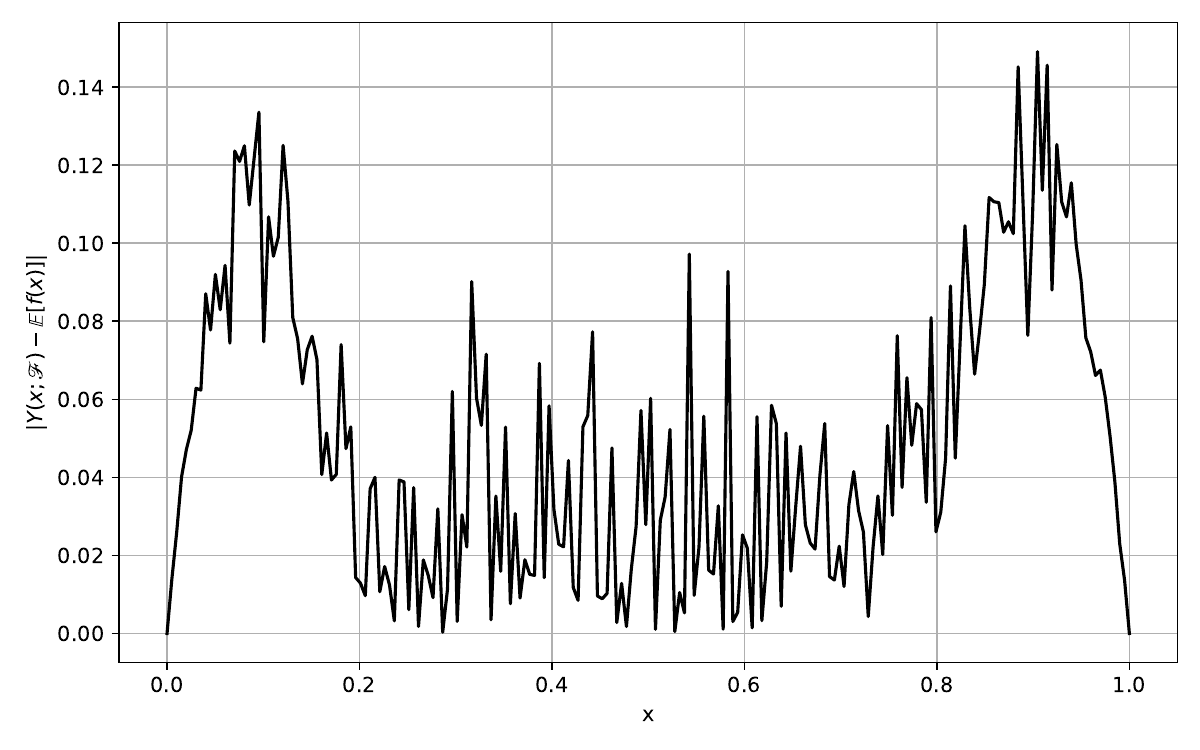}
        \caption{Pointwise error for $Y(\mathscr{F})$, $T=10$.}
        \label{fig:Inverse_exp_est_err_f_10}
    \end{subfigure}
    \hfill
    \begin{subfigure}[t]{0.39\textwidth}
        \centering
        \includegraphics[width=\textwidth]{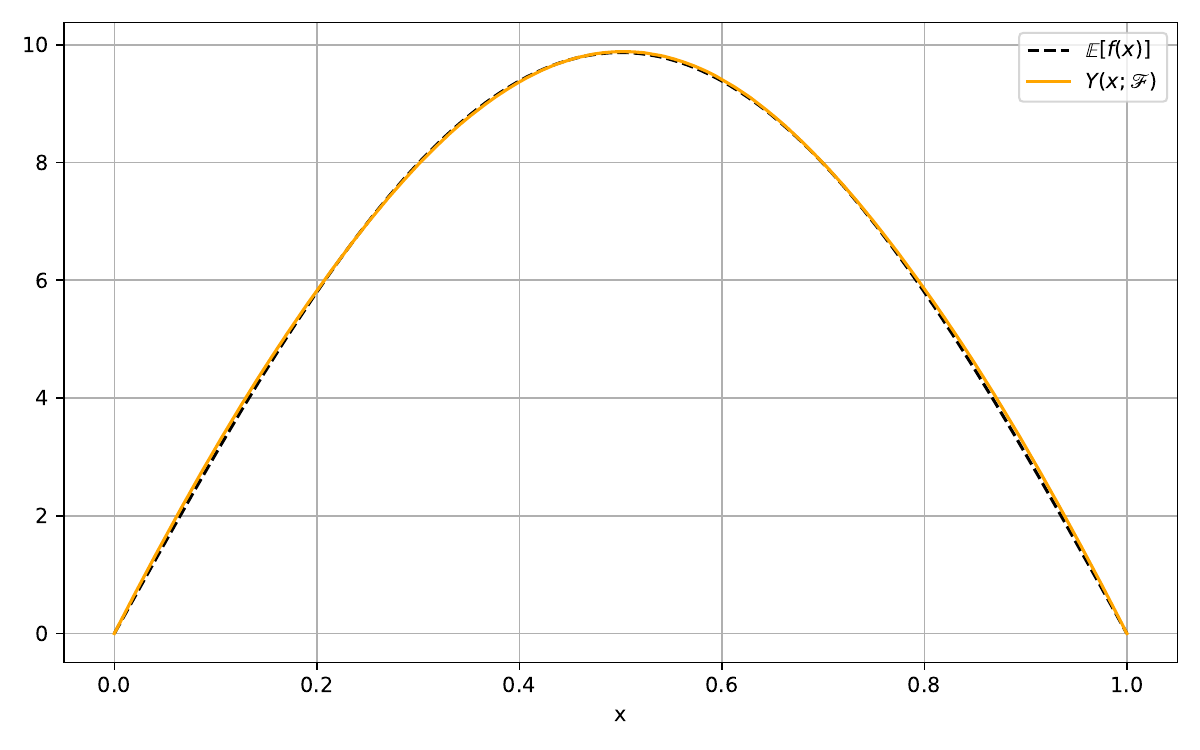}
        \caption{$Y(\mathscr{F})$ vs.\ $\mathbb{E}[f]$, $T=10^4$.}
        \label{fig:Inverse_exp_est_comp_f_10000}
    \end{subfigure}
    \hfill
    \begin{subfigure}[t]{0.39\textwidth}
        \centering
        \includegraphics[width=\textwidth]{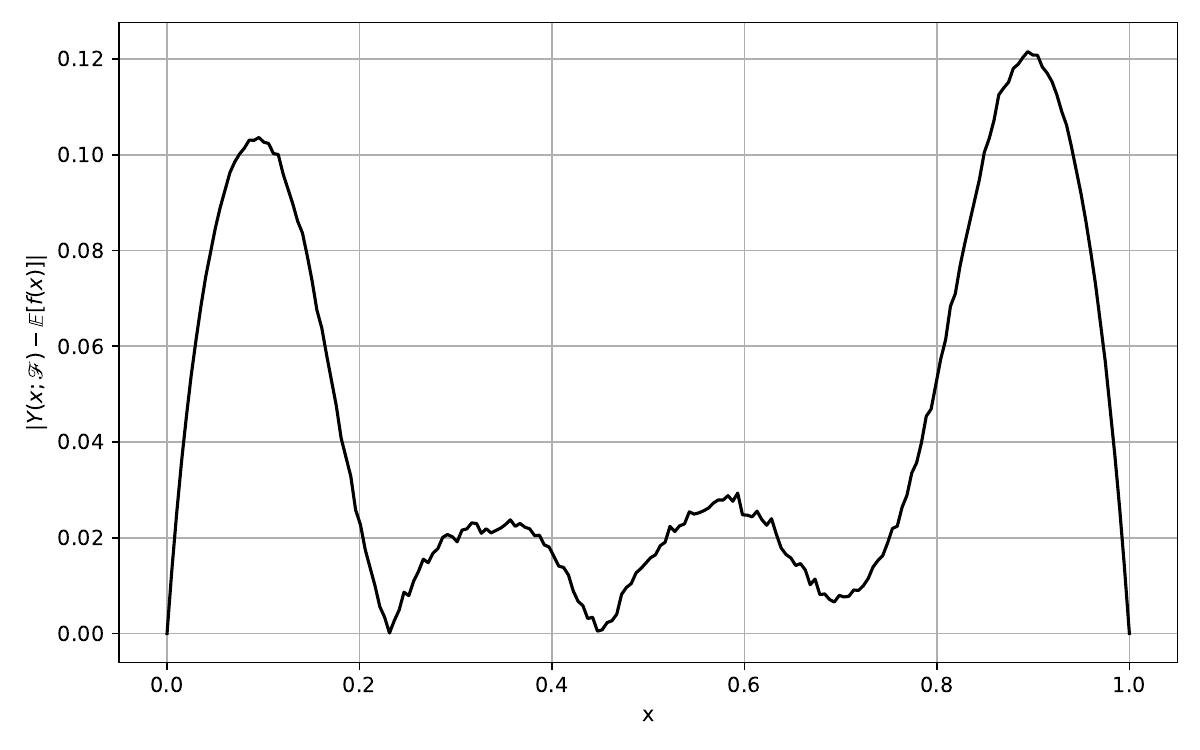}
        \caption{Pointwise error for $Y(\mathscr{F})$, $T=10^4$.}
        \label{fig:Inverse_exp_est_err_f_10000}
    \end{subfigure}

    \caption{Single-fidelity single-level Monte Carlo estimation of predictive means for the
    inverse benchmark. The top two rows report $Y(x)$ for $\mathscr{U}$ and
    compares it to $\mathbb{E}[u(x)]$ from \eqref{eq:numerics:inverse_ground_truth_moments};
    the bottom two rows report $Y(x)$ for $\mathscr{F}$ and compares it to
    $\mathbb{E}[f(x)]$. For each output we show a low-fidelity estimate ($T=10$)
    and a high-fidelity estimate ($T=10^4$), together with the corresponding
    pointwise errors. In contrast to the variance estimates in Figure \ref{fig:Inverse_var_est}, the predictive means converge to the analytical expectations as $T$ increases, indicating that the epistemic error is negligible compared to the Monte Carlo noise.}
    \label{fig:Inverse_exp_est}
\end{figure}

\subsubsection{Single-fidelity behaviour: predictive variances}\label{subsubsec:numerics:inverse_single_fidelity_var}
We next assess single-fidelity estimation of the dropout-induced predictive
variances via $V(x)$. We report standard deviations
$\sqrt{V(x)}$ and compare to the analytical standard deviations obtained
from \eqref{eq:numerics:inverse_ground_truth_vars}. Results for $T=10$ and
$T=10^4$ are shown in Figure~\ref{fig:Inverse_var_est}.

\begin{figure}[bhtp]
    \centering
    \begin{subfigure}[t]{0.385\textwidth}
        \includegraphics[width=\textwidth]{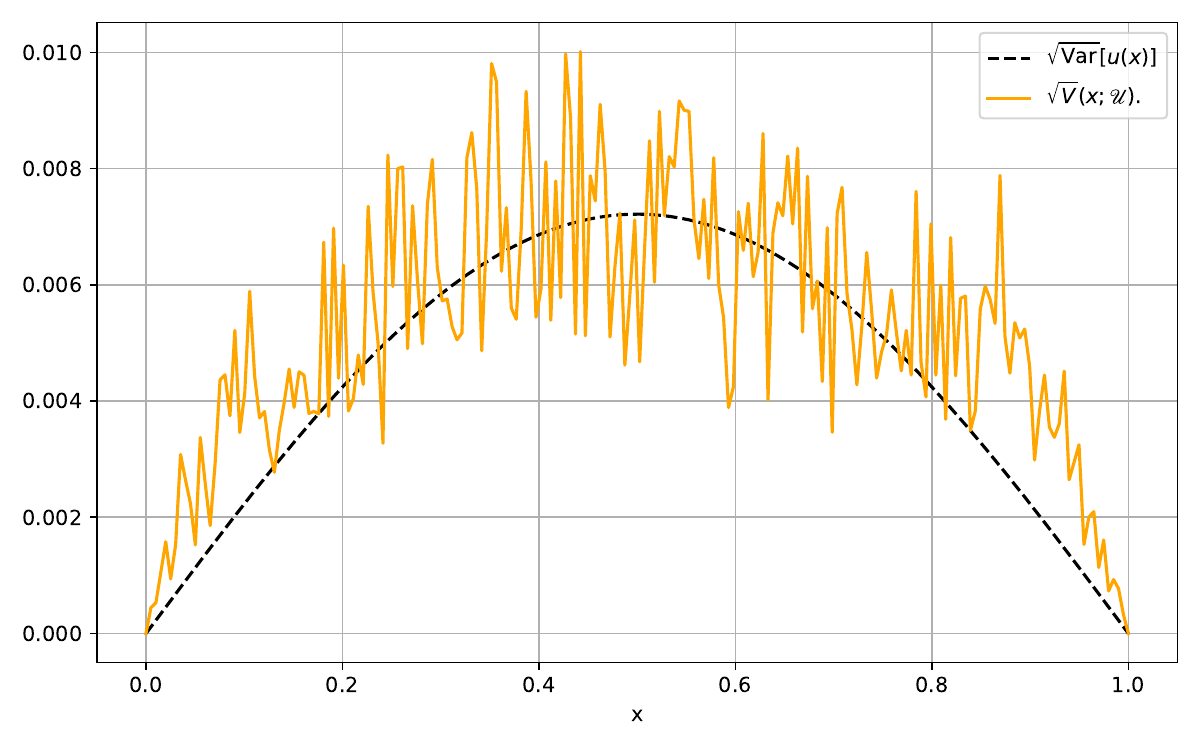}
        \caption{$\sqrt{V(\mathscr{U})}$ vs.\ $\sqrt{\mathrm{Var}[u]}$, $T=10$.}
        \label{fig:Inverse_var_est_comp_u_10}
    \end{subfigure}
    \hfill
    \begin{subfigure}[t]{0.385\textwidth}
        \includegraphics[width=\textwidth]{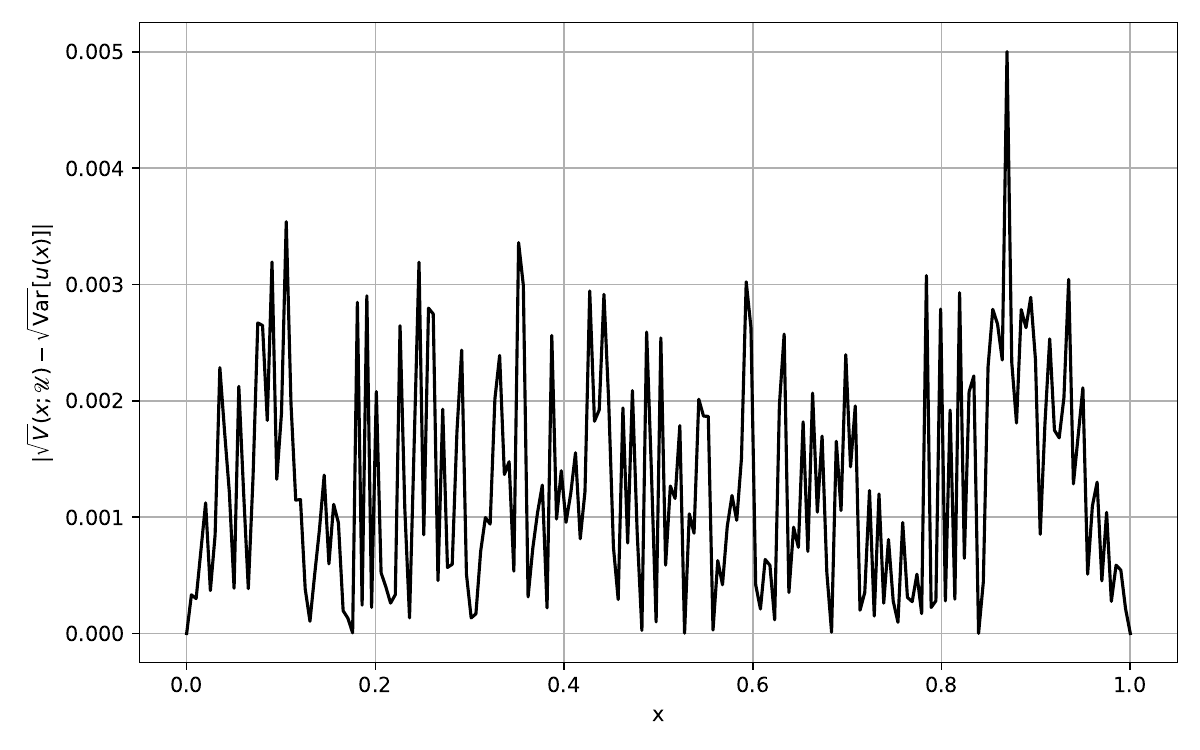}
        \caption{Pointwise error for $\sqrt{V(\mathscr{U})}$, $T=10$.}
        \label{fig:Inverse_var_est_err_u_10}
    \end{subfigure}
    \hfill
    \begin{subfigure}[t]{0.385\textwidth}
        \includegraphics[width=\textwidth]{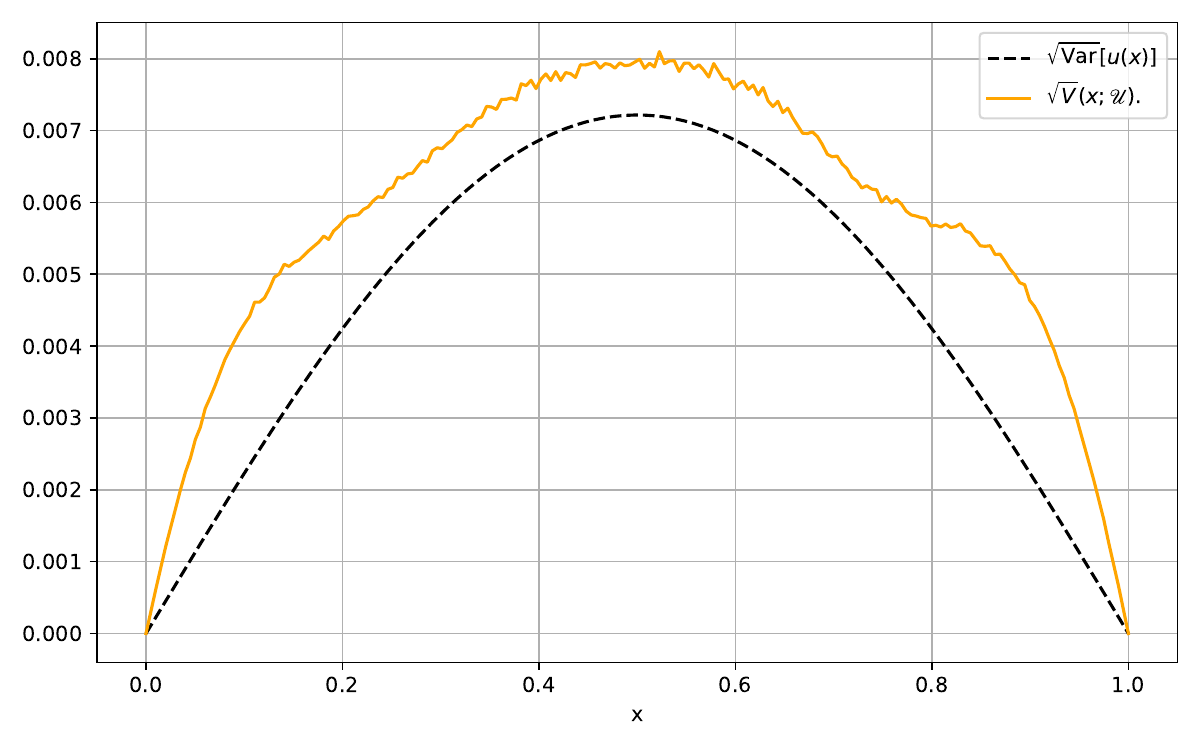}
        \caption{$\sqrt{V(\mathscr{U})}$ vs.\ $\sqrt{\mathrm{Var}[u]}$, $T=10^4$.}
        \label{fig:Inverse_var_est_comp_u_10000}
    \end{subfigure}
    \hfill
    \begin{subfigure}[t]{0.385\textwidth}
        \includegraphics[width=\textwidth]{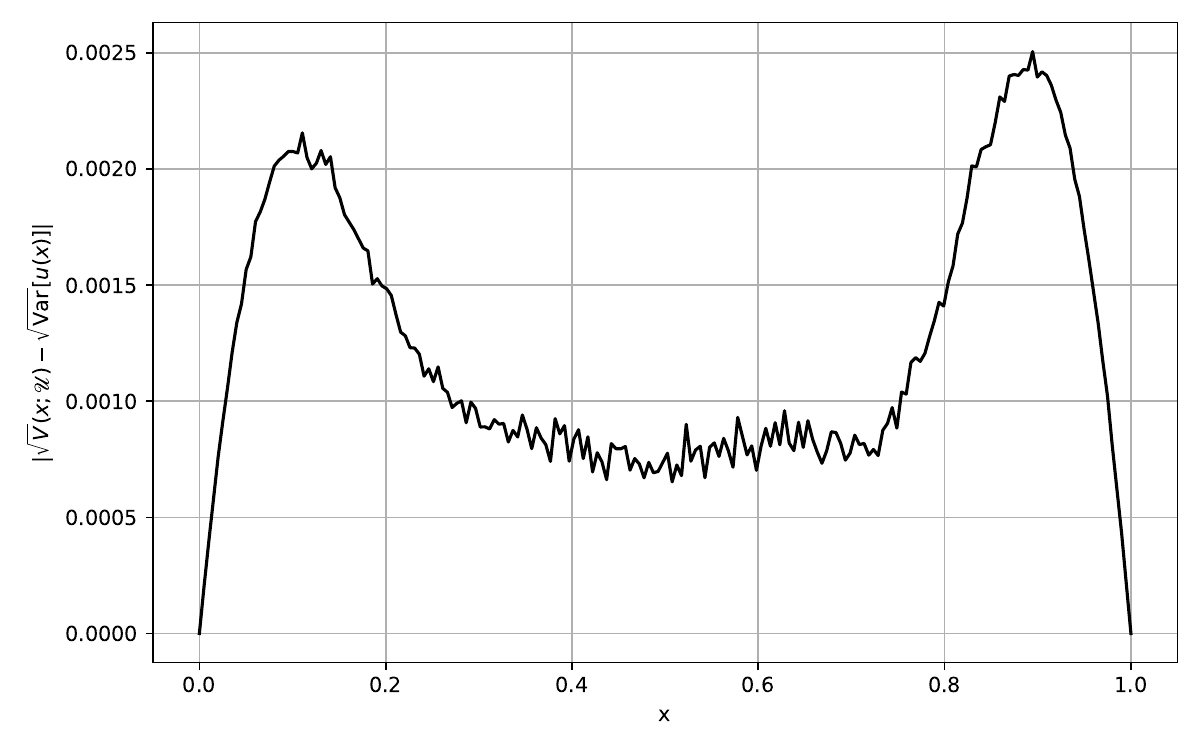}
        \caption{Pointwise error for $\sqrt{V(\mathscr{U})}$, $T=10^4$.}
        \label{fig:Inverse_var_est_err_u_10000}
    \end{subfigure}
    \hfill
    \begin{subfigure}[t]{0.385\textwidth}
        \includegraphics[width=\textwidth]{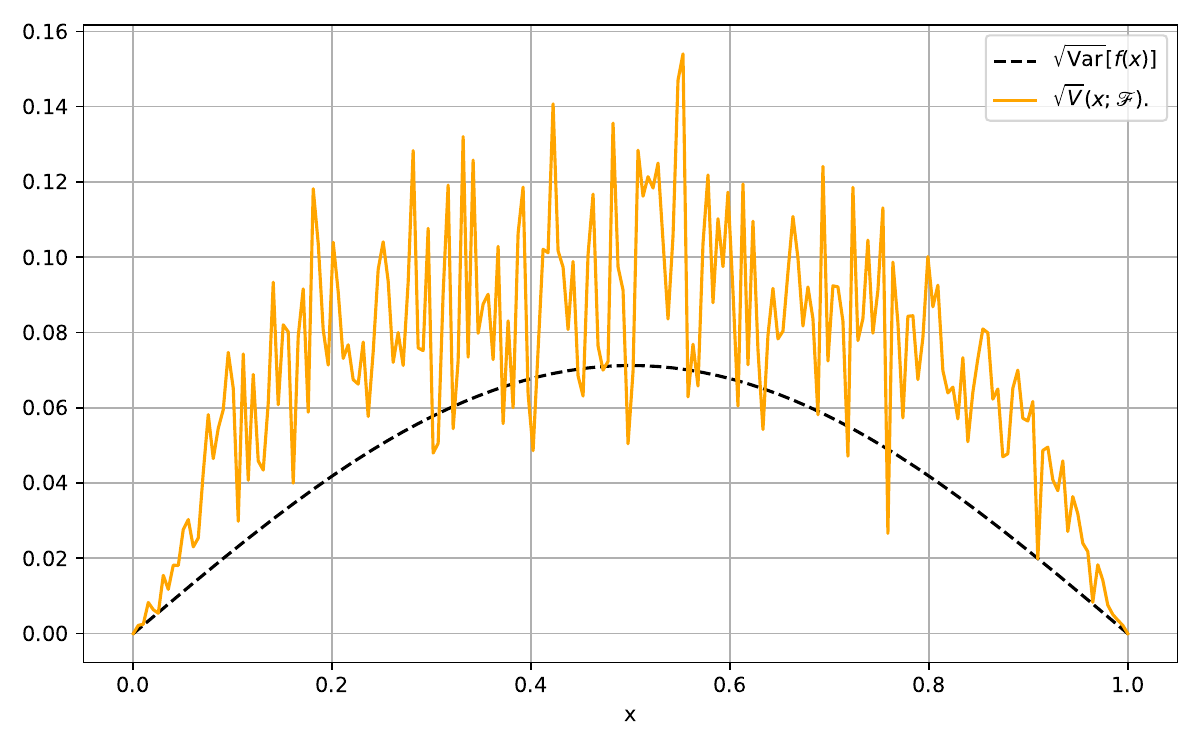}
        \caption{$\sqrt{V(\mathscr{F})}$ vs.\ $\sqrt{\mathrm{Var}[f]}$, $T=10$.}
        \label{fig:Inverse_var_est_comp_f_10}
    \end{subfigure}
    \hfill
    \begin{subfigure}[t]{0.385\textwidth}
        \includegraphics[width=\textwidth]{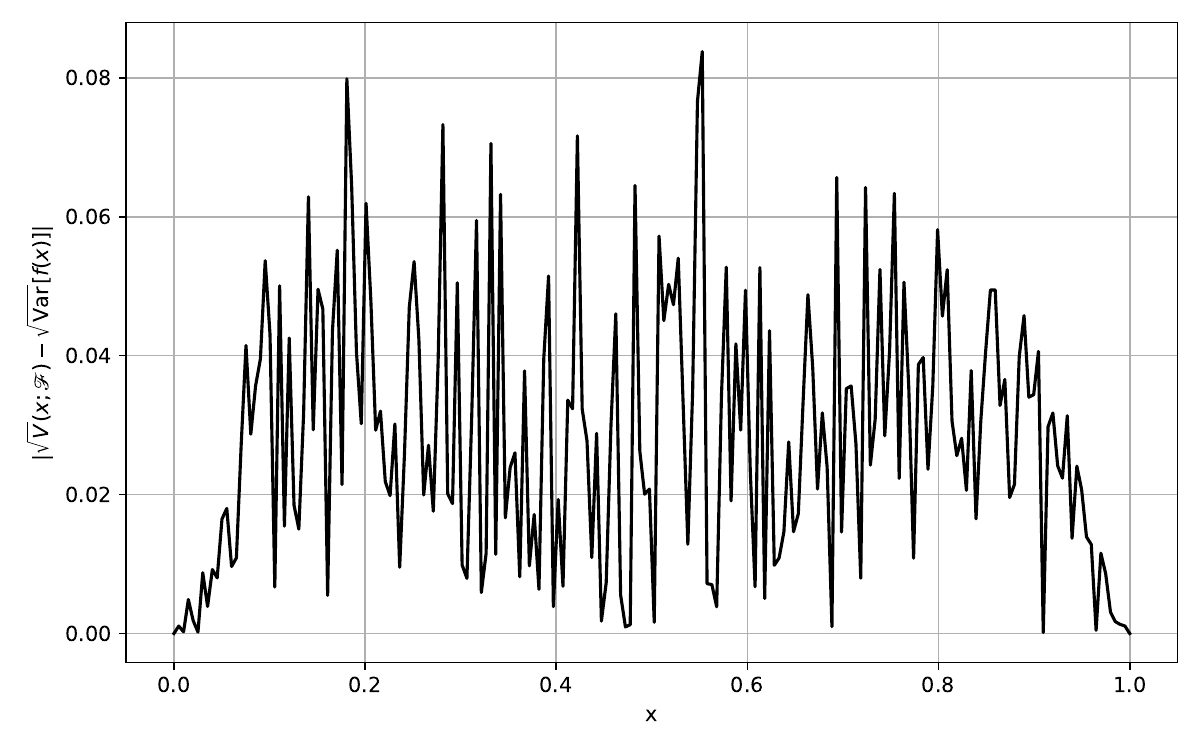}
        \caption{Pointwise error for $\sqrt{V(\mathscr{F})}$, $T=10$.}
        \label{fig:Inverse_var_est_err_f_10}
    \end{subfigure}
    \hfill
    \begin{subfigure}[t]{0.385\textwidth}
        \includegraphics[width=\textwidth]{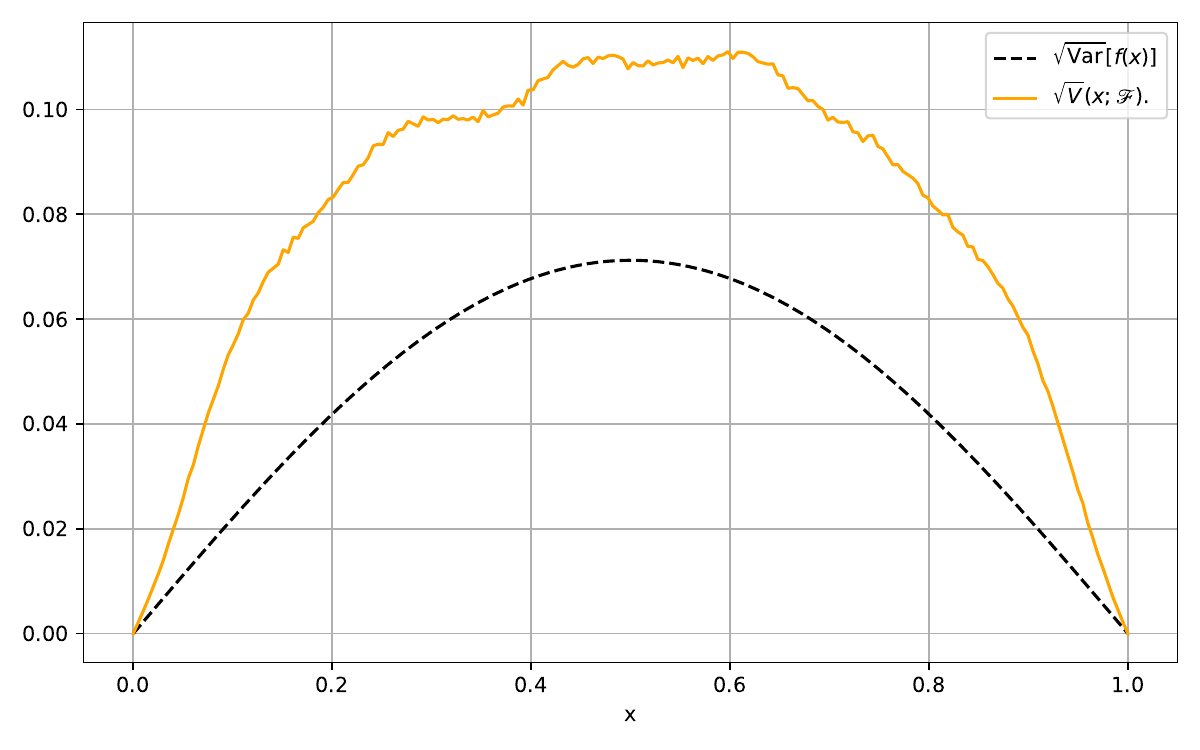}
        \caption{$\sqrt{V(\mathscr{F})}$ vs.\ $\sqrt{\mathrm{Var}[f]}$, $T=10^4$.}
        \label{fig:Inverse_var_est_comp_f_10000}
    \end{subfigure}
    \hfill
    \begin{subfigure}[t]{0.385\textwidth}
        \includegraphics[width=\textwidth]{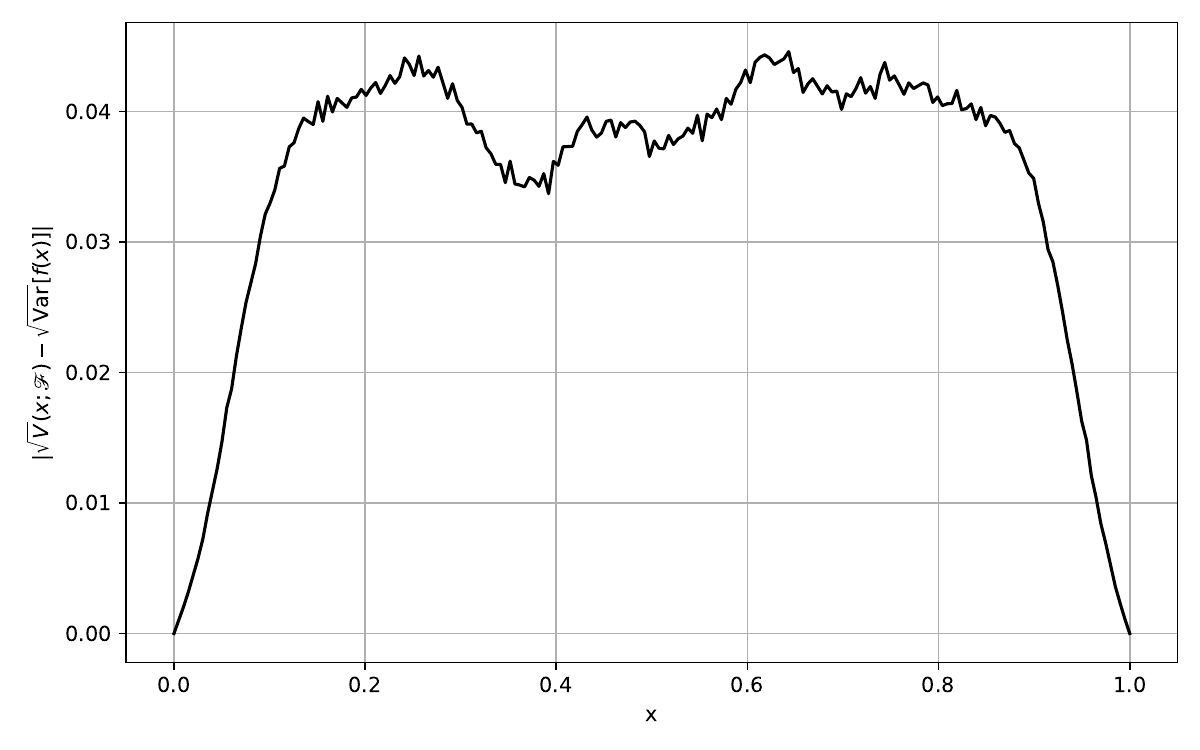}
        \caption{Pointwise error for $\sqrt{V(\mathscr{F})}$, $T=10^4$.}
        \label{fig:Inverse_var_est_err_f_10000}
    \end{subfigure}

    \caption{Single-fidelity single-level Monte Carlo estimation of predictive standard
    deviations for the inverse benchmark. The top row reports
    $\sqrt{V(x)}$ for $\mathscr{U}$ and compares it to
    $\sqrt{\mathrm{Var}[u(x)]}$ from \eqref{eq:numerics:inverse_ground_truth_vars};
    the bottom row reports $\sqrt{V(x)}$ for $\mathscr{F}$ and compares
    it to $\sqrt{\mathrm{Var}[f(x)]}$. Low- and high-fidelity results
    ($T=10$ and $T=10^4$) are shown together with pointwise errors. We observe that increasing $T$ reduces the sampling noise in the estimator of $V$, the remaining discrepancy at high $T$ reflects surrogate-induced epistemic mismatch rather than Monte Carlo error, indicating that the dropout posterior does not fully capture the aleatoric variability of the target mappings.}
    \label{fig:Inverse_var_est}
\end{figure}

\subsubsection{Empirical verification of sampling-variance rates}\label{subsubsec:numerics:inverse_variance_rates}
We now verify the predicted $\mathcal{O}(T^{-1})$ decay of the sampling variances
of the single-fidelity estimators (Lemma~\ref{lem:mlmc:single_fidelity_moments}).
For each fidelity $T$ we compute $M_0$ independent outer replicates and form the
empirical outer-variance estimators $\mathcal{S}^2_Y(x;M_0,T)$ and
$\mathcal{S}^2_V(x;M_0,T)$ for both outputs, then summarise their magnitudes
using discrete $L^1$ norms over the evaluation grid as described in
Subsubsection~\ref{subsubsec:numerics:estimators_norms}.
Figure~\ref{fig:inverse_var_v_fid} reports the resulting log--log fits and
Table~\ref{tab:inverse_log_grads_vs_fid} lists the estimated slopes and their
$99\%$ confidence intervals.

\begin{table}[htbp]
    \centering
    \begin{tabular}{|c|c|c|c|}
        \hline \hline
        & lower $99\%$ CI & est.\ gradient & upper $99\%$ CI \\
        \hline
        $\mathcal S_Y^2(\mathscr{U})$ & -1.0159 & -1.0033 & -0.9907  \\
        $\mathcal S_Y^2(\mathscr{F})$ & -1.0094 & -0.9976 & -0.9857  \\
        $\mathcal S_V^2(\mathscr{U})$ & -1.0000 & -0.9852 & -0.9704 \\
        $\mathcal S_V^2(\mathscr{F})$ & -0.9963 & -0.9842 & -0.9722 \\
        \hline
    \end{tabular}
    \caption{Estimated log--log gradients of the single-fidelity sampling
    variances in Figure~\ref{fig:inverse_var_v_fid} with respect to $T$, together
    with $99\%$ confidence intervals. We observe that the the $\leb1$-norms of the estimator variances exhibit approximately $\mathcal{O}(T^{-1})$ decay.}
    \label{tab:inverse_log_grads_vs_fid}
\end{table}

\begin{figure}[htbp]
\centering
\begin{subfigure}{0.43\textwidth}
    \includegraphics[width=\textwidth]{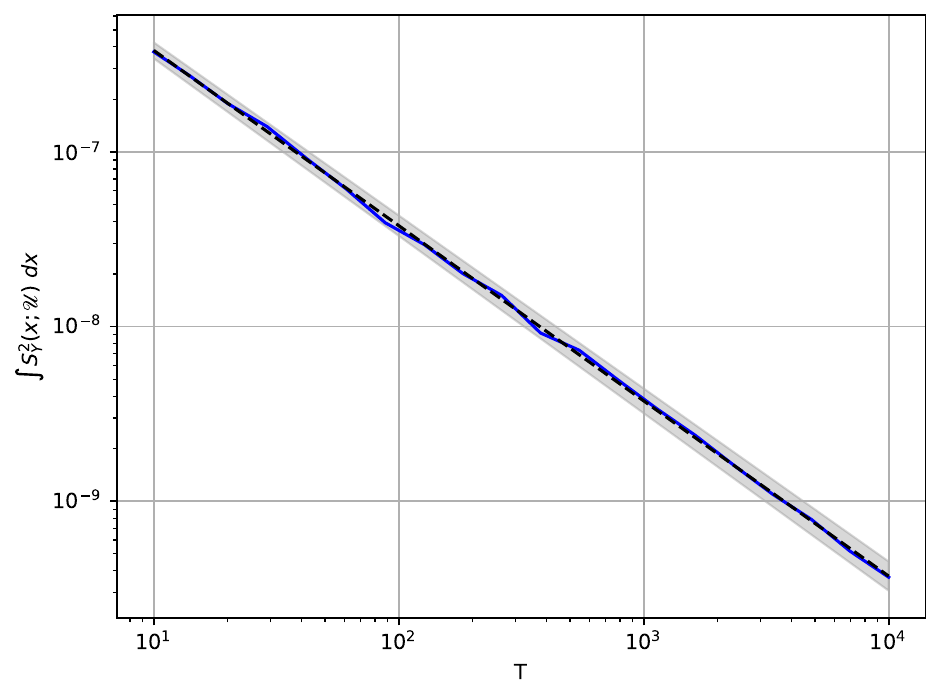}
    \caption{$\|\mathcal S_Y^2(\mathscr{U})\|_{L^1}$ vs.\ $T$.}
    \label{fig:inverse_exp_estimator_var_v_fid_u}
\end{subfigure}
\hfill
\begin{subfigure}{0.43\textwidth}
    \includegraphics[width=\textwidth]{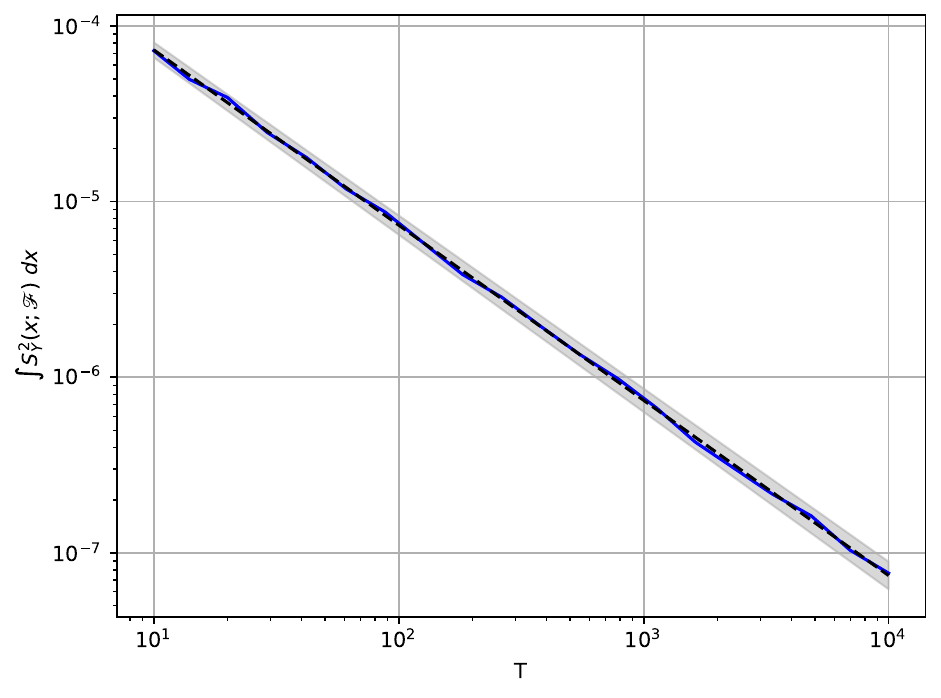}
    \caption{$\|\mathcal S_Y^2(\mathscr{F})\|_{L^1}$ vs.\ $T$.}
    \label{fig:inverse_exp_estimator_var_v_fid_f}
\end{subfigure}
\hfill
\begin{subfigure}{0.43\textwidth}
    \includegraphics[width=\textwidth]{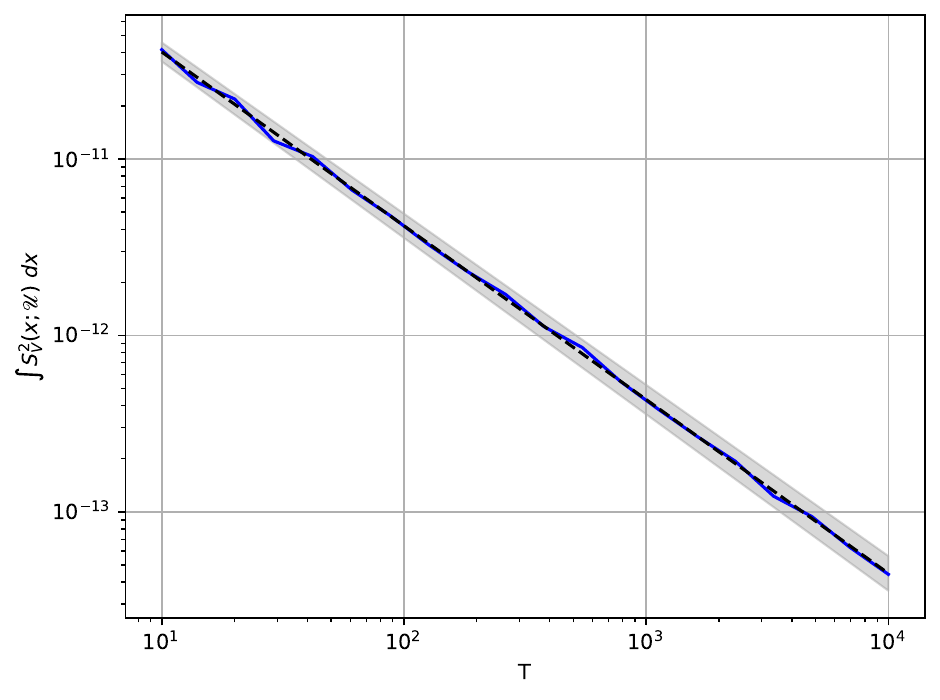}
    \caption{$\|\mathcal S_V^2(\mathscr{U})\|_{L^1}$ vs.\ $T$.}
    \label{fig:inverse_var_estimator_var_v_fid_u}
\end{subfigure}
\hfill
\begin{subfigure}{0.43\textwidth}
    \includegraphics[width=\textwidth]{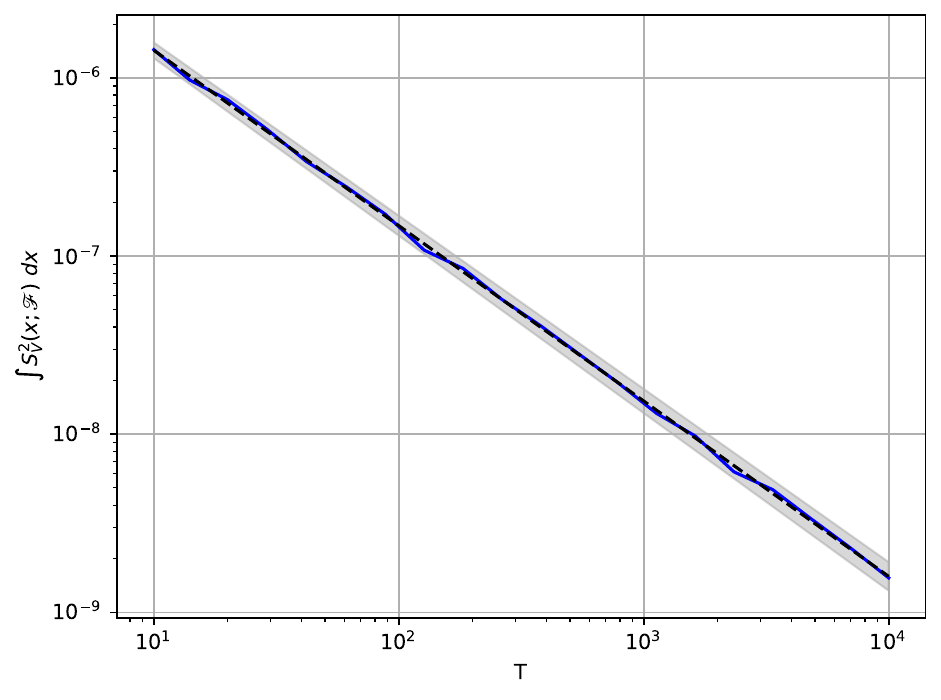}
    \caption{$\|\mathcal S_V^2(\mathscr{F})\|_{L^1}$ vs.\ $T$.}
    \label{fig:inverse_var_estimator_var_v_fid_f}
\end{subfigure}
\caption{Empirical sampling-variance decay for the inverse benchmark. Each panel
reports the discrete $L^1$ norm over the evaluation grid of the empirical
sampling variance of a single-fidelity estimator as a function of the inner
fidelity $T$. The dashed line is a linear fit on log--log axes and the shaded
region indicates the $99\%$ confidence interval for the fitted slope. The
corresponding gradients are listed in Table~\ref{tab:inverse_log_grads_vs_fid}.}
\label{fig:inverse_var_v_fid}
\end{figure}

\subsubsection{Fixed-cost multilevel allocation test}\label{subsubsec:numerics:inverse_fixed_cost_allocation}
We conclude with a fixed-cost allocation experiment for the multilevel
estimators.  We use a three-level ladder $\vec
T=(T_0,T_1,T_2)=(4,8,16)$ (so $L=2$) and a fixed coupled evaluation
budget $c_{\mathrm{cpl}}=1000$
(subsubsection~\ref{subsubsec:numerics:cost_accounting}). We enumerate
feasible integer allocations $\vec M=(M_0,M_1,M_2)$ satisfying
$T_0M_0+(T_1-T_0)M_1+(T_2-T_1)M_2=c_{\mathrm{cpl}}$ and the constraint $M_\ell\ge
2$. We enumerate feasible integer allocations $\vec M=(M_0,M_1,M_2)$
satisfying the budget and the constraint $M_\ell\ge 2$ (so that sample
variances are defined), and we evaluate the empirical MLMC variance
estimators $S_Y^2$ and $S_V^2$ for both outputs. For visualisation we
plot reciprocal surfaces $1/\|S_{\cdot}^2\|_{L^1}$ as functions of
$(M_1,M_2)$, noting that $M_0$ is then fixed by the cost constraint.

The black marker indicates an empirical minimiser over feasible
integer allocations. The red marker indicates the continuous optimum
obtained from the allocation formulas in
\S~\ref{subsubsec:mlmc:allocation_mean} for the mean estimator and
\S~\ref{subsubsec:mlmc:allocation_variance} for the variance estimator
(in the latter case under the zero-excess-kurtosis closure used for
the simplified allocation).

The theoretical continuous optima for $\vec T=(4,8,16)$ and $c_{\mathrm{cpl}}=1000$
are the same as the previous problem, as the optimal $\vec M$ depends solely on the fidelity ladder $\vec T$,
\begin{equation}\label{eq:inverse_problem_optimal_levels_v_theo}
\vec M^\star_{Y}=\left(\frac{1000}{12}, \frac{1000}{24}, \frac{1000}{48}\right),
\qquad
\vec M^\star_{V}\approx(82.64, 44.17, 19.76),
\end{equation}
where $\vec M^\star_Y$ is the mean-estimator allocation and $\vec M^\star_V$ is
the variance-estimator allocation under the zero-excess-kurtosis closure.

\begin{figure}[htbp]
\centering
\begin{subfigure}{0.43\textwidth}
    \includegraphics[width=\textwidth]{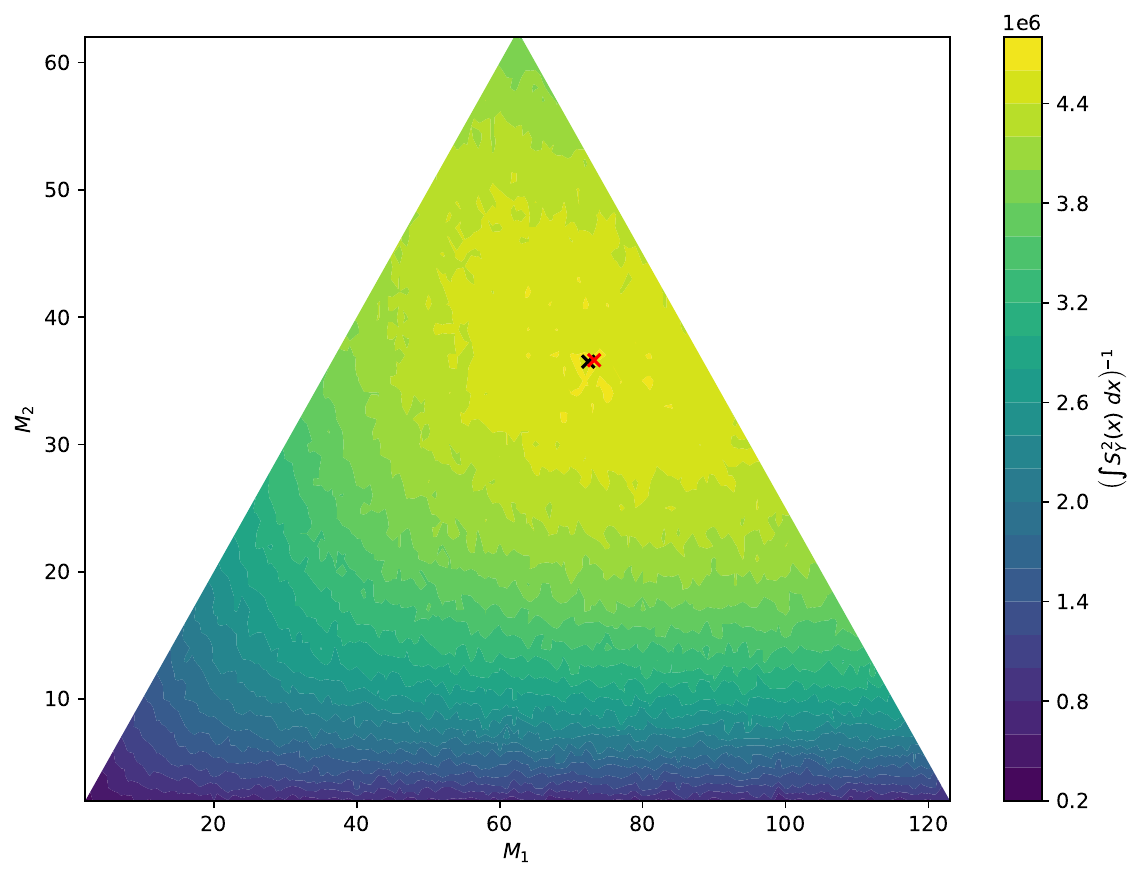}
    \caption{$1/\|S_Y^2(\mathscr{U})\|_{L^1}$ as a function of $(M_1,M_2)$.}
    \label{fig:inverse_u_var_exp_estimator_fixed_cost}
\end{subfigure}
\hfill
\begin{subfigure}{0.43\textwidth}
    \includegraphics[width=\textwidth]{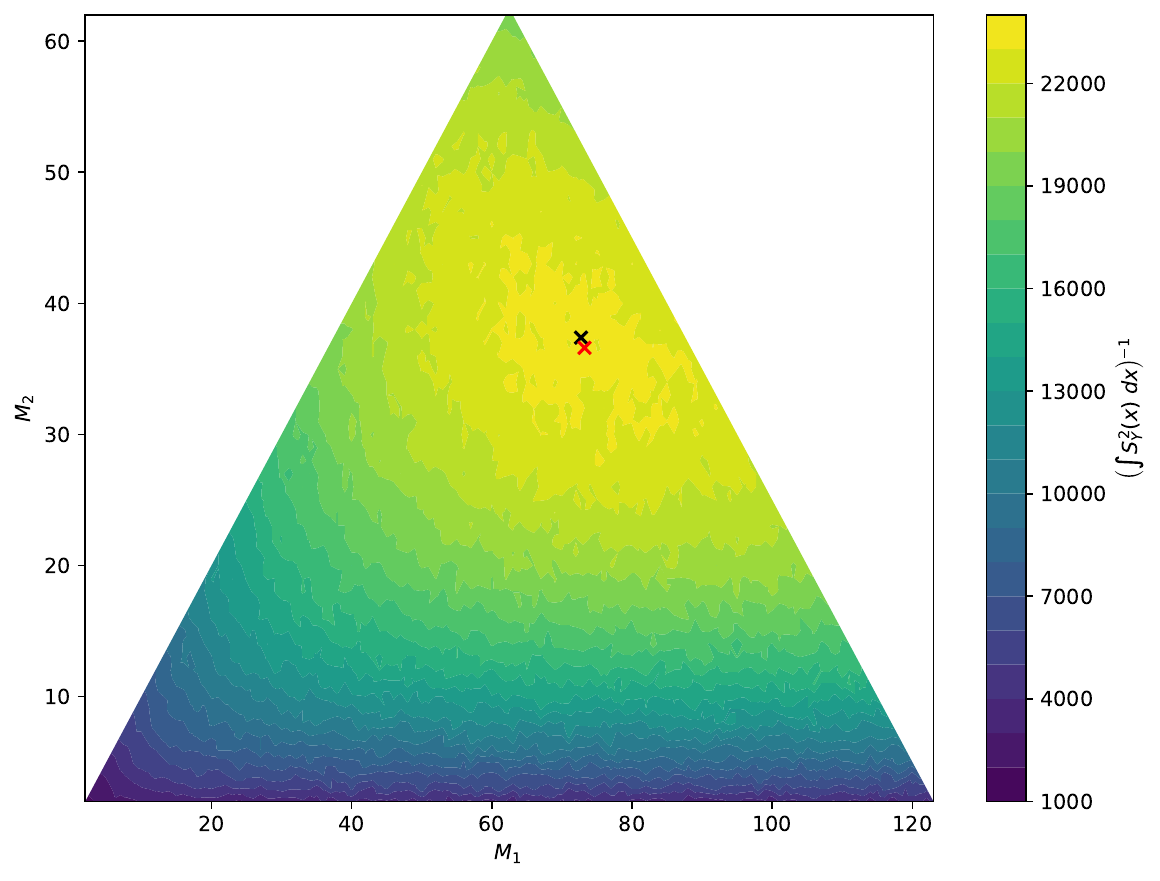}
    \caption{$1/\|S_Y^2(\mathscr{F})\|_{L^1}$ as a function of $(M_1,M_2)$.}
    \label{fig:inverse_f_var_exp_estimator_fixed_cost}
\end{subfigure}
\hfill
\begin{subfigure}{0.43\textwidth}
    \includegraphics[width=\textwidth]{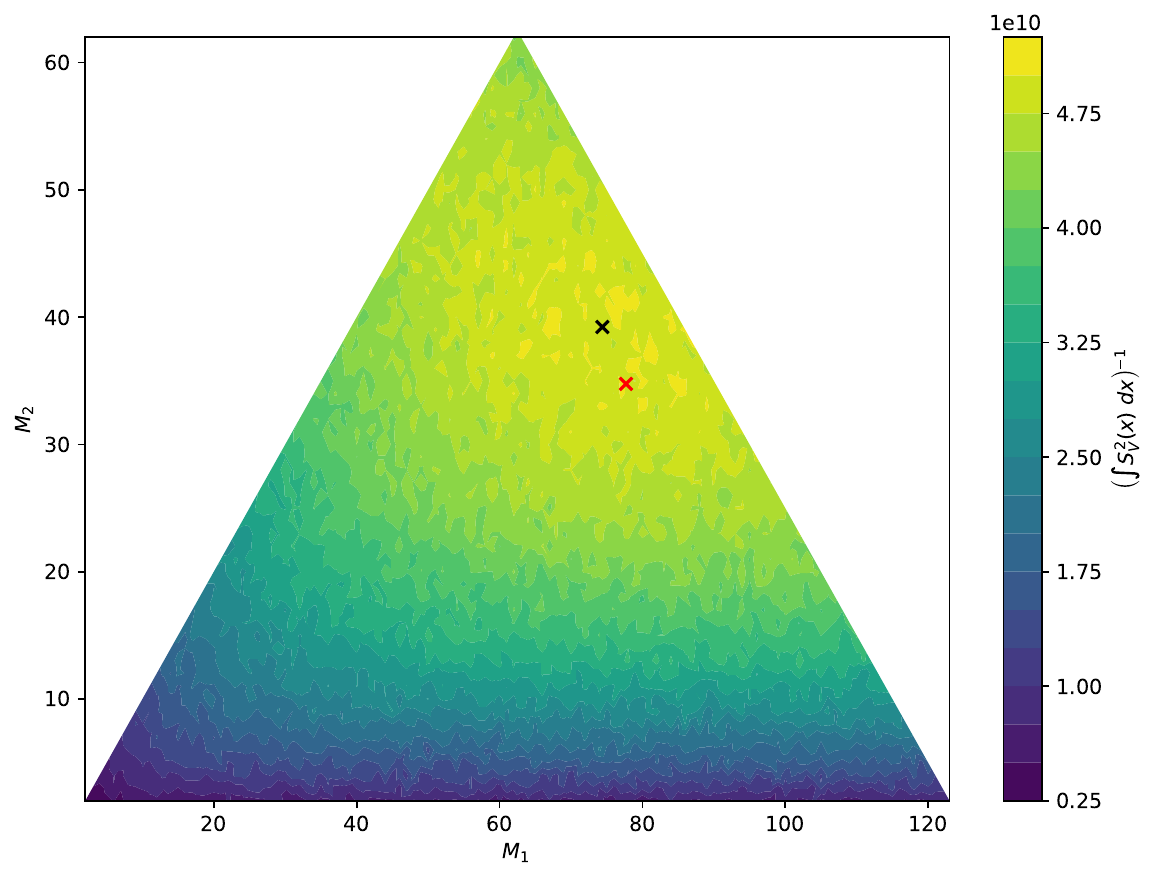}
    \caption{$1/\|S_V^2(\mathscr{U})\|_{L^1}$ as a function of $(M_1,M_2)$.}
    \label{fig:inverse_u_var_var_estimator_fixed_cost}
\end{subfigure}
\hfill
\begin{subfigure}{0.43\textwidth}
    \includegraphics[width=\textwidth]{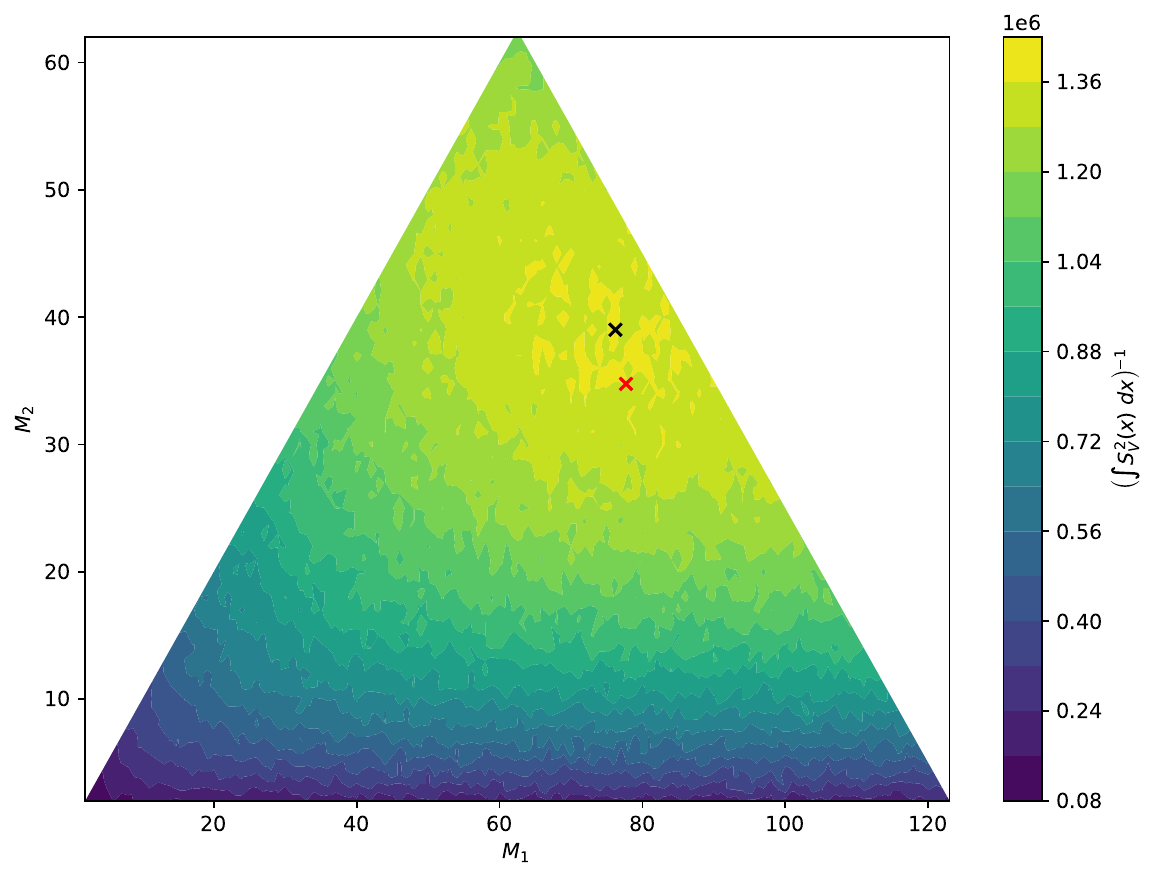}
    \caption{$1/\|S_V^2(\mathscr{F})\|_{L^1}$ as a function of $(M_1,M_2)$.}
    \label{fig:inverse_f_var_var_estimator_fixed_cost}
\end{subfigure}
\caption{Fixed-cost allocation test for the inverse benchmark with
$\vec T=(4,8,16)$ and $c_{\mathrm{cpl}}=1000$. Each panel shows the reciprocal
variance surface of an MLMC estimator as a function of the integer allocations
$(M_1,M_2)$, with $M_0$ determined by the coupled cost constraint. The black
marker indicates an empirical minimiser over feasible integer allocations and
the red marker indicates the continuous optimum given by the allocation
formulas in \S~\ref{subsubsec:mlmc:allocation_mean} (top row) and
\S~\ref{subsubsec:mlmc:allocation_variance} (bottom row, using the
zero-excess-kurtosis closure for the simplified allocation).}
\label{fig:inverse_var_estimator_fixed_cost}
\end{figure}

\section{Code discussion}\label{sec:code_discussion}

\subsection{Reproducibility}\label{subsec:code:reproducibility}
All experiments are reproducible up to (i) the inherent stochasticity of
dropout-based inference and (ii) non-determinism arising from GPU arithmetic and
library implementations. In our code, pseudo-random number generators are
seeded at the start of training using values specified in \texttt{CONFIG.py},
and the same seeding mechanism is available for evaluation to make figures
repeatable when desired. Each training run creates a timestamped output
directory in which the exact configuration file is copied and the trained model
weights are saved, so that runs can be reloaded without ambiguity. While these
steps make results repeatable on the same software stack and hardware, bit-for-bit
determinism is not guaranteed across different GPU architectures or different
PyTorch/CUDA versions.

The code repository for this paper is available at
\url{https://github.com/aaronpim/Multilevel_Monte_Carlo_Dropout_PINNs.git}.

\subsection{Computational cost and implementation notes}\label{subsec:code:cost_notes}
Throughout, the operative notion of computational cost is the number of
stochastic forward passes of the trained network, consistent with the cost
models in Subsection~\ref{subsec:mlmc:cost_model}. In particular, the coupled
sampling strategy reuses dropout masks across fidelities and therefore reduces
the number of additional network evaluations required to form multilevel
increments compared with an uncoupled baseline.

In practice, the additional arithmetic required to aggregate samples into
$Y(x,T)$, $V(x,T)$, and the multilevel estimators is negligible compared with
the network evaluation cost for the architectures considered here. For this
reason we do not attempt a hardware-specific FLOP accounting and instead report
budgets and allocations in terms of forward-pass counts.

\section{Conclusions}
\label{sec:conclusion}

We have developed a multilevel Monte Carlo framework for MC-dropout in
which epistemic uncertainty is induced by sampling dropout masks and
fidelity is controlled by the number of stochastic forward
passes. Coupling coarse and fine inner estimators through mask reuse,
the resulting multilevel estimators for predictive means and variances
remain unbiased for the corresponding dropout-induced quantities while
substantially reducing sampling variance at a given evaluation
budget. We derived explicit variance–cost relations, effective coupled
cost models and principled sample allocation rules, and we verified
these predictions on forward and inverse PINNs--Uzawa benchmarks,
where the empirical variance rates and fixed-cost allocation
experiments align with the theory up to the expected effects of
integer feasibility and moment-closure assumptions.

\appendix

\section{Supplementary derivations}\label{app:supp}
\subsection{Proofs of expectation and variance identities}
\begin{proof}[Proof of Lemma~\ref{lem:mlmc:single_fidelity_moments}]
The identities for $\mathbb{E}[Y]$ and $\mathrm{Var}[Y]$ are standard.
The unbiasedness of the sample variance implies $\mathbb{E}[V]=\mu_2$,
and the expression for $\mathrm{Var}[V]$ is a classical formula for the
sampling variance of the unbiased variance estimator, stated for
example in \cite[equation 10]{benhamou2018few}.
\end{proof}

\begin{proof}[Proof of Lemma~\ref{lem:mlmc:mean_moments}]
By linearity of expectation and \eqref{eq:mlmc:mean_telescoping},
\[
\mathbb{E}\big[\mathcal{Y}(x;\vec M,\vec T)\big]
=
\mathbb{E}\big[Y(x;T_0)\big]
+
\sum_{\ell=1}^{L}\mathbb{E}\big[\Delta Y(x;T_\ell)\big]
=
\mathbb{E}\big[Y(x;T_L)\big]
=
\mu(x),
\]
where we used Lemma \ref{lem:mlmc:single_fidelity_moments}. Consider the the expectation of the empirical sampling
variance of the expectation estimator, the linearlity of expectation implies that
\begin{equation}
    \mathbb{E}[S^2_Y(x;\vec M,\vec T)]
    =
    \frac{1}{M_0} \mathbb{E}[\mathcal{S}^2_{Y}(x;M_0,T_0)]
    +
    \sum_{\ell = 1}^{L}\frac{1}{M_\ell} \mathbb{E}[\mathcal{S}^2_{\Delta Y}(x;M_\ell,T_\ell)],
\end{equation}
The definitions of $\mathcal{S}^2_{Y}$ and $\mathcal{S}^2_{\Delta Y}$ are that they are the unbiased variance estimators of $Y$ and $\Delta Y$ respectively. Under the assumption that the outer samples are i.i.d., we have that expectation of the unbiased variance estimator is the variance, $\mathbb{E}[\mathcal{S}^2_{Z}] = \mathrm{Var}[Z]$, and therefore that
\begin{equation}
    \mathbb{E}[S^2_Y(x;\vec M,\vec T)]
    =
    \frac{1}{M_0}\mathrm{Var}\big[Y(x;T_0)\big]
    +
    \sum_{\ell=1}^L \frac{1}{M_\ell}\mathrm{Var}\big[\Delta Y(x;T_\ell)\big].
\end{equation}
Under the coupling \eqref{eq:mlmc:mean_recurrence},
$\Delta Y(x;T_\ell)=\frac{1}{T_\ell}\sum_{t=T_{\ell-1}+1}^{T_\ell}\mathscr{D}(x;\theta_t)
-\frac{T_\ell-T_{\ell-1}}{T_\ell}Y(x;T_{\ell-1})$, and a direct
computation using independence yields
$\mathrm{Var}[\Delta Y(x;T_\ell)]=\mu_2(x)\left(\frac{1}{T_{\ell-1}}-\frac{1}{T_\ell}\right)$,
while Lemma \ref{lem:mlmc:single_fidelity_moments} gives
$\mathrm{Var}[Y(x;T_0)]=\mu_2(x)/T_0$.
\end{proof}

\begin{proof}[Proof of Lemma~\ref{lem:cov_overlap_var}]
Let $D_1,\dots,D_{T_{\ell}}$ be i.i.d, 
and let $\mu_k:=\mathbb{E}\bigl[(X-\mu)^k\bigr]$ denote
the $k$th central moment.
Introduce the mean of the \emph{new} block of size $T_{\ell}-T_{\ell-1}$,
\begin{equation}\label{eq:cov_overlap:def_Yd}
    \overline{D}_{T_{\ell-1}:T_{\ell}}:=\frac{1}{T_{\ell}-T_{\ell-1}}\sum_{j=T_{\ell-1}+1}^{T_{\ell}} D_j.
\end{equation}
A standard update identity due to \cite{ChanGolubLeveque1979} states that
\begin{equation}\label{eq:cov_overlap:chan_update}
\begin{aligned}
    (T_{\ell}-1) V(T_{\ell})
    =&
    (T_{\ell-1}-1) V(T_{\ell-1})
    +\sum_{i=T_{\ell-1}+1}^{T_{\ell}}\bigl(D_i-\overline{D}_{T_{\ell-1}:T_{\ell}}\bigr)^2
    \\ &+\frac{(T_{\ell}-T_{\ell-1})T_{\ell-1}}{T_{\ell}}\bigl(Y(T_{\ell-1})-\overline{D}_{T_{\ell-1}:T_{\ell}}\bigr)^2.
\end{aligned}
\end{equation}
The middle term on the right-hand side depends only on the new samples
$D_{T_{\ell-1}+1},\dots,D_{T_{\ell}}$ and is independent of $V(T_{\ell-1})$, hence its covariance
with $V(T_{\ell-1})$ is zero. Taking covariance with $V(T_{\ell-1})$ in
\eqref{eq:cov_overlap:chan_update} and using bilinearity gives
\begin{equation}\label{eq:cov_overlap:cov_reduce}
\begin{aligned}
    (T_{\ell}-1) \mathrm{Cov}\bigl[V(T_{\ell}),V(T_{\ell-1})\bigr]
    =&
    (T_{\ell-1}-1) \mathrm{Var}\bigl[V(T_{\ell-1})\bigr]
    \\ &+\frac{(T_{\ell}-T_{\ell-1})T_{\ell-1}}{T_{\ell}} \mathrm{Cov}\Bigl(\bigl(Y(T_{\ell-1})-\overline{D}_{T_{\ell-1}:T_{\ell}}\bigr)^2,\ V(T_{\ell-1})\Bigr).
\end{aligned}
\end{equation}
Since $\overline{D}_{T_{\ell-1}:T_{\ell}}$ is independent of $\{D_i\}_{i=1}^{T_{\ell-1}}$ and
$\mathbb{E}[\overline{D}_{T_{\ell-1}:T_{\ell}}]=\mu$, a short expansion plus independence, and expressing the covariance as the product of their deviations yields
\begin{equation}\label{eq:cov_overlap:drop_new_block}
    \begin{aligned}
    \mathrm{Cov}\Bigl(\bigl(Y(T_{\ell-1})-\overline{D}_{T_{\ell-1}:T_{\ell}}\bigr)^2,\ V(T_{\ell-1})\Bigr)
    =&
    \mathbb{E}\Bigl[\bigl(Y(T_{\ell-1})-\mu\bigr)^2\bigl(V(T_{\ell-1})-\mu_2\bigr)\Bigr].    
    \end{aligned}
\end{equation}
It remains to compute the expectation on the right-hand side.
Write $\xi_i:=D_i-\mu$, then $Y(T)-\mu=\frac{1}{T}\sum_{i=1}^{T}\xi_i$ and
\begin{equation}\label{eq:cov_overlap:var_decomp}
    (T_{\ell-1}-1) V(T_{\ell-1})=\sum_{i=1}^{T_{\ell-1}}\xi_i^2-T_{\ell-1}\bigl(Y(T_{\ell-1})-\mu\bigr)^2.
\end{equation}
Using \eqref{eq:cov_overlap:var_decomp},
\begin{equation}\label{eq:cov_overlap:EY2v}
\begin{aligned}
    \mathbb{E}\bigl[(Y(T_{\ell-1})-\mu)^2 V(T_{\ell-1})\bigr]
    &=
    \frac{1}{T_{\ell-1}-1}\mathbb{E}\Bigl[(Y(T_{\ell-1})-\mu)^2\sum_{i=1}^{T_{\ell-1}}\xi_i^2\Bigr]
    -\frac{T_{\ell-1}}{T_{\ell-1}-1}\mathbb{E}\bigl[(Y(T_{\ell-1})-\mu)^4\bigr].
\end{aligned}
\end{equation}
A direct index-counting argument gives
\begin{equation}\label{eq:cov_overlap:moment_id_1}
\mathbb{E}\Bigl[(Y(T_{\ell-1})-\mu)^2\sum_{i=1}^{T_{\ell-1}}\xi_i^2\Bigr]
=
\frac{1}{T_{\ell-1}^2} \mathbb{E}\Bigl[\Bigl(\sum_{i=1}^{T_{\ell-1}}\xi_i\Bigr)^2\Bigl(\sum_{k=1}^{T_{\ell-1}}\xi_k^2\Bigr)\Bigr]
=
\frac{\mu_4}{T_{\ell-1}}+\frac{T_{\ell-1}-1}{T_{\ell-1}}\mu_2^2,
\end{equation}
and similarly
\begin{equation}\label{eq:cov_overlap:moment_id_2}
    \mathbb{E}\bigl[(Y(T_{\ell-1})-\mu)^4\bigr]
    =
    \frac{\mu_4}{T_{\ell-1}^3}+\frac{3(T_{\ell-1}-1)}{T_{\ell-1}^3}\mu_2^2.
\end{equation}
Substituting \eqref{eq:cov_overlap:moment_id_1}--\eqref{eq:cov_overlap:moment_id_2}
into \eqref{eq:cov_overlap:EY2v} yields
\begin{equation}\label{eq:cov_overlap:EY2v_simplified}
    \mathbb{E}\bigl[(Y(T_{\ell-1})-\mu)^2 V(T_{\ell-1})\bigr]
    =
    \frac{\mu_4}{T_{\ell-1}^2}+\frac{T_{\ell-1}-3}{T_{\ell-1}^2}\mu_2^2.
\end{equation}
Since $\mathbb{E}\bigl[(Y(T_{\ell-1})-\mu)^2\bigr]=\mu_2/T_{\ell-1}$, we obtain
\begin{equation}\label{eq:cov_overlap:key_cov_term}
\mathbb{E}\Bigl[\bigl(Y(T_{\ell-1})-\mu\bigr)^2\bigl(V(T_{\ell-1})-\mu_2\bigr)\Bigr]
=
\Bigl(\frac{\mu_4}{T_{\ell-1}^2}+\frac{T_{\ell-1}-3}{T_{\ell-1}^2}\mu_2^2\Bigr)-\frac{\mu_2}{T_{\ell-1}}\mu_2
=
\frac{1}{T_{\ell-1}^2}\bigl(\mu_4-3\mu_2^2\bigr).
\end{equation}
Combining \eqref{eq:cov_overlap:cov_reduce}, \eqref{eq:cov_overlap:drop_new_block}
and \eqref{eq:cov_overlap:key_cov_term} gives
\[
    (T_{\ell}-1) \mathrm{Cov}\bigl[V(T_{\ell}),V(T_{\ell-1})\bigr]
    =
    (T_{\ell-1}-1) \mathrm{Var}\bigl[V(T_{\ell-1})\bigr]
    +\frac{T_{\ell}-T_{\ell-1}}{T_{\ell-1}T_{\ell}}\bigl(\mu_4-3\mu_2^2\bigr).
\]
Dividing by $T_{\ell}-1$ yields the first line of \eqref{eq:cov_overlap:cov_formula}.
The $\mathrm{Var}\bigl[V(T_{\ell-1})\bigr]$ term may be directly evaluated using the closed form for
$\mathrm{Var}[V(T_{\ell-1})]$, which is
$\frac{1}{T_{\ell-1}}\bigl(\mu_4-\frac{T_{\ell-1}-3}{T_{\ell-1}-1}\mu_2^2\bigr)$.
\end{proof}

\subsection{Proofs of optimal allocation results}\label{app:allocation_proofs}

\begin{proof}[Proof of Lemma~\ref{lem:mlmc:allocation_mean}]
Fix $x$ and $\vec T=(T_0,\dots,T_L)$ with $1\le T_0<\cdots<T_L$. Recall the
additive coupled-cost representation
\begin{equation}\label{eq:app:alloc:cost}
    c = T_0 M_0 + \sum_{\ell=1}^{L}(T_\ell-T_{\ell-1})M_\ell
    = \sum_{\ell=0}^{L} a_\ell M_\ell,
    \qquad
    a_0:=T_0,\quad a_\ell:=T_\ell-T_{\ell-1}\ (\ell\ge 1),
\end{equation}
and the variance decomposition
\begin{equation}\label{eq:app:alloc:mean_obj}
    \mathbb{E}[S^2_Y(x;\vec M,\vec T)]
    =
    \frac{v_0(x)}{M_0} + \sum_{\ell=1}^{L}\frac{v_\ell(x)}{M_\ell},
\end{equation}
where
\begin{equation}\label{eq:app:alloc:mean_level_vars}
    v_0(x)=\frac{\mu_2(x)}{T_0},
    \qquad
    v_\ell(x)=\mu_2(x)\left(\frac{1}{T_{\ell-1}}-\frac{1}{T_\ell}\right)
    =\mu_2(x) \frac{T_\ell-T_{\ell-1}}{T_\ell T_{\ell-1}},
    \quad \ell\ge 1.
\end{equation}
On the open set $\{M_\ell>0\}_{\ell=0}^L$, the objective
$\sum_{\ell=0}^L v_\ell(x)/M_\ell$ is strictly convex and the constraint
$\sum_{\ell=0}^L a_\ell M_\ell=c$ is affine, so any stationary point is the
unique global minimiser.

Introduce the Lagrangian
\begin{equation}\label{eq:app:alloc:lagrangian_mean}
    \mathcal{L}(\vec M,\lambda)
    :=
    \sum_{\ell=0}^{L}\frac{v_\ell(x)}{M_\ell}
    +\lambda\left(\sum_{\ell=0}^{L}a_\ell M_\ell-c\right).
\end{equation}
The first-order optimality conditions are, for each $\ell$,
\begin{equation}\label{eq:app:alloc:foc_mean}
    \frac{\partial\mathcal{L}}{\partial M_\ell}
    =
    -\frac{v_\ell(x)}{M_\ell^{2}}+\lambda a_\ell
    =0,
    \qquad\Longrightarrow\qquad
    M_\ell=\sqrt{\frac{v_\ell(x)}{\lambda a_\ell}}.
\end{equation}
Using \eqref{eq:app:alloc:mean_level_vars}, we obtain
\begin{equation}\label{eq:app:alloc:mean_ratios}
    \frac{v_0(x)}{a_0}=\frac{\mu_2(x)}{T_0^2},
    \qquad
    \frac{v_\ell(x)}{a_\ell}=\frac{\mu_2(x)}{T_\ell T_{\ell-1}}\quad(\ell\ge 1),
\end{equation}
hence \eqref{eq:app:alloc:foc_mean} implies
\begin{equation}\label{eq:app:alloc:mean_M_form}
    M_0=\frac{\sqrt{\mu_2(x)}}{\sqrt{\lambda}}\frac{1}{T_0},
    \qquad
    M_\ell=\frac{\sqrt{\mu_2(x)}}{\sqrt{\lambda}}\frac{1}{\sqrt{T_\ell T_{\ell-1}}},
    \quad \ell=1,\dots,L.
\end{equation}
Enforcing the budget constraint \eqref{eq:app:alloc:cost} gives
\begin{equation}\label{eq:app:alloc:mean_lambda}
\begin{aligned}
    c
    &=
    T_0 M_0+\sum_{\ell=1}^{L}(T_\ell-T_{\ell-1})M_\ell
    \\
    &=
    \frac{\sqrt{\mu_2(x)}}{\sqrt{\lambda}}
    \left(
        1+\sum_{\ell=1}^{L}\frac{T_\ell-T_{\ell-1}}{\sqrt{T_\ell T_{\ell-1}}}
    \right),
\end{aligned}
\end{equation}
so
\begin{equation}\label{eq:app:alloc:mean_prefactor}
    \frac{\sqrt{\mu_2(x)}}{\sqrt{\lambda}}
    =
    c\left(
        1+\sum_{\ell=1}^{L}\frac{T_\ell-T_{\ell-1}}{\sqrt{T_\ell T_{\ell-1}}}
    \right)^{-1}.
\end{equation}
Substituting \eqref{eq:app:alloc:mean_prefactor} into \eqref{eq:app:alloc:mean_M_form}
yields \eqref{eq:mlmc:allocation_mean_solution}. The factor $\mu_2(x)$ cancels,
so the optimal allocation depends only on $\vec T$ and $c$.
\end{proof}
\begin{proof}[Proof of Lemma~\ref{lem:mlmc:allocation_variance_general}]
We seek to minimise the expectation of the empirical sampling variance of the variance estimator
\begin{equation}\label{eq:app:alloc:var_obj}
    \mathbb{E}[S^2_V(x;\vec M,\vec T)]
    =
    \frac{w_0(x)}{M_0}+\sum_{\ell=1}^{L}\frac{w_\ell(x)}{M_\ell}
\end{equation}
subject to the same coupled-cost constraint
\begin{equation}\label{eq:app:alloc:var_cost}
    c=\sum_{\ell=0}^{L}a_\ell M_\ell,
    \qquad
    a_0:=T_0,\quad a_\ell:=T_\ell-T_{\ell-1}\ (\ell\ge 1).
\end{equation}
As in the proof of Lemma~\ref{lem:mlmc:allocation_mean}, strict convexity of
$\sum w_\ell/M_\ell$ for $M_\ell>0$ implies that the KKT stationary point is the
unique global minimiser. Introducing the Lagrangian
\[
    \mathcal{L}(\vec M,\lambda)
    :=
    \sum_{\ell=0}^{L}\frac{w_\ell(x)}{M_\ell}
    +\lambda\left(\sum_{\ell=0}^{L}a_\ell M_\ell-c\right),
\]
the first-order conditions give
\begin{equation}\label{eq:app:alloc:foc_var}
    -\frac{w_\ell(x)}{(M_\ell^{\star})^2}+\lambda a_\ell=0,
    \qquad\Longrightarrow\qquad
    M_\ell^{\star}=\sqrt{\frac{w_\ell(x)}{\lambda a_\ell}}.
\end{equation}
Enforcing the budget constraint \eqref{eq:app:alloc:var_cost} gives
\begin{equation}
    c=\sum_{k=0}^{L} \sqrt{\frac{w_k(x) a_k}{\lambda}},
    \qquad
     \sqrt{\lambda}=\dfrac{1}{c}\sum_{k=0}^{L} \sqrt{w_k(x) a_k}, 
     \qquad 
     M_\ell^{\star}
    =
    c \frac{\sqrt{w_\ell(x)/a_\ell}}{\sum_{k=0}^{L}\sqrt{w_k(x)a_k}}.
\end{equation}
\end{proof}

\printbibliography

@phdthesis{gal2016uncertaintyPHD,
    author = {Gal, Yarin},
    title = {Uncertainty in deep learning},
    school = {University of Cambridge},
    year = {2016}
}

@inproceedings{guo2017calibration,
  title={On calibration of modern neural networks},
  author={Guo, Chuan and Pleiss, Geoff and Sun, Yu and Weinberger, Kilian Q},
  booktitle={International conference on machine learning},
  pages={1321--1330},
  year={2017},
  organization={PMLR}
}

@article{benhamou2018few,
  title={A few properties of sample variance},
  author={Benhamou, Eric},
  journal={arXiv preprint arXiv:1809.03774},
  year={2018}
}

@inproceedings{cox2024bayesian,
	author = {Cox, Alexander MG and Hattam, Laura and Kyprianou, Andreas E and Pryer, Tristan},
	booktitle = {Proceedings A},
	date-added = {2025-09-13 15:03:03 +0100},
	date-modified = {2025-09-13 15:03:03 +0100},
	number = {2301},
	organization = {The Royal Society},
	pages = {20230836},
	title = {A Bayesian inverse approach to proton therapy dose delivery verification},
	volume = {480},
	year = {2024}}

@article{pim2025surrogate,
	author = {Pim, Aaron and Pryer, Tristan},
	date-added = {2025-12-14 12:41:13 +0000},
	date-modified = {2025-12-14 12:41:13 +0000},
	journal = {arXiv preprint arXiv:2509.18155},
	title = {Surrogate Modelling of Proton Dose with Monte Carlo Dropout Uncertainty Quantification},
	year = {2025}}

@article{yang2021b,
	author = {Yang, Liu and Meng, Xuhui and Karniadakis, George Em},
	date-added = {2025-10-20 07:26:15 +0100},
	date-modified = {2025-10-20 07:26:15 +0100},
	journal = {Journal of Computational Physics},
	pages = {109913},
	publisher = {Elsevier},
	title = {B-PINNs: Bayesian physics-informed neural networks for forward and inverse PDE problems with noisy data},
	volume = {425},
	year = {2021}}

@article{srivastava2014dropout,
	author = {Srivastava, Nitish and Hinton, Geoffrey and Krizhevsky, Alex and Sutskever, Ilya and Salakhutdinov, Ruslan},
	date-added = {2025-10-20 07:26:02 +0100},
	date-modified = {2025-10-20 07:26:02 +0100},
	journal = {The journal of machine learning research},
	number = {1},
	pages = {1929--1958},
	publisher = {JMLR. org},
	title = {Dropout: a simple way to prevent neural networks from overfitting},
	volume = {15},
	year = {2014}}

@article{raissi2019physics,
	author = {Raissi, Maziar and Perdikaris, Paris and Karniadakis, George E},
	date-added = {2025-10-20 07:25:52 +0100},
	date-modified = {2025-10-20 07:25:52 +0100},
	journal = {Journal of Computational physics},
	pages = {686--707},
	publisher = {Elsevier},
	title = {Physics-informed neural networks: A deep learning framework for solving forward and inverse problems involving nonlinear partial differential equations},
	volume = {378},
	year = {2019}}

@article{meng2021multi,
	author = {Meng, Xuhui and Babaee, Hessam and Karniadakis, George Em},
	date-added = {2025-10-20 07:25:39 +0100},
	date-modified = {2025-10-20 07:25:39 +0100},
	journal = {Journal of Computational Physics},
	pages = {110361},
	publisher = {Elsevier},
	title = {Multi-fidelity Bayesian neural networks: Algorithms and applications},
	volume = {438},
	year = {2021}}

@article{lakshminarayanan2017simple,
	author = {Lakshminarayanan, Balaji and Pritzel, Alexander and Blundell, Charles},
	date-added = {2025-10-20 07:24:48 +0100},
	date-modified = {2025-10-20 07:24:48 +0100},
	journal = {Advances in neural information processing systems},
	title = {Simple and scalable predictive uncertainty estimation using deep ensembles},
	volume = {30},
	year = {2017}}

@article{kendall2017uncertainties,
	author = {Kendall, Alex and Gal, Yarin},
	date-added = {2025-10-20 07:24:38 +0100},
	date-modified = {2025-10-20 07:24:38 +0100},
	journal = {Advances in neural information processing systems},
	title = {What uncertainties do we need in bayesian deep learning for computer vision?},
	volume = {30},
	year = {2017}}

@techreport{ChanGolubLeveque1979,
	author = {Chan, Tony F. and Golub, Gene H. and LeVeque, Randall J.},
	institution = {Stanford University, Department of Computer Science},
	month = {11},
	number = {STAN-CS-79-773},
	title = {Updating formulae and a pairwise algorithm for computing sample variances},
	year = {1979}}

@article{makridakis2024deep,
	author = {Makridakis, Charalambos G and Pim, Aaron and Pryer, Tristan},
	journal = {arXiv preprint arXiv:2411.08702},
	title = {A Deep Uzawa-Lagrange Multiplier Approach for Boundary Conditions in PINNs and Deep Ritz Methods},
	year = {2024}}

@article{hullermeier2021aleatoric,
	author = {H{\"u}llermeier, Eyke and Waegeman, Willem},
	date-added = {2025-10-20 07:24:28 +0100},
	date-modified = {2025-10-20 07:24:28 +0100},
	journal = {Machine learning},
	number = {3},
	pages = {457--506},
	publisher = {Springer},
	title = {Aleatoric and epistemic uncertainty in machine learning: An introduction to concepts and methods},
	volume = {110},
	year = {2021}}

@article{hikida2025multilevel,
	author = {Hikida, Yuga and Bharti, Ayush and Jeffrey, Niall and Briol, Fran{\c{c}}ois-Xavier},
	date-added = {2025-10-20 07:24:16 +0100},
	date-modified = {2025-10-20 07:24:16 +0100},
	journal = {arXiv preprint arXiv:2506.06087},
	title = {Multilevel neural simulation-based inference},
	year = {2025}}

@inproceedings{heinrich2001multilevel,
	author = {Heinrich, Stefan},
	booktitle = {International Conference on Large-Scale Scientific Computing},
	date-added = {2025-10-20 07:24:02 +0100},
	date-modified = {2025-10-20 07:24:02 +0100},
	organization = {Springer},
	pages = {58--67},
	title = {Multilevel monte carlo methods},
	year = {2001}}

@article{giles2015multilevel,
	author = {Giles, Michael B},
	date-added = {2025-10-20 07:23:50 +0100},
	date-modified = {2025-10-20 07:23:50 +0100},
	journal = {Acta numerica},
	pages = {259--328},
	publisher = {Cambridge University Press},
	title = {Multilevel monte carlo methods},
	volume = {24},
	year = {2015}}

@article{giles2008multilevel,
	author = {Giles, Michael B},
	date-added = {2025-10-20 07:23:39 +0100},
	date-modified = {2025-10-20 07:23:39 +0100},
	journal = {Operations research},
	number = {3},
	pages = {607--617},
	publisher = {INFORMS},
	title = {Multilevel monte carlo path simulation},
	volume = {56},
	year = {2008}}

@article{gerstner2021multilevel,
	author = {Gerstner, Thomas and Harrach, Bastian and Roth, Daniel and Simon, Martin},
	date-added = {2025-10-20 07:23:18 +0100},
	date-modified = {2025-10-20 07:23:18 +0100},
	journal = {arXiv preprint arXiv:2102.08734},
	title = {Multilevel Monte Carlo learning},
	year = {2021}}

@article{gawlikowski2023survey,
	author = {Gawlikowski, Jakob and Tassi, Cedrique Rovile Njieutcheu and Ali, Mohsin and Lee, Jongseok and Humt, Matthias and Feng, Jianxiang and Kruspe, Anna and Triebel, Rudolph and Jung, Peter and Roscher, Ribana and others},
	date-added = {2025-10-20 07:22:56 +0100},
	date-modified = {2025-10-20 07:22:56 +0100},
	journal = {Artificial Intelligence Review},
	number = {Suppl 1},
	pages = {1513--1589},
	publisher = {Springer},
	title = {A survey of uncertainty in deep neural networks},
	volume = {56},
	year = {2023}}

@inproceedings{gal2016dropout,
	author = {Gal, Yarin and Ghahramani, Zoubin},
	booktitle = {international conference on machine learning},
	date-added = {2025-10-20 07:22:45 +0100},
	date-modified = {2025-10-20 07:22:45 +0100},
	organization = {PMLR},
	pages = {1050--1059},
	title = {Dropout as a bayesian approximation: Representing model uncertainty in deep learning},
	year = {2016}}

@article{fujisawa2021multilevel,
	author = {Fujisawa, Masahiro and Sato, Issei},
	date-added = {2025-10-20 07:22:34 +0100},
	date-modified = {2025-10-20 07:22:34 +0100},
	journal = {Journal of Machine Learning Research},
	number = {278},
	pages = {1--44},
	title = {Multilevel monte carlo variational inference},
	volume = {22},
	year = {2021}}

@article{cliffe2011multilevel,
	author = {Cliffe, K Andrew and Giles, Mike B and Scheichl, Robert and Teckentrup, Aretha L},
	date-added = {2025-10-20 07:22:25 +0100},
	date-modified = {2025-10-20 07:22:25 +0100},
	journal = {Computing and Visualization in Science},
	number = {1},
	pages = {3},
	publisher = {Springer},
	title = {Multilevel Monte Carlo methods and applications to elliptic PDEs with random coefficients},
	volume = {14},
	year = {2011}}

@article{blanchet2023dropout,
	author = {Blanchet, Jos{\'e} and Kang, Yang and Olea, Jos{\'e} Luis Montiel and Nguyen, Viet Anh and Zhang, Xuhui},
	date-added = {2025-10-20 07:22:12 +0100},
	date-modified = {2025-10-20 07:22:12 +0100},
	journal = {Journal of Machine Learning Research},
	number = {180},
	pages = {1--60},
	title = {Dropout training is distributionally robust optimal},
	volume = {24},
	year = {2023}}

\end{document}